\documentclass[3p]{elsarticle}

\usepackage{xspace}
\usepackage{amssymb}
\usepackage{amsmath}
\usepackage{amsthm}
\usepackage{multirow}

\usepackage{tikz}
\usetikzlibrary{calc}

\usepackage{hyperref}

\renewcommand{\epsilon}{\varepsilon}
\renewcommand{\phi}{\varphi}

\renewcommand{\vec}[1]{\mathbf{#1}}

\newcommand{\bigO}{\mathcal{O}}
\newcommand{\naturals}{\mathbb{N}}

\newcommand{\reals}{\mathbb{R}}
\newcommand{\posreals}{\reals_{\geq 0}}

\newcommand{\abs}[1]{\lvert #1 \rvert}
\newcommand{\norm}[2][]{\lVert #2 \rVert_{#1}}

\newcommand{\functionDot}{\,\cdot\,}

\newcommand{\upperboundFXBeta}{\mathit{UB}}

\newcommand{\interval}[2]{[#1,#2]}

\newcommand{\size}[1]{|#1|}

\newcommand{\setnocond}[1]{\{#1\}}
\newcommand{\setcond}[2]{\{\, #1 \,|\, #2 \,\}}

\newcommand{\mystackrelsingle}[2]{%
  \mathrel{\vbox{\offinterlineskip\ialign{%
    \hfil##\hfil\cr%
    $\scriptstyle#1$\cr%
    \noalign{\kern.3ex}%
    $#2$\cr%
}}}}

\newcommand{\AP}{\mathit{AP}}

\newcommand{\modelsymbol}[1]{\mathcal{#1}}
\newcommand{\mc}[1][D]{\modelsymbol{#1}}
\newcommand{\pmc}[1][D]{\mc[#1]_{\parametersSet}}

\newcommand{\mdp}[1][M]{\modelsymbol{#1}}
\newcommand{\pmdp}[1][M]{\mdp[#1]_{\parametersSet}}
\newcommand{\mdrm}[1][R]{\modelsymbol{#1}}
\newcommand{\pmdrm}[1][R]{\mdp[#1]_{\parametersSet}}

\newcommand{\applypolicy}[2][\policy]{#2^{#1}}

\newcommand{\pd}{\mu}
\newcommand{\Dist}[1]{\mathit{Dist}(#1)}
\newcommand{\Supp}[1]{\mathit{Supp}(#1)}

\newcommand{\mstates}{S}
\newcommand{\minit}[1][s]{\bar{#1}}
\newcommand{\mactions}{\mathit{Act}}
\newcommand{\mtransitions}{\mathbf{P}}
\newcommand{\mlabelling}{L}
\newcommand{\mreward}[1][r]{\mathfrak{#1}}

\newcommand{\mpath}{\sigma}
\newcommand{\mpaths}[1]{\mathit{Paths}(#1)}
\newcommand{\mpathsfin}[1]{\mathit{Paths}^{*}(#1)}
\newcommand{\last}[1]{\mathit{last}(#1)}
\newcommand{\mpathstate}[2][\mpath]{#1[#2]}
\newcommand{\mpathaction}[2][\mpath]{#1(#2)}

\newcommand{\policy}{\pi}
\newcommand{\policies}{\mathrm{Pol}}

\newcommand{\cylinder}[1]{\mathit{Cyl}(#1)}
\newcommand{\prob}{\mathit{Pr}}
\newcommand{\exprew}{\mathit{ExpRew}}
\DeclareMathOperator{\opt}{opt}

\newcommand{\parameter}[1][v]{#1}
\newcommand{\parameters}[1][\parameter]{\vec{#1}}
\newcommand{\parametersSet}{\mathrm{V}}
\newcommand{\parametersDomain}{\mathcal{V}}

\newcommand{\ratfunsSet}[1][\parametersSet]{\mathcal{F}_{#1}}
\newcommand{\polynomialsSet}[1][\parametersSet]{\mathcal{P}_{#1}}
\DeclareMathOperator{\range}{range}
\newcommand{\evaluation}{\nu}
\newcommand{\evaluate}[2][\evaluation]{#2\langle#1\rangle}

\newcommand{\ding}[1]{\fontfamily{pzd}\fontencoding{U}\fontseries{m}\fontshape{n}\selectfont\char#1}
\newcommand{\success}{\text{\ding{51}}}
\newcommand{\failure}{\text{\ding{55}}}

\newcommand{\lsf}{\phi}
\newcommand{\lpf}{\psi}
\newcommand{\lP}[2][\opt]{\mathtt{P}{#1}_{#2}}
\newcommand{\lER}[2][\opt]{\mathtt{R}{#1}_{#2}}
\newcommand{\lX}{\mathbf{X}}
\newcommand{\lU}{\mathbin{\mathbf{U}}}
\newcommand{\lBU}[1]{\mathbin{\mathbf{U}^{\leq #1}}}
\newcommand{\ltrue}{\mathtt{tt}}

\newcommand{\lF}{\mathbf{F}}
\newcommand{\lQ}[1][\opt]{\mathtt{Q}{#1}_{\mathord{=}?}}

\newcommand{\errorRate}{\epsilon}
\newcommand{\significanceLevel}{\eta}
\newcommand{\margin}{\lambda}

\newcommand{\nsamples}{\ell}
\newcommand{\nsims}{\mathfrak{N}}

\newcommand{\safetyLevel}{\zeta}
\newcommand{\threshold}{\tau}
\newcommand{\targetValue}{\beta}

\newcommand{\approxfun}[2][\margin]{\tilde{#2}_{#1}}
\newcommand{\ApproxFunOfProperty}[2][\margin]{\tilde{#2}_{\lsf, #1}} 
\newcommand{\ApproxFunOfDiff}[2][\margin]{\tilde{#2}_{\lsf, #1}}
\newcommand{\SampleDomain}[1]{\tilde{#1}} 

\newcommand{\smcEstimation}[1]{\hat{#1}}

\newcommand{\coefficients}[1][c]{\vec{#1}}

\newcommand{\lipcons}{L}
\newcommand{\hyperrectangle}{H}
\newcommand{\hypercube}{\Omega}
\newcommand{\Numdivide}{m}
\newcommand{\Numdimension}{n}
\newcommand{\lenside}{l}
\newcommand{\distance}{d}
\newcommand{\forsmall}{\epsilon}
\newcommand{\lPmc}[1]{\mathtt{P}_{#1}}

\newcommand{\pacpma}[1][]{\textsc{PacPMA}\ensuremath{^{#1}}\xspace}
\newcommand{\storm}{\textsc{Storm}\xspace}
\newcommand{\prism}{\textsc{PRISM}\xspace}

\newcommand{\matlab}{\textsc{MATLAB}\xspace}

\newcommand{\benchexec}{\textsc{BenchExec}\xspace}

\newdefinition{definition}{Definition}
\newtheorem{theorem}{Theorem}
\newtheorem{lemma}{Lemma}
\newtheorem{corollary}{Corollary}

\newtheorem{assumption}{Assumption}
\newtheorem{remark}{Remark}
\newtheorem{example}{Example}
\newenvironment{markedexample}{\begin{example}}{\qed\end{example}}

\iftrue
    \usepackage{todonotes}
	\newcommand{\at}[1]{\todo[inline,color=teal!10,caption={AT}]{\textbf{AT:} #1}}
	
	\newcommand{\czm}[1]{\todo[inline,color=orange!10,caption={CZm}]{\textbf{CZm:} #1}}
        \newcommand{\ly}[1]{\todo[inline,color=orange!10,caption={LY}]{\textbf{LY:} #1}}
        
	\newcommand{\zlj}[1]{\todo[inline,color=red!10,caption={ZLj}]{\textbf{ZLj:} #1}}
\else
    \newcommand{\at}[1]{}
    \newcommand{\czm}[1]{}
    \newcommand{\ly}[1]{}
    \newcommand{\zlj}[1]{}
    
\fi

\journal{Information and Computation}

\begin{document}

\begin{frontmatter}
    \title{PAC Approximation and DIRECT Optimization for Parametric Markov Models}
    
    \author[KLSS,ISCAS,UCAS]{Zhiming Chi}
    \author[IOECAS]{Ying Liu}
    \author[KLSS,ISCAS,IISG]{Andrea Turrini}
    \author[KLSS,ISCAS,UCAS,IISG]{Lijun Zhang\texorpdfstring{\corref{cor}}{}}
    \ead{zhanglj@ios.ac.cn}
    \author[KLSS,ISCAS,UCAS,IISG]{David N. Jansen}
    
    \cortext[cor]{Corresponding author}
    \affiliation[KLSS]{
        organization={Key Laboratory of System Software (Chinese Academy of Sciences)}, 
        city={Beijing},
        country={China}
    }
    \affiliation[ISCAS]{
        organization={Institute of Software, Chinese Academy of Sciences}, 
        city={Beijing},
        country={China}
    }
    \affiliation[UCAS]{
        organization={University of Chinese Academy of Sciences}, 
        city={Beijing},
        country={China}
    }
    \affiliation[IOECAS]{
        organization={Institute of Optics and Electronics, Chinese Academy of Sciences},
        city={Chengdu},
        country={China}
    }
    \affiliation[IISG]{
        organization={Institute of Intelligent Software Guangzhou}, 
        city={Guangzhou},
        country={China}
    }
    
    \begin{abstract}
        In this paper, we consider the parameter synthesis and optimization problem for parametric Markov decision processes (pMDPs), the extension of classical MDPs where exact probability values are replaced by parametric expressions.
        Computing the rational function $f_{\lsf}$ that maps parameter valuations to the satisfaction value of a PRCTL property $\lsf$ is a computationally expensive task, particularly for pMDPs where the optimal policy may vary across the parameter space.
        We adopt the \emph{scenario approach} to efficiently synthesize a probably approximately correct (PAC) approximation $\ApproxFunOfProperty{f}$ of $f_{\lsf}$: 
        by sampling parameter configurations and solving a linear program, we obtain a polynomial approximation whose error margin $\margin$ is guaranteed, with prescribed confidence, for all but an $\errorRate$-fraction of the parameter domain under the sampling distribution.
        We further show how this PAC framework can be combined with statistical model checking (SMC), enabling the analysis of black-box parametric models.
        Building on the PAC approximation, we integrate the DIRECT (DIviding RECTangles) algorithm for derivative-free global optimization over the parameter space.
        We establish conditional optimality-gap guarantees: under explicit Lipschitz and PAC-good-set assumptions, the difference between the true optimum $f_{\lsf}(\parameters^{*})$ and the value found by DIRECT is bounded by a partition-diameter term and, in the PAC case, an additional approximation-error term.
        An empirical evaluation on 2997 benchmarks focuses on the new DIRECT-based optimization component.
        The results show that DIRECT variants solve fewer instances than the scenario optimizer, but on their common successful instances they often return slightly better objective values and usually run faster, while remaining close to the scenario values within the PAC margin.
    \end{abstract}
\end{frontmatter}

\section{Introduction}
\label{sec:introduction}

Probabilistic model checking provides a rigorous framework for verifying quantitative properties of systems that exhibit both stochastic and nondeterministic behavior~\cite{DBLP:books/wi/Puterman94}.
Depending on the presence of nondeterminism, such systems are modeled as discrete-time Markov chains (MCs), where each state has a single probability distribution over successors, or as Markov decision processes (MDPs), where each state offers a nondeterministic choice among multiple transitions leading to potentially different distributions.
Properties of interest are typically specified in probabilistic logics such as PCTL~\cite{DBLP:journals/fac/HanssonJ94,DBLP:conf/fsttcs/BiancoA95} and its reward extension PRCTL~\cite{DBLP:conf/formats/AndovaHK03}, in forms such as $\lsf = \lP[\max]{\mathord{=}?}[\mathit{safe} \lU \success]$ (maximum probability of reaching $\success$ while staying $\mathit{safe}$) or $\lsf = \lER[\min]{\mathord{=}?}[\lF \mathit{goal}]$ (minimum expected reward to reach a $\mathit{goal}$).
Several mature tools support verification against these logics, including \prism~\cite{DBLP:conf/cav/KwiatkowskaNP11}, MRMC~\cite{DBLP:journals/pe/KatoenZHHJ11}, CADP~\cite{DBLP:journals/sttt/GaravelLMS13}, \textsc{IscasMc}~\cite{DBLP:conf/fm/HahnLSTZ14}, PROPhESY~\cite{DBLP:conf/cav/DehnertJJCVBKA15}, and \storm~\cite{DBLP:journals/sttt/HenselJKQV22}, which compute the numerical value assumed by such properties for the given input model.

A central challenge in applying probabilistic model checking to concrete systems is obtaining the precise probability values that govern the transitions of the model.
Since exact values are often unavailable, parametric Markov models have been introduced, where transition probabilities are expressed as functions of the parameters.
These models range from interval MDPs~\cite{DBLP:journals/ai/GivanLD00}, where probabilities lie in given intervals, to the more expressive parametric MDPs (pMDPs)~\cite{DBLP:conf/ictac/Daws04,DBLP:journals/sttt/HahnHZ11}, where transition probabilities are rational functions of the parameters; 
we consider the latter in this paper.

For a PRCTL formula $\lsf$ checked against a parametric Markov model, the satisfaction value is a function $f_{\lsf}$ over the parameters rather than a single number.
This function is typically a rational function, i.e., a fraction between two polynomials.
Computing $f_{\lsf}$ exactly is already challenging for parametric Markov chains (pMCs), as it involves manipulating polynomials with possibly very high degree~\cite{DBLP:journals/iandc/BaierHHJKK20,DBLP:journals/jcss/JungesK0W21}.
For pMDPs, the situation is further complicated by the fact that the policy optimizing the formula may change depending on the parameter values: 
$f_{\lsf}$ is then a piece-wise defined function, with each piece corresponding to a subregion of the parameter domain where a particular policy is optimal.
This necessitates computing multiple rational functions and identifying the subregions, making exact computation prohibitively expensive in practice.
For many downstream tasks, such as threshold checking over a parameter distribution or searching for high-value parameter valuations, a simpler approximation of $f_{\lsf}$ can be useful even when the exact expression is unavailable.
The price is that the guarantee is statistical rather than pointwise: our certificates hold with respect to the chosen sampling distribution over the parameter domain, not uniformly for every parameter valuation unless an additional verification step is performed.

\subsubsection*{Contribution of the paper.}
In our previous work~\cite{DBLP:conf/atva/LiuTHXZ23}, we exploited the \emph{scenario approach}~\cite{DBLP:journals/tac/CalafioreC06,DBLP:journals/arc/CampiGP09} to synthesize an approximation function $\ApproxFunOfProperty{f}$ with PAC (probably approximately correct) guarantees in the context of pMCs.
We then extended in~\cite{DBLP:conf/birthday/ChiLT0J25} this idea to pMDPs with rewards and showed how to combine the PAC approximation with statistical model checking.
The present journal version substantially extends these two works by adding a DIRECT-based global optimization, establishing formal optimality-gap bounds for exact and PAC objectives, presenting a discussion of Lipschitz bounds, providing the proofs for all the stated results, and conducting a larger experimental evaluation.
More precisely, in this paper we make the following contributions:
\begin{enumerate}
\item 
    We consider the PAC-based approximation framework applied to pMDPs decorated with rewards.
    To deal with the fact that for different choices of the parameters the policy optimizing the value of the PRCTL formula $\lsf = \lQ{}[\lpf]$ might change, we treat the MDP policy as in~\cite{DBLP:journals/sttt/BadingsCJJKT22}: each sampled instantiated MDP is evaluated under its own optimal policy, and the collected values are then used to synthesize one single, common approximation of the induced optimal-value function over the parameter domain.
    When enough samples of the parameters are considered, the synthesized function $\ApproxFunOfProperty{f}$ is guaranteed by construction to approximate the actual function $f_{\lsf}$ with $(\errorRate, \significanceLevel, \margin)$-PAC guarantee, that is, with confidence $1 - \significanceLevel$, the probability that the error due to the approximation is within the margin $\margin$ (obtained as a byproduct while synthesizing $\ApproxFunOfProperty{f}$) is at least $1 - \errorRate$.
    As in~\cite{DBLP:conf/atva/LiuTHXZ23}, the coefficients of $\ApproxFunOfProperty{f}$ and the margin $\margin$ are computed by solving a Linear Programming (LP) problem, whose number of constraints is linearly related to the number $\nsamples$ of samples, the number of parameters in the pMDP, and the degree of the polynomial template.
    We extend the set of analyzable properties to pMDP/pMDRM models and introduce new results specific for nondeterministic Markov models, comparing the effect of applying different policies $\policy$ on the induced rational functions $f_{\lsf}^{\policy}$ and the corresponding approximations~$\ApproxFunOfProperty{f}^{\policy}$.

\item 
    We show how scenario-based model checking for parametric Markov models can be combined with statistical model checking (SMC), so as to combine the advantages of both approaches while preserving statistical guarantees for the computed functions.

\item 
    As a contribution specific to this journal version, we integrate the DIRECT (DIviding RECTangles) algorithm~\cite{DIRECT} for derivative-free global optimization over the parameter space of parametric Markov models.
    DIRECT partitions the parameter domain into hyperrectangles, balancing global exploration with local exploitation, and does not require the knowledge of the Lipschitz constant of $f_{\lsf}$ to guide the partitioning.
    We establish conditional guarantees on the optimality gap between the true optimum and the value found by DIRECT, under an explicit Lipschitz assumption, 
    and extend this bound to the PAC setting where DIRECT operates on the approximation $\ApproxFunOfProperty{f}$ rather than on the exact function $f_{\lsf}$.

\item 
    We perform an extensive DIRECT-focused experimental evaluation on 2997 benchmarks comprising both pMDPs and pMCs with probability and reward properties, substantially extending the benchmark suite used in~\cite{DBLP:conf/birthday/ChiLT0J25}.
    The current experiments quantify both the agreement and the direction of improvement between the scenario-based optimizer and DIRECT on their common successful instances.
    They show that DIRECT variants solve fewer instances overall, but often return slightly better objective values and run faster on the subset of benchmarks they solve; most value differences remain within the PAC margin.
\end{enumerate}
We implemented our framework in our tool \pacpma\footnote{\url{https://github.com/iscas-tis/PacPMA/}}.

\subsubsection*{Related work.}
A substantial body of work addresses the efficient computation of the rational function $f_{\lsf}$ for parametric Markov models.
Daws~\cite{DBLP:conf/ictac/Daws04} proposed an automata-theoretic approach for pMC reachability, by computing $f_{\lsf}$ via state elimination and regular expression conversion.
This was extended by Hahn et~al.~\cite{DBLP:journals/sttt/HahnHZ11} to support bounded reachability and expected reward in the PARAM tool, improving the state elimination~\cite{DBLP:books/daglib/0016921} strategy.
Subsequent optimizations include: efficient state elimination ordering~\cite{DBLP:conf/qest/JansenCVWAKB14}, arithmetic circuit encodings~\cite{DBLP:conf/atva/GainerHS18}, fraction-free Gaussian elimination~\cite{DBLP:journals/iandc/BaierHHJKK20}, parameter lifting with monotonicity checking~\cite{DBLP:conf/tacas/SpelJK21}, and compositional fragment-based analysis~\cite{DBLP:conf/icse/FangCGA21}.

For parametric Markov models with nondeterminism, Hahn et~al.~\cite{DBLP:conf/nfm/HahnHZ11} handled nested PRCTL formulas by partitioning the parameter domain into hyperrectangles with uniform optimal policies, and Quatmann et~al.~\cite{DBLP:conf/atva/QuatmannD0JK16} established the first sound and feasible parameter synthesis procedure for pMDPs.
These approaches target exact symbolic or certified parameter synthesis, and therefore solve a different problem from ours.
In particular, when nondeterminism causes the optimal policy to change across the parameter domain, exact methods typically have to reason about policy-stable regions and the corresponding exact rational functions separately.
Our approach does not try to recover this exact partition; instead, it samples instantiated models, evaluates the induced optimal value, and synthesizes one compact approximation of the resulting optimal-value function over the whole domain, including regions where the optimizing policy changes.
The price is that the guarantee is distributional and statistical rather than an exact symbolic characterization of every policy region.
Our approach, building on the framework we developed in in~\cite{DBLP:conf/atva/LiuTHXZ23,DBLP:conf/birthday/ChiLT0J25}, provides such guarantees for the efficiently synthesized approximation function.
For a comprehensive survey, we refer the reader to~\cite{DBLP:conf/birthday/0001JK22}.

The most closely related work is that of Badings et~al.~\cite{DBLP:journals/sttt/BadingsCJJKT22}, which also applies the scenario approach~\cite{DBLP:journals/mp/CalafioreC05} to parametric Markov models.
The scenario approach relaxes an optimization problem by sampling a subset of constraints; if enough samples are drawn, the relaxed solution satisfies the original constraints with statistical guarantees~\cite{DBLP:journals/tac/CalafioreC06,DBLP:journals/arc/CampiGP09}.
It has been applied to robust optimization~\cite{DBLP:journals/tac/CalafioreC06}, safety of continuous-time dynamical systems~\cite{DBLP:journals/tcad/XueZEL20}, and neural network robustness~\cite{DBLP:conf/icse/LiYHS0Z22}.
The key difference between our work and the one of Badings et~al.\ is in how the scenario approach is used: 
they compute the probability that parameter instances satisfy a PRCTL formula $\lsf$ by comparing each instance's model checking result against a target value $\targetValue$, whereas we synthesize a low-degree polynomial approximation $\ApproxFunOfProperty{f}$ of the entire function $f_{\lsf}$ and then use $\ApproxFunOfProperty{f}(\parameters)$ for analysis.
This gives us a compact, differentiable representation that supports plotting, optimization, and threshold comparison with PAC guarantees under the chosen sampling measure, while avoiding the symbolic expression swell that often arises in exact rational function computation.
Statistical model checking (SMC)~\cite{DBLP:series/lncs/LegayLTYSG19} estimates the probability that a system satisfies a temporal property by simulating multiple execution traces, rather than by constructing and analyzing the full state space.
This simulation-based approach scales to complex systems and has been successfully applied to cyber-physical systems~\cite{DBLP:journals/scn/XieTFH21} and autonomous driving~\cite{DBLP:conf/ivs/BarbierRQRPLLI019,DBLP:conf/ivs/PaigwarBRLL20}, among other domains.
In our framework, SMC serves as an alternative back-end for evaluating the property satisfaction value at sampled parameter points, enabling the analysis of black-box parametric models for which the internal structure is not available.

\paragraph*{Organization of the paper}
Section~\ref{sec:preliminaries} introduces the models and logic we use.
Section~\ref{sec:synthesisPACfunctions} recalls the PAC-based model checking approach from~\cite{DBLP:conf/atva/LiuTHXZ23} and extends it to Markov decision processes.
Section~\ref{sec:combiningPACwithSMC} presents the combination of PAC-based model checking with statistical model checking.
Section~\ref{sec:DIRECT} introduces the DIRECT algorithm for global optimization and establishes theoretical guarantees on the optimality gap.
Section~\ref{sec:experiments} reports on the experimental evaluation, and Section~\ref{sec:conclusion} concludes with a discussion on possible extensions.

\section{Preliminaries}
\label{sec:preliminaries}

We begin by recalling the relevant background on probability measures, Markov models, reward structures, and the temporal logic PRCTL (see, e.g.,~\cite{DBLP:books/daglib/0020348,DBLP:books/wi/Puterman94} for more details).
We then introduce the parametric extensions of these models, which form the basis for the approximation framework presented in Section~\ref{sec:synthesisPACfunctions}.

A \emph{$\sigma$-field} over a set $X$ is a set $\mathcal{F} \subseteq 2^{X}$ that includes $X$ and is closed under complement and countable union. 
A \emph{measurable space} is a pair $(X, \mathcal{F})$ where $X$ is a set, also called the \emph{sample space}, and $\mathcal{F}$ is a $\sigma$-field over $X$. 
A measurable space $(X, \mathcal{F})$ is called \emph{discrete} if $\mathcal{F} = 2^{X}$. 
A \emph{measure} over a measurable space $(X, \mathcal{F})$ is a function $\pd \colon \mathcal{F} \to \posreals$ such that, for each countable collection $\setnocond{X_{i}}_{i \in I}$ of pairwise disjoint elements of $\mathcal{F}$, $\pd(\cup_{i \in I} X_{i}) = \sum_{i \in I} \pd(X_{i})$. 
A \emph{probability measure} over a measurable space $(X, \mathcal{F})$ is a measure $\pd$ over $(X, \mathcal{F})$ such that $\pd(X) = 1$. 
A measure over a discrete measurable space $(X, 2^{X})$ is called a \emph{discrete measure} over $X$. 
The \emph{support} of a discrete measure $\pd$ over $(X, \mathcal{F})$, denoted by $\Supp{\pd}$, is the set $\setcond{x \in X}{\pd(x) > 0}$.
When $X$ is finite, we usually call the discrete measure $\pd$ over $(X, 2^{X})$ a probability distribution and we denote by $\Dist{X}$ the set of all probability distributions over $X$.

\subsection{Probabilistic Models}
\label{ssec:probabilisticModels}

\begin{definition}
\label{def:mdp}
    Given a finite set of atomic propositions $\AP$, a \emph{(labelled) Markov decision process} (MDP) $\mdp$ is a tuple $\mdp = (\mstates, \minit, \mactions, \mtransitions, \mlabelling)$, where
    $\mstates$ is a finite set of \emph{states;}
    $\minit \in \mstates$ is the \emph{initial state;}
    $\mactions$ is a finite set of \emph{actions;}
    $\mtransitions \colon \mstates \times \mactions \times \mstates \to \posreals$ is a \emph{transition function} such that for each $s \in \mstates$ and $a \in \mactions$, $\sum_{s' \in \mstates} \mtransitions(s, a, s') \in \setnocond{0, 1}$; 
    and
    $\mlabelling \colon \mstates \to 2^{\AP}$ is a \emph{labelling function}.
\end{definition}
We denote by $\mactions(s) = \setcond{a \in \mactions}{\sum_{s' \in \mstates} \mtransitions(s, a, s') = 1}$ the set of actions enabled by the state $s$;
we require that $\size{\mactions(s)} \geq 1$ for all states $s \in \mstates$.

A discrete-time Markov chain can be seen as an MDP where $\size{\mactions(s)} = 1$ for all states $s \in \mstates$;
thus $\mactions$ can be omitted and we get the usual definition of Markov chain:
\begin{definition}
\label{def:mc}
    Given a finite set of atomic propositions $\AP$, a \emph{(labelled) (discrete-time) Markov chain} (MC) $\mc$ is a tuple $\mc = (\mstates, \minit, \mtransitions, \mlabelling)$ where
    $\mstates$ is a finite set of \emph{states};
    $\minit \in \mstates$ is the \emph{initial state};
    $\mtransitions \colon \mstates \times \mstates \to \posreals$ is a \emph{transition function} such that for each $s \in \mstates$, $\sum_{s' \in \mstates} \mtransitions(s, s') = 1$; 
    and
    $\mlabelling \colon \mstates \to 2^{\AP}$ is a \emph{labelling function}.
\end{definition}

The \emph{underlying graph} of an MDP $\mdp = (\mstates, \minit, \mactions, \mtransitions, \mlabelling)$ is a directed graph $\langle V, E \rangle$ with $V = \mstates$ as vertices and $E = \setcond{(s, s') \in \mstates \times \mstates}{\exists a \in \mactions. \mtransitions(s, a, s') > 0}$ as edges.


\begin{figure}[!htbp]
    \centering
        \includegraphics{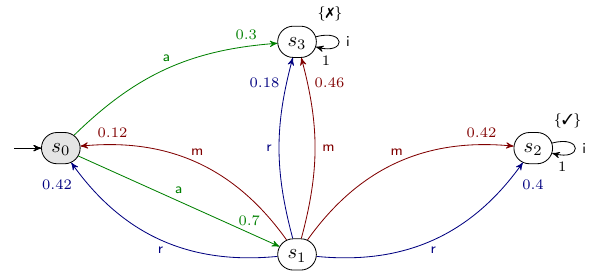}
    \caption{An example MDP modeling the nondeterministic choice in state $s_{2}$ between local \textsf{r}epair and \textsf{m}igration}
    \label{fig:mdp}
\end{figure}
\begin{markedexample}
\label{ex:mdp}
    As an example of MDP, consider the model shown in Figure~\ref{fig:mdp}, which represents a simplified cloud-service recovery procedure.
    It has four states: 
    $s_{0}$ is the initial admission state, $s_{1}$ is the recovery decision point, $s_{2}$ is the successful recovery state, and $s_{3}$ is the failed request state.
    The set of actions is $\setnocond{\textsf{a}, \textsf{r}, \textsf{m}, \textsf{i}}$, representing the actual \textsf{a}dmit, \textsf{r}epair, \textsf{m}igrate, and \textsf{i}dle actions, respectively, where \textsf{r} and \textsf{m} are the only nondeterministic alternatives in the MDP enabled by $s_{1}$.
    The non-zero probability values of the transitions are depicted on the directed arcs, together with the action enabling them.
    As labelling function $\mlabelling$, all states are mapped to $\emptyset$ except for $s_{2}$ and $s_{3}$, for which we have $\mlabelling(s_{2}) = \setnocond{\success}$ and $\mlabelling(s_{3}) = \setnocond{\failure}$.
\end{markedexample}

Let $\mdp$ be an MDP.
An \emph{infinite path} $\mpath$ of $\mdp$ is an alternating sequence of states and actions $\mpath = s_{0} a_{0} s_{1} a_{1} s_{2} \cdots$ such that $\mtransitions(s_{i}, a_{i}, s_{i+1}) > 0$ for each $i \in \naturals$;
a \emph{finite path} is just a finite prefix of an infinite path.
We let $\mpathsfin{\mdp}$ ($\mpaths{\mdp}$, resp.) denote the set of all finite (infinite, resp.) paths of $\mdp$.
Given a path $\mpath$, we write $\mpathstate{i}$ for the state $s_{i}$ and $\mpathaction{i}$ for the action $a_{i}$;
for a finite path $\mpath = s_{0} a_{1} s_{1} a_{1} s_{2} \cdots s_{n}$, we denote by $\size{\mpath} = n + 1$ its number of states and by $\last{\mpath} = s_{n}$ its last state.

Nondeterminism in MDPs is resolved by \emph{policies}, which select an action at each step based on the history of visited states.
Formally, a policy $\policy$ for an MDP $\mdp$ is a function $\policy \colon \mpathsfin{\mdp} \to \Dist{\mactions}$ such that $\Supp{\policy(\mpath)} \subseteq \mactions(\last{\mpath})$ for each finite path $\mpath$;
we denote by $\policies(\mdp)$ the set of all policies for $\mdp$ and we might simply write $\policies$ for $\policies(\mdp)$ when $\mdp$ is clear from the contest.
A policy $\policy$ is called \emph{memoryless} if it depends only on the last state of the path, i.e., $\policy(\mpath) = \policy(\mpath')$ whenever $\last{\mpath} = \last{\mpath'}$ for each $\mpath, \mpath' \in \mpathsfin{\mdp}$; 
it is called \emph{deterministic} if it always selects a single action with probability 1, i.e., $\size{\Supp{\policy(\mpath)}} = 1$ for each $\mpath \in \mpathsfin{\mdp}$.

Given an MDP $\mdp$, a policy $\policy \in \policies$, and a state $s$, $\policy$ induces a probability measure over the paths of $\mdp$ from the state $s$ as follows:
the basic measurable events are the cylinder sets of finite paths, where the \emph{cylinder set} of a finite path $\mpath$ is the set $\cylinder{\mpath} = \setcond{\mpath' \in \mpaths{\mdp}}{\text{$\mpath$ is a prefix of $\mpath'$}}$.
The probability $\prob_{s, \policy}$ of a cylinder set $\cylinder{\mpath}$ is defined inductively as follows:
\[
    \prob_{s, \policy}(\cylinder{\mpath}) = 
    \begin{cases}
        1 & \text{if $\mpath = s$,} \\
        0 & \text{if $\mpath = t \neq s$,} \\
        \prob_{s, \policy}(\cylinder{\mpath'}) \cdot \policy(\mpath')(a) \cdot \mtransitions(\last{\mpath'}, a, t) & \text{if $\mpath = \mpath' a t$.}
    \end{cases}
\]
Standard measure-theoretical arguments ensure that $\prob_{s, \policy}$ extends uniquely to the $\sigma$-field generated by cylinder sets (see, e.g.,~\cite{Billingsley95}).

Similarly to MCs, MDPs can be decorated with reward structures that assign values to states and transitions, modeling quantities such as energy consumption or time spent in particular configurations.
\begin{definition}
\label{def:mdrm}
    A \emph{Markov decision reward model} (MDRM) $\mdrm$ is a pair $\mdrm = (\mdp, \mreward)$ where $\mdp$ is an MDP and $\mreward = \mreward_{\mstates} \cup \mreward_{\mactions}$ is the \emph{reward structure}, where $\mreward_{\mstates} \colon \mstates \to \posreals$ and $\mreward_{\mactions} \colon \mstates \times \mactions \to \posreals$ are the state and action reward functions, respectively.
\end{definition}
\begin{markedexample}
\label{ex:rewardStructure}
    In our cloud-service example, we can define a reward structure that tracks the recovery cost: 
    a large penalty is obtained when the failed request state $s_{3}$ is reached (e.g., $\mreward_{\mstates}(s_{3}) = 1000$ and $\mreward_{\mstates}(s_{i}) = 0$ for $i \in \setnocond{0, 1, 2}$), and a higher action reward can be assigned to action $\mathsf{m}$ than to action $\mathsf{r}$ if moving the request to a backup instance consumes more resources than performing a local repair (e.g., $\mreward_{\mactions}(s_{1}, \mathsf{m}) = 10$ and $\mreward_{\mactions}(s_{1}, \mathsf{r}) = 3$).
\end{markedexample}

Given an MDRM $\mdrm = (\mdp, \mreward)$, a state $s$, and a policy $\policy$, we can define $\exprew_{s, \policy}^{\mdrm}$, the \emph{expected cumulative reward under $\policy$}, as follows (cf.~\cite{DBLP:books/daglib/0020348}):
given a set $T \subseteq \mstates$ of states, $\exprew_{s, \policy}^{\mdrm}(T)$ is the expectation with respect to the probability $\prob_{s, \policy}$ of the random variable $X^{T} \colon \mpaths{\mdp} \to \posreals$ defined as follows:
\[
    X^{T}(\mpath) = 
    \begin{cases}
        0 & \text{if $\mpath_{0} \in T$,} \\
        \infty & \text{if $\mpath_{i} \notin T$ for each $i \in \naturals$,} \\
        \sum\limits_{i = 0}^{\min \setcond{n \in \naturals}{\mpathstate{n} \in T} - 1} \mreward(\mpathstate{i}) + \mreward(\mpathstate{i}, \mpathaction{i}) & \text{otherwise.}
    \end{cases}
\]
We recall that if the policy $\policy$ does not make reaching $T$ almost sure, i.e., $\prob_{s, \policy}(\setcond{\mpath \in \mpathsfin{\mdp}}{\last{\mpath} \in T}) < 1$, then $\exprew_{s, \policy}^{\mdrm}(T) = \infty$ (cf.~\cite[Definition~10.71]{DBLP:books/daglib/0020348}).

\subsection{Probabilistic Reward Logic PRCTL}
\label{ssec:logicPRCTL}

To specify properties of MDRMs, we use PRCTL~\cite{DBLP:conf/formats/AndovaHK03}, which extends PCTL~\cite{DBLP:journals/fac/HanssonJ94} with reward operators.
PRCTL formulas are defined by the following grammar, where $\lsf$ is a \emph{state formula} and $\lpf$ is a \emph{path formula}:
\begin{align*}
    \lsf & ::= a \mid \lnot \lsf \mid \lsf \land \lsf \mid \lP{\mathord{\bowtie p}}(\lpf) \mid \lER{\mathord{\bowtie r}}(\lF \lsf) \\
    \lpf & ::= \lX \lsf \mid \lsf \lU \lsf \mid \lsf \lBU{k} \lsf
\end{align*}
where $a \in \AP$, 
$\opt \in \setnocond{\min, \max}$, 
$\mathord{\bowtie} \in \setnocond{\mathord{<}, \mathord{\leq}, \mathord{\geq}, \mathord{>}}$, 
$p \in \interval{0}{1}$, 
$r \in \posreals$,
and $k \in \naturals$.
We use freely the usually derived operators, like $\lsf_{1} \lor \lsf_{2} = \lnot(\lnot \lsf_{1} \land \lnot\lsf_{2})$, $\ltrue = a \lor \neg a$, and $\lF \lsf = \ltrue \lU \lsf$.
If we omit the $\lER{\mathord{\bowtie r}}(\lF \lsf)$ operator from PRCTL, we get the standard PCTL logic for MDPs.

The semantics of a state formula $\lsf$ and of a path formula $\lpf$ is given with respect to a state $s$ and a path $\mpath$ of a DMRM $\mdrm = (\mdp, \mreward)$, respectively.
The semantics is standard for all Boolean and temporal operators; 
for the $\lP{\mathord{\bowtie} p}$ operator, it is defined as 
\[
    \text{$s \models \lP{\mathord{\bowtie} p}(\lpf)$ iff $\opt_{\policy \in \policies} \setnocond{\prob_{s, \policy}(\setcond{\mpath \in \mpaths{\mdp}}{\mpath \models \lpf})}\bowtie p$}
\]
and, similarly, 
\[
    \text{$s \models \lER{\mathord{\bowtie} r}(\lF \lsf)$ iff $\opt_{\policy \in \policies} \setnocond{\exprew_{s, \policy}(\setcond{t \in \mstates}{t \models \lsf}}) \bowtie r$.}
\]

With some abuse of notation, we write $\mdrm \models \lsf$ if $\minit \models \lsf$; 
we also consider $\lP{\mathord{=}?}(\lpf)$ and $\lER{\mathord{=}?}(\lF \lsf)$ as PRCTL formulas, asking to compute the probability (resp.\@ expected reward) of satisfying $\lpf$ (resp.\@ $\lF \lsf$) in the initial state $\minit$ of $\mdrm$, i.e., to compute the value $\opt_{\policy \in \policies} \setnocond{\prob_{\minit, \policy}(\setcond{\mpath \in \mpaths{\mdp}}{\mpath \models \lpf})}$ (resp.\@ $\opt_{\policy \in \policies} \setnocond{\exprew_{\minit, \policy}(\setcond{t \in \mstates}{t \models \lsf})}$).
We just write $\lQ$ to denote either $\lP{\mathord{=}?}(\lpf)$ or $\lER{\mathord{=}?}(\lF \lsf)$, i.e., when we are just interested in computing an expected value about the MDP.

\begin{markedexample}
\label{ex:prctlFormula}
    As a concrete illustration of PRCTL formulas, consider the MDP from Figure~\ref{fig:mdp}.
    For the property $\lP{\mathord{=}?}(\lF \success)$, choosing \textsf{r} in $s_{1}$ gives reachability probability $\frac{0.7 \cdot 0.4}{1 - 0.7 \cdot 0.42} \approx 0.397$, while choosing \textsf{m} gives $\frac{0.7 \cdot 0.42}{1 - 0.7 \cdot 0.12} \approx 0.321$.
    Thus $\lP[\max]{\mathord{=}?}(\lF \success) \approx 0.397$ and $\lP[\min]{\mathord{=}?}(\lF \success) \approx 0.321$.
    Both the $\min$ and $\max$ expected reward accumulated before reaching $s_{2}$ are infinite, given that the absorbing failed request state $s_{3}$ can be reached with positive probability.
\end{markedexample}

It is well known that deterministic memoryless policies suffice to compute $\lQ$ and that applying a policy $\policy$ to a MDRM $\mdrm = (\mdp, \mreward)$ induces a MC $\applypolicy{\mdp}$ and the corresponding reward structure $\applypolicy{\mreward}$, where for instance the probability of going from state $s$ to state $s'$ in the MC $\applypolicy{\mdp}$ induced by a memoryless policy $\policy$ is $\sum_{a \in \mactions(s)} \policy(s)(a) \cdot \mtransitions(s, a, s')$.
For more details, we refer the interested reader to, e.g.,~\cite{DBLP:books/daglib/0020348,DBLP:conf/sfm/ForejtKNP11}.

\subsection{Parametric Models}
\label{ssec:parametricModels}

We now present the parametric extensions of MDPs and MDRMs, following~\cite{DBLP:conf/nfm/HahnHZ11,DBLP:journals/sttt/HahnHZ11,DBLP:conf/atva/LiuTHXZ23}.
Let $\parametersSet = \setnocond{\parameter_{1}, \dotsc, \parameter_{n}}$ be a finite set of \emph{parameters}, with each parameter $\parameter_{i}$ being associated by $\range \colon \parametersSet \to \reals$ to a closed interval $\range(\parameter_{i}) = \interval{L_{\parameter_{i}}}{U_{\parameter_{i}}} \subseteq \reals$ of its valid values.
We consider the parameters in a fixed order, writing them as a vector $\parameters = (\parameter_{1}, \dotsc, \parameter_{n})$; 
this induces the domain of the parameters $\parametersDomain = \prod_{i=1}^{n} \range(\parameter_{i})$ with $\parametersDomain \subseteq \reals^{n}$.
An \emph{evaluation} $\evaluation$ is a function $\evaluation \colon \parametersSet \to \reals$ assigning to each parameter one specific value in its range, i.e., for each $\parameter \in \parametersSet$, $\evaluation(\parameter) \in \range(\parameter)$;
we write $\evaluation(\parameters)$ to denote the vector of real numbers induced by $\evaluation$ on $\parameters$, i.e., $\evaluation(\parameters) = (\evaluation(\parameter_{1}), \cdots, \evaluation(\parameter_{n}))$.

Let $\polynomialsSet$ denote the ring of the polynomials with variables $\parametersSet$ over the field $\reals$ of real numbers; 
a \emph{rational function} $f$ is the ratio between two polynomials $g_{1}, g_{2} \in \polynomialsSet$, that is, $f(\parameters) = \frac{g_{1}(\parameters)}{g_{2}(\parameters)}$;
let $\ratfunsSet$ be the set of all rational functions.
Given $f = \frac{g_{1}}{g_{2}} \in \ratfunsSet$ and an evaluation $\evaluation$, we denote by $\evaluate{f}$ the rational number $f(\evaluation(\parameters))$;
we assume that $\evaluate{f}$ is well defined for each evaluation $\evaluation$, that is, $\evaluate{g_{2}} \neq 0$.

\begin{definition}
\label{def:pmc}
    Given a finite set of parameters $\parametersSet$, a \emph{parametric Markov decision process} (pMDP) $\pmdp$ with parameters $\parametersSet$ is a tuple $\pmdp = (\mstates, \minit, \mactions, \mtransitions, \mlabelling)$ where $\mstates$, $\minit$, $\mactions$ and $\mlabelling$ are as in Definition~\ref{def:mdp}, while $\mtransitions \colon \mstates \times \mactions \times \mstates \to \ratfunsSet$.
\end{definition}

\begin{definition}
\label{def:pmcInstantiation}
    Given a pMDP $\pmdp$ and an evaluation $\evaluation$, the evaluation $\evaluation$ \emph{induces} the MDP $\evaluate{\mdp} = (\mstates, \minit, \mactions, \mtransitions_{\evaluation}, \mlabelling)$ where $\mtransitions_{\evaluation}$ is defined, for each $s, s' \in \mstates$ and $a \in \mactions$, as $\mtransitions_{\evaluation}(s, a, s') = \evaluate{\mtransitions(s, a, s')}$.
\end{definition}

The extension of MDRMs to parametric MDRMs (pMDRMs) is natural:
a pMDRM $\pmdrm$ is just a pair $\pmdrm = (\pmdp, \mreward)$ where $\pmdp$ is a pMDP and $\mreward$ is a reward function.
In this paper, parameters occur affect the transition probabilities of the underlying pMDP; 
the reward structure $\mreward$ is non-parametric.

Throughout the paper, we consider only well-posed parameter domains ensuring that the induced MDP is indeed a MDP according to Definition~\ref{def:mdp};
this can be achieved by e.g.\@ restricting $\parametersDomain$ to only those points inducing a valid MDP.
The following standing assumption fixes this convention.
\begin{assumption}
\label{asmt:wellPosedDomain}
    Given a pMDP $\pmdp$, for each evaluation $\evaluation$ such with $\evaluation(\parameters) \in \parametersDomain$, we assume that $\evaluate{\mdp}$ is a valid MDP satisfying Definition~\ref{def:mdp}.
\end{assumption}

To simplify the presentation and ensure that the underlying graph of $\pmdp$ does not depend on the actual evaluation, we make the following assumption:
\begin{assumption}
\label{asmt:sameGraph}
    Given a pMDP $\pmdp$ and any pair of evaluations $\evaluation_{1}$ and $\evaluation_{2}$, for the induced MDPs $\evaluate[\evaluation_{1}]{\pmdp}$ and $\evaluate[\evaluation_{2}]{\pmdp}$ we require that for each $s, s' \in \mstates$ and $a \in \mactions$, it holds that $\mtransitions_{\evaluation_{1}}(s, a, s') = 0$ if and only if $\mtransitions_{\evaluation_{2}}(s, a, s') = 0$.
\end{assumption}
By this assumption, different evaluations can only change the strictly positive probability value of the transition from $s$ to $s'$ by action $a$; 
they cannot, however, change whether the probability of the transition from $s$ to $s'$ by action $a$ is $0$.

\begin{figure}[tb]
    \centering
    \includegraphics{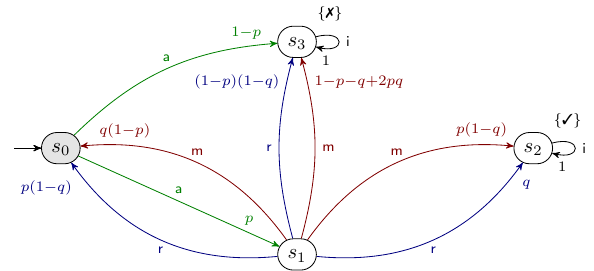}
    \caption{The parametric counterpart of the MDP shown in Figure~\ref{fig:mdp}}
    \label{fig:pmdp}
\end{figure}
\begin{markedexample}
\label{ex:pmdp}
    As an example of pMDP, consider the parametric recovery model shown in Figure~\ref{fig:pmdp}.
    It is the parametric counterpart of the MDP in Figure~\ref{fig:mdp}, where the numerical probabilities have been replaced by expressions over the parameters $p$ and $q$. 
    The parameter $p$ is the reliability of admitting a request, while $q$ is the probability that local repair succeeds.
    If local repair does not immediately succeed, the request is retried from the admission state with probability $p(1-q)$ and fails with probability $(1-p)(1-q)$.
    Migration is favorable when admission reliability is high and local repair is less favorable: it succeeds with probability $p(1-q)$, retries admission with probability $q(1-p)$, and fails with the remaining probability $1-p-q+2pq$.
    We use $\range(p) = \interval{0.2}{0.9}$ and $\range(q) = \interval{0.2}{0.8}$.
    Under evaluation $\evaluation(p) = 0.7$ and $\evaluation(q) = 0.4$, we recover the concrete MDP of Figure~\ref{fig:mdp}.
    
    The rational function for $\lP{\mathord{=}?}(\lF \success)$ under the policy $\policy^{\textsf{r}}$ choosing \textsf{r} in $s_{1}$ is $f_{\lsf}^{\textsf{r}}(p,q) = \frac{pq}{1-p^{2}+p^{2}q}$, while the function under the policy $\policy^{\textsf{m}}$ choosing \textsf{m} is $f_{\lsf}^{\textsf{m}}(p, q) = \frac{p^{2}(1-q)}{1-pq+p^{2}q}$.
    The two policy functions and the regions where each policy gives a higher value are shown in the left plot of Figure~\ref{fig:pmdp1MultipleFunctions}.
    These functions follow directly from the Bellman equations
    \[
        x_{s_{0}} = p \cdot x_{s_{1}} \qquad x_{s_{1}}^{\textsf{r}} = q \cdot 1 + p(1-q) \cdot x_{s_{0}} \qquad x_{s_{1}}^{\textsf{m}} = p(1-q) \cdot 1 + q(1-p) \cdot x_{s_{0}}
    \]
    that, when solved with $x_{s_{1}} = x_{s_{1}}^{\textsf{r}}$ and $x_{s_{1}} = x_{s_{1}}^{\textsf{m}}$ due to the policy choosing \textsf{r} and \textsf{m} in $s_{1}$, respectively, give
    \[
        x_{s_{0}}^{\textsf{r}} = \frac{pq}{1-p^{2}+p^{2}q}
        \qquad
        x_{s_{0}}^{\textsf{m}} = \frac{p^{2}(1-q)}{1-pq+p^{2}q},
    \]
    respectively, and hence the two functions $f_{\lsf}^{\textsf{r}}(p,q)$ and $f_{\lsf}^{\textsf{m}}(p, q)$.
    When evaluated at $(p, q) = (0.7,0.4)$, we get the values seen in Example~\ref{ex:prctlFormula}.
    If we change only the value of $q$, we get that e.g.\@ at $q = 0.2$, migration is better and gives value $\approx 0.409$ while at $q = 0.6$, local repair is better and gives value $\approx 0.522$.
\end{markedexample}

\begin{figure}[tb]
    \centering
    \resizebox{\linewidth}{!}{
        \begin{tikzpicture}
            \node (rat) at (0,0) {\includegraphics{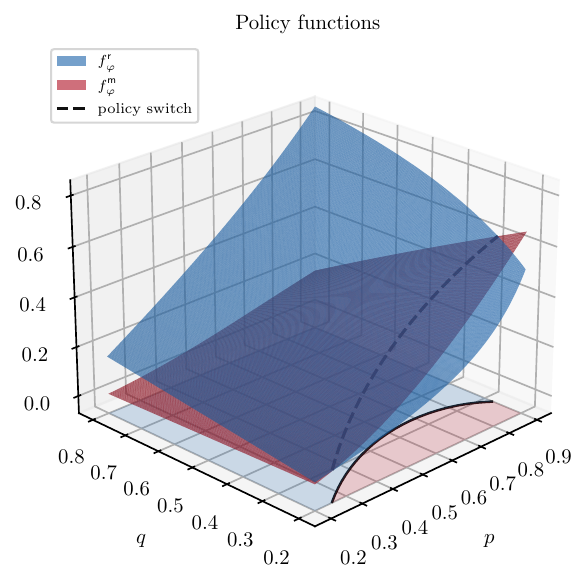}};
            \node[anchor=west] (apprx) at (rat.east) {\includegraphics{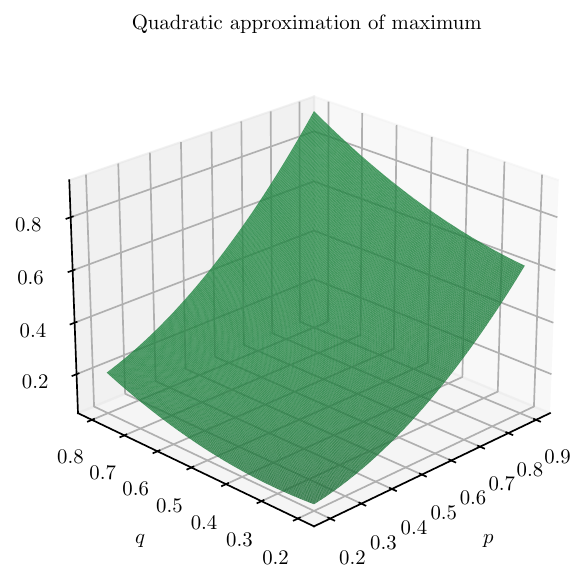}};
        \end{tikzpicture}
    }
    \caption{On the left, the rational functions $f_{\lsf}^{\textsf{r}}$ and $f_{\lsf}^{\textsf{m}}$, with floor shading indicating where it is preferable to choose the repair (blue) or migration (red) action, so to get a higher value; the dashed curve marks the policy switch. On the right, the quadratic approximation of $\max \setnocond{f_{\lsf}^{\textsf{r}}, f_{\lsf}^{\textsf{m}}}$}
    \label{fig:pmdp1MultipleFunctions}
\end{figure}
The \emph{policy switch} is the set of parameter values where the two deterministic memoryless policies give the same reachability probability, that is, $f_{\lsf}^{\textsf{r}}(p,q) = f_{\lsf}^{\textsf{m}}(p,q)$; 
it is shown on the plot on the left side of Figure~\ref{fig:pmdp1MultipleFunctions} as the black dashed curve and its straight curve projection.
On this curve, both repair and migration are equally good for $\lP(\lF\success)$.
On one side of the curve, in light blue, we have $f_{\lsf}^{\textsf{r}}(p,q) > f_{\lsf}^{\textsf{m}}(p,q)$, so the maximizing scheduler chooses local repair in state $s_{1}$; 
on the other side, in light red, we have $f_{\lsf}^{\textsf{r}}(p,q) < f_{\lsf}^{\textsf{m}}(p,q)$, so it chooses migration.
Thus the dashed curve is the boundary of the two policy regions. 
The plot on the right of Figure~\ref{fig:pmdp1MultipleFunctions} approximates via a quadratic polynomial the pointwise maximum obtained by taking the larger of the two policy-specific functions at each parameter point.
The quadratic approximation shown on the right is synthesized from randomly sampled scenario points, as described later in Section~\ref{ssec:PACmodels}; the regularly spaced black points that appeared in an earlier version were only a fixed visualization grid and were not used to compute the approximation.

\section{Probably Approximately Correct Function Synthesis}
\label{sec:synthesisPACfunctions}

In this section, we briefly recall from~\cite{DBLP:conf/atva/LiuTHXZ23} how to get low-degree polynomials that approximate the exact rational functions corresponding to the PRCTL properties of a discrete-time Markov chain, while providing a statistical PAC guarantee on the closeness of the approximating polynomial with the approximated function.
We then recall and adapt the pMDP extension developed in~\cite{DBLP:conf/birthday/ChiLT0J25} to introduce the notation we use and make this work self-contained before introducing our novel DIRECT development and expanded experiments.

\subsection{Probably Approximately Correct Models}
\label{ssec:PACmodels}

As mentioned in the introduction and as shown in~\cite{DBLP:conf/atva/LiuTHXZ23}, a polynomial for the rational function relative to a PRCTL property against a pMC can be synthesized by sampling values for the parameters, evaluate the value of the property in the instantiated MC, and then finding the coefficients of the polynomial that minimize the error between the polynomial and the computed property value on the sampled parameters.
The synthesized polynomial comes with a probably approximately correctness guarantee with respect to the given significance level $\significanceLevel$ and error rate $\errorRate$, provided that a sufficiently large number of samples is used.

The synthesis of the approximating polynomial is based on the scenario approach~\cite{DBLP:journals/tac/CalafioreC06,DBLP:journals/arc/CampiGP09} that was originally applied in the context of robust convex programming problems~\cite{DBLP:journals/mor/Ben-TalN98, DBLP:journals/orl/Ben-TalN99, DBLP:journals/siamjo/GhaouiOL98}. 
These problems are in general described by the following uncertain convex optimization problem
\begin{equation}
\label{eq:optProblemExact}
    \begin{split}
        \min \quad & \vec{a}^{T} \theta \\
        \mathrm{s.t.} \quad & f_{\omega}(\theta) \leq 0 \qquad  \forall \omega \in \Omega \\
        & \theta \in \Theta
    \end{split}
\end{equation}
In the problem above, $\Theta \subseteq \reals^{m}$ is a convex and closed domain of the optimization variables $\theta$ so that the objective function $\vec{a}^{T} \theta$ is minimized.
$\Omega$ is the domain of the uncertain parameters $\omega$ and for each of these parameters $\omega$, the constraint $f_{\omega}(\theta) \leq 0$ is based on the function $f_{\omega} \colon \Theta \to \reals$ that is assumed to be a convex function of $\theta \in \Theta$.

In general, the size of the set of parameters $\Omega$ is infinite, and this makes the optimization problem~\eqref{eq:optProblemExact} have infinitely many constraints, thus making it difficult to be solved. 
To make~\eqref{eq:optProblemExact} solvable, Calafiore and Campi~\cite{DBLP:journals/tac/CalafioreC06} proposed to transform~\eqref{eq:optProblemExact} into a scenario optimization problem, by relying on the Helly theorem~\cite{rockafellar1997convex}.
In the scenario optimization problem, formalized below in Definition~\ref{def:scenarioApproach}, only finitely many constraints are used, with their number being chosen so to provide statistical guarantees on the error made with respect to the exact solution of~\eqref{eq:optProblemExact}.
\begin{definition}
\label{def:scenarioApproach}
    Given the set $\Omega$, let $\Omega_{\nsamples} = \setnocond{\omega_{1}, \dotsc, \omega_{\nsamples}}$ be $\nsamples$ independent identically distributed samples taken from $\Omega$ according to a given probability measure $P$ over $\Omega$.
    The \emph{scenario optimization problem} corresponding to the problem~\eqref{eq:optProblemExact} is defined as
    \begin{equation}
    \label{eq:optProblemPAC}
        \begin{split}
            \min \quad & \vec{a}^{T} \theta \\
            \mathrm{s.t.} \quad & f_{\omega_{i}}(\theta) \leq 0 \qquad \forall \omega_{i} \in \Omega_{\nsamples} \\
            & \theta \in \Theta
        \end{split}
    \end{equation}
\end{definition}
It is easy to observe that the optimization problem~\eqref{eq:optProblemPAC} is in practice a relaxation of the problem~\eqref{eq:optProblemExact}, since we consider fewer constraints for the optimization variables $\theta$, so the optimal solution $\theta^{*}_{l}$ of the problem~\eqref{eq:optProblemPAC} satisfies all constraints $f_{\omega_{i}}(\theta^{*}_{\nsamples}) \leq 0$ for each sample $\omega_{i} \in \Omega_{\nsamples}$, but no requirement is imposed on $\theta^{*}_{\nsamples}$ regarding the constraints $f_{\omega}(\theta^{*}_{\nsamples}) \leq 0$ for $\omega \in \Omega \setminus \Omega_{\nsamples}$. 
The issue now is how to provide a strong enough guarantee that the optimal solution $\theta^{*}_{\nsamples}$ of~\eqref{eq:optProblemPAC} also satisfies the other constraints $f_{\omega}(\theta) \leq 0$ occurring in~\eqref{eq:optProblemExact} corresponding to the choices of $\omega \in \Omega \setminus \Omega_{\nsamples}$ that we have not considered in~\eqref{eq:optProblemPAC}.

Statistics theory ensures that if enough samples $\nsamples$ are taken, then the probability of $\theta^{*}_{\nsamples}$ violating the omitted constraints is smaller than a given threshold: 
the following theorem establishes how many samples need to be taken according to $P$ to format $\Omega_{\nsamples} = \setnocond{\omega_{1}, \dotsc, \omega_{\nsamples}}$ at least to ensure that $\theta^{*}_{\nsamples}$ ``works well'' also for the constraints $f_{\omega}(\theta) \leq 0$ with $\omega \in \Omega \setminus \Omega_{\nsamples}$ of the problem~\eqref{eq:optProblemExact} we have not considered.
The fact that $\theta^{*}_{\nsamples}$ ``works well'' is formalized by using an \emph{error rate} $\errorRate$ to bound the probability that the solution $\theta^{*}_{\nsamples}$ violates the constraints of problem~\eqref{eq:optProblemExact} and by a \emph{significance level} $\significanceLevel$ with respect to the random sampling solution algorithm.
The minimal number of sampled points $\nsamples$ is related to the error rate $\errorRate \in (0, 1]$ and significance level $\significanceLevel \in (0, 1]$ by: 
\begin{theorem}
\label{thm:PACnumberOfSamples}
    If the optimization problem~\eqref{eq:optProblemPAC} is feasible and it has a unique optimal solution $\theta^{*}_{\nsamples}$, then $P(f_{\omega}(\theta^{*}_{\nsamples}) > 0) < \errorRate$ holds with confidence at least $1 - \significanceLevel$, provided that the number of constraints $\nsamples$ satisfies
    \[
        \nsamples \geq \frac{2}{\errorRate} \cdot \Big(\ln \frac{1}{\significanceLevel} + m \Big),
    \] 
    where $m$ is the dimension of $\theta$, that is, $\theta \in \Theta \subseteq \reals^{m}$, and $\errorRate$ and $\significanceLevel$ are the given error rate and significance level, respectively.
\end{theorem}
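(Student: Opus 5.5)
This is the classical Calafiore--Campi scenario bound, and I will prove it in two stages. First I bound the violation probability of $\theta^{*}_{\nsamples}$ by a binomial tail, using the support-constraint (Helly-type) argument. Then I invert that tail to obtain the stated sample size.

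\emph{Stage 1: the binomial tail.} Define the violation probability $V(\theta) = P(\setcond{\omega \in \Omega}{f_{\omega}(\theta) > 0})$. The key combinatorial fact is that, by convexity of each $f_{\omega}$ and Helly's theorem, the unique optimum $\theta^{*}_{\nsamples}$ of~\eqref{eq:optProblemPAC} is already determined by at most $m$ of the $\nsamples$ constraints, called \emph{support constraints}. Removing all other constraints leaves the optimum unchanged, and uniqueness is what makes this well defined.

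Fix a subset $I$ of $m$ indices and let $\theta_{I}$ be the optimum of the problem using only the constraints in $I$. Suppose $\theta^{*}_{\nsamples}$ has support in $I$. Then $\theta_{I} = \theta^{*}_{\nsamples}$, and each of the remaining $\nsamples - m$ independent samples must satisfy its constraint at $\theta_{I}$. Conditioning on the samples in $I$, the event $\setnocond{V(\theta_{I}) > \errorRate \text{ and } \theta_{I} \text{ satisfies all other samples}}$ therefore has probability at most $(1-\errorRate)^{\nsamples - m}$. A union bound over the $\binom{\nsamples}{m}$ choices of $I$ gives
\[
    P^{\nsamples}\bigl(V(\theta^{*}_{\nsamples}) > \errorRate\bigr) \leq \binom{\nsamples}{m}(1-\errorRate)^{\nsamples-m}.
\]
A sharper analysis, due to Campi and Garatti, replaces the right-hand side by the exact binomial tail $\sum_{i=0}^{m-1}\binom{\nsamples}{i}\errorRate^{i}(1-\errorRate)^{\nsamples-i}$. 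I will cite that analysis rather than redo it, since it is standard in~\cite{DBLP:journals/tac/CalafioreC06,DBLP:journals/arc/CampiGP09}.

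\emph{Stage 2: from the tail to the sample size.} It remains to show that the tail is at most $\significanceLevel$ whenever $\nsamples \geq \frac{2}{\errorRate}(\ln\frac{1}{\significanceLevel} + m)$. This amounts to a Chernoff bound on $X \sim \mathrm{Bin}(\nsamples, \errorRate)$, since the tail equals $P(X \leq m-1)$. The mean of $X$ is $\mu = \nsamples\errorRate$. The multiplicative lower-tail bound gives $P(X \leq (1-\delta)\mu) \leq e^{-\delta^{2}\mu/2}$. Alternatively, I can multiply the tail by $\errorRate^{-i} \cdot a^{i}$ for a suitable $a>1$ and bound by $a^{m}(1-\errorRate+\errorRate/a)^{\nsamples}\le a^{m}e^{-\nsamples\errorRate(1-1/a)}$.

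Choosing $a = e$ yields $e^{m - \nsamples\errorRate(1-1/e)}$. Requiring this to be at most $\significanceLevel$ gives $\nsamples \geq \frac{1}{(1-1/e)\errorRate}(\ln\frac{1}{\significanceLevel} + m)$. Since $1/(1-1/e) \approx 1.582 < 2$, the stated bound implies this one. Hence with confidence at least $1-\significanceLevel$ we get $V(\theta^{*}_{\nsamples}) \leq \errorRate$.

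The main obstacle is the strict inequality $<\errorRate$ in the statement. If the base Calafiore--Campi bound only delivers $\leq$, I expect to handle this by slack in the constant: $1/(1-1/e) < 2$ leaves room to apply the argument with a slightly smaller $\errorRate' < \errorRate$. The other delicate point is the support-constraint count via Helly, which relies on uniqueness of the optimum; I will take it from the cited references rather than re-derive it.
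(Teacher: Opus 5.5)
The paper gives no proof of this theorem. It is recalled from the scenario-approach literature (Calafiore--Campi, Campi--Garatti--Prandini). There, $\nsamples \geq \frac{2}{\errorRate}(\ln\frac{1}{\significanceLevel}+m)$ arises exactly as in your proposal: it is a sufficient condition for the Campi--Garatti tail $\sum_{i=0}^{m-1}\binom{\nsamples}{i}\errorRate^{i}(1-\errorRate)^{\nsamples-i}$ to be at most $\significanceLevel$. Granting that cited bound, your argument is correct and follows the sources' route. Your Stage~2 even gives the better constant $e/(e-1)$. Your slack idea for the strict inequality also works concretely: with $\errorRate' = \frac{e}{2(e-1)}\errorRate < \errorRate$ one has $\frac{e}{(e-1)\errorRate'} = \frac{2}{\errorRate}$. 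So, with confidence $1-\significanceLevel$, the violation probability is at most $\errorRate' < \errorRate$.

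A few things to correct in the write-up.

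First, the union bound $\binom{\nsamples}{m}(1-\errorRate)^{\nsamples-m}$ that you derive yourself is not just less sharp; it cannot deliver the stated sample size. The factor $\binom{\nsamples}{m}$ costs an extra term of order $\frac{m}{\errorRate}\ln\frac{1}{\errorRate}$. For $m=10$, $\errorRate=0.01$, $\significanceLevel=0.05$, the stated $\nsamples=2600$ leaves this bound around $10^{16}$. So all the content of the theorem lies in the cited exact tail, and Stage~1 is motivation only.

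Second, in Stage~2 the $i$-th term must be multiplied by $a^{m-i} \geq 1$, not by $\errorRate^{-i}a^{i}$. Extending the sum to all $i \leq \nsamples$ then gives the bound $a^{m}(1-\errorRate+\errorRate/a)^{\nsamples}$ you state.

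Third, the confidence refers to the random draw of the samples. Existence and uniqueness of the optimum must therefore hold for (almost) every sample tuple, as in Campi--Garatti's standing assumption, not only for the realized problem. Your Stage~1 sketch additionally needs this for the $m$-constraint subproblems defining $\theta_{I}$.

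Relatedly, what Helly's theorem provides is a set of at most $m$ constraints whose subproblem has the same optimum. In degenerate cases this differs from the set of support constraints (those whose individual removal changes the optimum). The union bound should therefore be phrased over such sets.
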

The uniqueness assumption about $\theta^{*}_{\nsamples}$ in the theorem above is not restrictive, since for multiple optimal solutions we can just use the Tie-break
rule~\cite{DBLP:journals/tac/CalafioreC06} to get a unique optimal solution.

Statistical guarantees like the one established in Theorem~\ref{thm:PACnumberOfSamples} can be obtained also when comparing how close two functions are on their common domain. 
This, together with the results of Theorem~\ref{thm:PACnumberOfSamples} applied to the optimization problem~\eqref{eq:optProblemPAC}, will be used later for the synthesis of approximation functions in model checking parametric MDPs.

\begin{definition}
\label{def:PACmodel}
    Given a set of $n$ parameters $\parametersSet = \setnocond{\parameter_{1}, \dotsc, \parameter_{n}}$, their valuation domain $\parametersDomain = \prod_{i = 1}^{n} \range(\parameter_{i})$, and a function $f \colon \parametersDomain \to \reals$, let $P$ be a probability measure over $\parametersDomain$ and $\errorRate, \significanceLevel \in (0,1]$ be an error rate and a significance level, respectively.
    
    We say that the polynomial $\approxfun{f} \in \polynomialsSet$ is a $\margin$-PAC approximation of $f$ with $(\errorRate, \significanceLevel)$-guarantee if the inequality $P(\setcond{\parameters \in \parametersDomain}{\abs{\approxfun{f}(\parameters) - f(\parameters)} \leq \margin}) \geq 1 - \errorRate$ holds with confidence $1 - \significanceLevel$, where $\margin \in \posreals$ is used as a margin to bound the PAC approximation error. 
\end{definition}

\begin{markedexample}
\label{ex:marginPACapproximation}
    As an example of $\margin$-PAC approximation, for the cloud-service pMDP shown in Figure~\ref{fig:pmdp} and the PRCTL property $\lsf = \lP[\max]{\mathord{=}?}(\lF \success)$, the function $f_{\lsf}$ representing $\lsf$ is the pointwise maximum of the two rational functions $f_{\lsf}^{\textsf{r}} = \frac{pq}{1-p^{2}+p^{2}q}$ and $f_{\lsf}^{\textsf{m}} = \frac{p^{2}(1-q)}{1-pq+p^{2}q}$ established in Example~\ref{ex:pmdp}.
    A possible linear approximation of this maximum is $\ApproxFunOfProperty{f} = -0.323 + 0.912p + 0.352q$ (where for ease of notation we write the coefficients approximated to the third digit);
    we obtain such a linear function by applying the method we present in Section~\ref{ssec:ourMethod}, which gives us the corresponding margin $\margin = 0.082$ when we assign $\errorRate$ and $\significanceLevel$ to be $\errorRate = \significanceLevel = 0.05$. 
    We can check that the approximation function $\ApproxFunOfProperty{f}$ satisfies the condition of Definition~\ref{def:PACmodel} when we consider $\margin = 0.082$, that is, $\ApproxFunOfProperty{f}$ is a $0.082$-PAC approximation of $f_{\lsf}$ with $(0.05, 0.05)$-guarantee.
\end{markedexample}

In Definition~\ref{def:PACmodel}, for $f_{\lsf}$ and $\ApproxFunOfProperty{f}$ given in Example~\ref{ex:marginPACapproximation}, the margin $\margin$ can be chosen freely in $\posreals$, as long as $\margin \geq 0.082$: 
it is immediate to note that, if $P(\abs{\ApproxFunOfProperty{f}(\parameters) - f_{\lsf}(\parameters)} \leq \margin) \geq 1 - \errorRate$ holds with confidence $1 - \significanceLevel$ for $\margin = 0.082$, then the same holds true for all values of $\margin$ larger than $0.082$.
This means, for instance, that for the same choice of $\errorRate$ and $\significanceLevel$, we can have that $\ApproxFunOfProperty{f}$ is e.g.\@ a $0.1$-PAC approximation of $f_{\lsf}$ with $(\errorRate, \significanceLevel)$-guarantee while it may not be a $0.01$-PAC approximation of $f_{\lsf}$ with $(\errorRate, \significanceLevel)$-guarantee;
this can happen because $\ApproxFunOfProperty{f}$ can make $P(\abs{\ApproxFunOfProperty{f}(\parameters) - f_{\lsf}(\parameters)} \leq 0.01) \geq 1 - \errorRate$ fail, with confidence $1 - \significanceLevel$.
So this means that for the same $f_{\lsf}$, $\ApproxFunOfProperty{f}$, $\errorRate$, and $\significanceLevel$, Definition~\ref{def:PACmodel} can be satisfied or not depending on the value of $\margin$ we are interested in. 
As we will see below in Section~\ref{ssec:ourMethod}, when we synthesize the approximation polynomial for the given function $f_{\lsf}$, we obtain a polynomial $\ApproxFunOfProperty{f}$ together with a value for the margin $\margin$;
both of them are the result of solving an optimization problem (cf.\@ problem~\eqref{eq:PACLPpolynomialApproximation}) determined by the parameters $\errorRate$ and $\significanceLevel$, and a large enough set of points randomly sampled in $\parametersDomain$.
Therefore, the actual value of $\margin$ can be different for different sets of sampled points, as does $\ApproxFunOfProperty{f}$;
still, we can ensure by construction that the obtained polynomial $\ApproxFunOfProperty{f}$ and the associated margin $\margin$ jointly satisfy Definition~\ref{def:PACmodel}, that is, the approximation function $\ApproxFunOfProperty{f}$ obtained by solving problem~\eqref{eq:PACLPpolynomialApproximation} we present below is a $\margin$-PAC approximation of $f_{\lsf}$ with $(\errorRate, \significanceLevel)$-guarantee.

As usual in the PAC approach, we assume that $P$ is the uniform distribution on the domain $\parametersDomain = \prod_{i = 1}^{n} \range(\parameter_{i})$ unless otherwise specified. 
The intuition underlying Definition~\ref{def:PACmodel} is that we want to have statistical guarantees about the fact that the $\margin$-PAC approximation $\approxfun{f}$ is closer to $f$ than the margin $\margin$, on $\parametersDomain$. 
Thus the two statistical parameters $\significanceLevel$ and $\errorRate$ are introduced to quantify how often the difference between $\approxfun{f}$ and $f$ respects the threshold $\margin$. 
Specifically, the error rate $\errorRate$ describes the probability that the difference between the value $f(\parameters)$ and $\approxfun{f}(\parameters)$ is larger than $\margin$ on the sampled points, while the significance level $\significanceLevel$ bounds the risk of wrongly accepting this event; 
these two parameters can be changed to improve the quality of the approximation, that is, to get $\approxfun{f}$ closer to $f$.

\subsection{Synthesizing Parametric Functions}
\label{ssec:ourMethod}

We now recall and adapt the scenario approach for the synthesis of parametric functions to parametric Markov decision processes.
The basic construction was presented in~\cite{DBLP:conf/atva/LiuTHXZ23} for parametric discrete-time Markov chains and then extended to pMDPs in~\cite{DBLP:conf/birthday/ChiLT0J25};
we include the formulation here to fix the notation for later use in the newly introduced DIRECT optimization and the extended experiments.

Let $\pmdrm = (\pmdp, \mreward)$ be a pMDRM whose underlying pMDP is $\pmdp = (\mstates, \minit,  \mactions, \mtransitions, \mlabelling)$; 
let $\parameters$ denote the vector of parameters $(\parameter_{1}, \dotsc, \parameter_{n})$ of $\pmdp$. 
Given the PRCTL state formula $\lsf = \lQ(\lpf)$, the analytic function $f_{\lsf}(\parameters)$, representing the $\opt$imal value of satisfying $\lpf$ in the pMDRM $\pmdrm$, can be a rational function with a very complicated form already in the context of parametric MCs~\cite{DBLP:journals/sttt/HahnHZ11} since the polynomials in these rational functions may have exponentially many terms. 
This means that analyzing them can be time consuming and subject to numerical errors; 
thus, as suggested in~\cite{DBLP:conf/atva/LiuTHXZ23}, we approximate them with some low-degree polynomials while giving statistical guarantees on the approximation error we introduce.

Consider the $\margin$-PAC approximation given in Definition~\ref{def:PACmodel}. 
The approximation procedure developed in~\cite{DBLP:conf/atva/LiuTHXZ23} computes the coefficients $\coefficients$ of the polynomial $\ApproxFunOfProperty{f}(\parameters)$ by solving a scenario optimization problem as defined in Definition~\ref{def:scenarioApproach}. 
The scenario problem~\eqref{eq:optProblemPAC} is obtained by first taking $\nsamples$ samples of the parameters to form the set $\SampleDomain{\parametersDomain}$, which correspond to $\nsamples$ evaluations $\evaluation_{i}$ for $i=1$ to $\nsamples$; 
then the value $\evaluate[\evaluation_{i}]{f_{\lsf}}$ is obtained by solving a standard model checking problem for $\lsf$ on the instance $\evaluate[\evaluation_{i}]{\pmdp}$.
Lastly, the corresponding constrains are produced, with the form $\abs{\evaluate[\evaluation_{i}]{f_{\lsf}} - \coefficients \cdot \evaluate[\evaluation_{i}]{(1, \parameters, \dotsc, \parameters^{d})^{T}}} \leq \margin$, where $\parameters^{j}$ denotes the monomials of degree $j$ of the polynomial with variables $\parameters$, and then the optimization problem is solved with respect to the variables $\coefficients$ and $\margin$ with the aim of minimizing $\margin$.
With this approach, it does not matter how complicated the function $f_{\lsf}$ is: 
in order to construct the approximating polynomial $\ApproxFunOfProperty{f}$ of $f_{\lsf}$ by solving the optimization problem, we do not need the analytical form of $f_{\lsf}$ to compute $\evaluate[\evaluation_{i}]{f_{\lsf}}$; 
we get the corresponding value by applying standard model checking techniques on the instance $\evaluate[\evaluation_{i}]{\pmdp}$.
Since $\ApproxFunOfProperty{f}$ comes with statistical guarantees, provided that the number of samples $\nsamples$ is large enough, we can use it to analyze various properties the original function $f_{\lsf}$ may satisfy.

Given the vector of parameters $\parameters$ and a degree $d \in \naturals$, we denote by $\parameters^{d}$ the vector of monomials $\parameters^{d} = (\parameters^{\vec{\alpha}})_{\norm[1]{\vec{\alpha}} = d}$, where each monomial $\parameters^{\vec{\alpha}}$ is defined as $\parameters^{\vec{\alpha}} = \parameter_{1}^{\alpha_{1}} \parameter_{2}^{\alpha_{2}} \cdots \parameter_{n}^{\alpha_{n}}$, with $\vec{\alpha} = (\alpha_{1}, \dotsc, \alpha_{n}) \in \naturals^{n}$ and $\norm[1]{\vec{\alpha}} = \sum_{i = 1}^{n} \alpha_{i}$.
Then, we associate a vector $\coefficients_{i}$ of coefficients to each of the monomials in the vector $(\parameters^{i})_{i = 0}^{d}$, obtaining the PAC approximation schema $\approxfun{f}(\parameters) = \sum_{i = 0}^{d} \coefficients_{i} \cdot \parameters^{i}$.
In general, for $n$ parameters and a polynomial of degree $d$, we need $\binom{n + d}{n}$ coefficients.

\begin{markedexample}
\label{ex:polynomialTemplate}
    As an example of the polynomial template, consider again the cloud-service recovery pMDP shown in Figure~\ref{fig:pmdp}; 
    it has two parameters $p$ and $q$, so $n = 2$.
    If we use a quadratic polynomial, we have $d = 2$, so we obtain a template with $\binom{n + d}{n} = \binom{2 + 2}{2} = 6$ coefficients, namely, $\approxfun{f}(p, q) = \coefficients_{0} \cdot 1 + \coefficients_{1} \cdot (p, q) + \coefficients_{2} \cdot (p, q)^{2} = c_{0} \cdot 1 + (c_{11} \cdot p + c_{12} \cdot q) + (c_{21} \cdot p^{2} + c_{22} \cdot p \cdot q + c_{23} \cdot q^{2})$.
\end{markedexample}

Given the error rate $\errorRate$ and the significance level $\significanceLevel$, by Theorem~\ref{thm:PACnumberOfSamples} we need only to independently and identically sample at least $\nsamples \geq \frac{2}{\errorRate} \big(\ln \frac{1}{\significanceLevel} + m) = \frac{2}{\errorRate} \big(\ln \frac{1}{\significanceLevel} + \binom{n + d}{n} + 1 \big)$ points $\SampleDomain{\parametersDomain} = \setnocond{\parameters_{i}}_{i=1}^{\nsamples}$, where $m = \binom{n + d}{n} + 1$ is due to the number $\binom{n + d}{n}$ of coefficients in the polynomial, plus $1$ for $\margin$. 
We use these points to generate the constraints of the following relaxed LP problem, which corresponds to the problem~\eqref{eq:optProblemPAC}, where $\coefficients \in \reals^{\binom{n + d}{n}}$:
\begin{equation}
\label{eq:PACLPpolynomialApproximation}
\begin{split}
    \min\limits_{\coefficients, \margin} \quad & 1 \cdot \margin + 0 \cdot \coefficients \\
	\mathrm{s.t.} \quad &  -\margin \leq f_{\lsf}(\parameters_{i}) - \coefficients \cdot (1, \parameters_{i}, \dotsc, \parameters^{d}_{i})^{T} \leq \margin \qquad \forall \parameters_{i} \in \SampleDomain{\parametersDomain}  \\
        & \margin \geq 0
\end{split}
\end{equation}
We then solve the optimization problem~\eqref{eq:PACLPpolynomialApproximation} to get the coefficients $\coefficients$ that allow $\margin$ to be minimal with respect to the constrains given by the sampled points $\SampleDomain{\parametersDomain}$, hence the $\margin$-PAC approximation $\ApproxFunOfProperty{f}$ of the original function $f_{\lsf}$, with the statistical guarantees given by Theorem~\ref{thm:PACnumberOfSamples} and Definition~\ref{def:PACmodel}.

\begin{markedexample}
\label{ex:lpConstraint}
    The constraints of the problem~\eqref{eq:PACLPpolynomialApproximation} relative to the cloud-service recovery pMDP shown in Figure~\ref{fig:pmdp}, $\lsf = \lP[\max]{\mathord{=}?}[\lF \success]$ (maximum probability of successful recovery), and the quadratic polynomial template shown in Example~\ref{ex:polynomialTemplate} are as follows. 
    For the sampled point/evaluation $(p, q) = (0.7, 0.4)$, the constraint $-\margin \leq f_{\lsf}(\parameters_{i}) - \coefficients \cdot (1, \parameters_{i}, \dotsc, \parameters^{d}_{i})^{T} \leq \margin$ gets instantiated to
    \[
        -\margin \leq 0.397 - (c_{0} + 0.7 \cdot c_{11} + 0.4 \cdot c_{12} + 0.49 \cdot c_{21} + 0.28 \cdot c_{22} + 0.16 \cdot c_{23}) \leq \margin,
    \]
    where we use only three digits for the value $0.397$ of $\lsf$ on the MDP induced by $(p, q) = (0.7, 0.4)$.
    All other constrains are similar, with the only difference being the actual numbers appearing in them, such as the value of $f_{\lsf}(\parameters_{i})$ and the multipliers of the LP variables $c_{i}$. 
\end{markedexample}

\subsubsection*{Discussion on $\margin$ and $\ApproxFunOfProperty{f}$ as solution of the problem~\eqref{eq:PACLPpolynomialApproximation}.}
When we instantiate the problem~\eqref{eq:PACLPpolynomialApproximation} by sampling randomly $\nsamples$ points from $\parametersDomain$ and then solve it, we get as solution a pair $(\margin, \coefficients)$, so we can synthesize the approximation function $\ApproxFunOfProperty{f}$ by instantiating the chosen template with $\coefficients$.
If we instantiate another time the problem~\eqref{eq:PACLPpolynomialApproximation} by sampling randomly again $\nsamples$ points, we get another pair $(\margin', \coefficients')$ that is likely different from $(\margin, \coefficients)$, and so the corresponding function $\ApproxFunOfProperty[\margin']{f}'$ is also different from $\ApproxFunOfProperty{f}$.
This means that by instantiating and solving the problem~\eqref{eq:PACLPpolynomialApproximation} multiple times, we might get different outcomes and one may wonder whether this may cause some problem and whether there are differences in the properties one can derive by using e.g.\@ $\ApproxFunOfProperty[\margin']{f}'$ instead of $\ApproxFunOfProperty{f}$.
As stated in~\cite{DBLP:journals/siamjo/CampiG08,DBLP:journals/arc/CampiGP09}, the margin $\margin$ is actually a random variable, since its value depends on the actual points that have been sampled to construct the problem~\eqref{eq:PACLPpolynomialApproximation}; 
by a similar reasoning, also $\ApproxFunOfProperty{f}$ can be seen as a random variable.
This means that, for $\margin$ and $\ApproxFunOfProperty{f}$ as computed in the first instance of the problem~\eqref{eq:PACLPpolynomialApproximation}, for a point $\parameters \in \parametersDomain \setminus \SampleDomain{\parametersDomain}$, we can have that $\abs{f_{\lsf}(\parameters) - \ApproxFunOfProperty{f}(\parameters)} \leq \margin$ holds, but we can also have the opposite, that is, $\abs{f_{\lsf}(\parameters) - \ApproxFunOfProperty{f}(\parameters)} > \margin$.
However, Theorem~\ref{thm:PACnumberOfSamples} ensures that, if the number of samples $\nsamples$ is large enough, so to respect $\nsamples \geq \frac{2}{\errorRate} \big(\ln \frac{1}{\significanceLevel} + \binom{n + d}{n} + 1 \big)$, then the latter situation occurs with probability at most $\errorRate$, with confidence $1 - \significanceLevel$.
The same happens when we consider $\margin'$ and $\ApproxFunOfProperty[\margin']{f}'$ from the second instance of the problem~\eqref{eq:PACLPpolynomialApproximation}.
As suggested in~\cite{DBLP:journals/arc/CampiGP09}, we can take a very small value for $\significanceLevel$, like $10^{-10}$ or $10^{-20}$: 
this makes the probability of violating $\abs{f_{\lsf}(\parameters) - \ApproxFunOfProperty{f}(\parameters)} \leq \margin$ by an $\errorRate$-fraction of $\parametersDomain$ negligible while only slightly increasing $\nsamples$, since by Theorem~\ref{thm:PACnumberOfSamples} the significance $\significanceLevel$ only provides a logarithmic contribution to the growth of $\nsamples$.

\begin{markedexample}
    For the repair-policy function $f_{\lsf}^{\textsf{r}}(p,q)$ on $\parametersDomain = \interval{0.2}{0.9} \times \interval{0.2}{0.8}$, we instantiated the LP problem~\eqref{eq:PACLPpolynomialApproximation} twice with different random seeds.
    With the linear template and 280 samples, the instance based on the first seed gives
    \[
        \ApproxFunOfProperty{f}(p,q)
        =
        -0.375 + 0.821p + 0.599q,
        \qquad
        \margin = 0.090,
    \]
    while with the second seed we obtain
    \[
        \ApproxFunOfProperty[\margin']{f}'(p,q)
        =
        -0.325 + 0.788p + 0.540q,
        \qquad
        \margin' = 0.098.
    \]
    With the quadratic template and 400 samples, the first seed induces
    \[
        \ApproxFunOfProperty{f}(p,q) = -0.077 -0.380p +0.634q +0.831p^{2}+0.655pq-0.508q^{2},
        \qquad
        \margin = 0.034,
    \]
    while with the second seed we get
    \[
        \ApproxFunOfProperty[\margin']{f}'(p,q)
        = -0.124 -0.311p +0.807q +0.773p^{2}+0.664pq-0.698q^{2},
        \qquad
        \margin' = 0.038.
    \]
    These concrete runs illustrate that both the coefficients and the optimized margin are random outcomes of the sampled scenario problem.
    However, each synthesized pair still carries its own PAC guarantee: 
    outside an $\errorRate$-fraction of $\parametersDomain$, with confidence $1-\significanceLevel$, the corresponding polynomial is within its own computed margin of $f_{\lsf}^{\textsf{r}}$.
\end{markedexample}

\subsection{PRCTL Property Analysis for pMDRM}
\label{subsec:reachabilityPMRMs}

Given the probabilistic formula $\lsf = \lP{=?}(\lpf)$ with path formula $\lpf$,  we can obviously use the PAC approximation $\ApproxFunOfProperty{f}$ to check whether the domain of parameters $\parametersDomain$ is safe, with PAC guarantee.
In this section, we show how a direct PAC based approach can be used to check the safety of the domain, without having to learn the approximations first. 
Then, we consider linear approximations and discuss how counterexamples can be generated in this case before showing how the polynomial PAC approximation $\ApproxFunOfProperty{f}$ can be used to analyze global properties of $f_{\lsf}$ over the whole parameter space $\parametersDomain$.
Lastly, we present how to extend the approach to the reward formula $\lsf = \lER{=?}(\lF \lsf')$.

We use the following definition of safe regions.
\begin{definition}
\label{def:safeRegion}
    Given the domain of parameters $\parametersDomain$, a function $f \colon \parametersDomain \to \posreals$, a safety level $\safetyLevel \in \posreals$, and a comparison $\mathord{\bowtie} \in \setnocond{<, >}$, we say that the point $\parameters \in \parametersDomain$ is \emph{safe} if and only if $f(\parameters) \bowtie \safetyLevel$; 
    we call $\parametersDomain$ safe if and only if each $\parameters \in \parametersDomain$ is safe.
\end{definition}

If we want to analyze whether a pMDRM $\pmdrm$ is safe, i.e., whether it can never make a state property $\lsf = \lQ$ exceed a given safety level $\safetyLevel$, we can either compute $f_{\lsf}$ and then compare it with $\safetyLevel$ on $\parametersDomain$, or we can use the following optimization problem that gives us the answer with a $(\errorRate, \significanceLevel)$-statistical guarantee, provided that we take at least $\nsamples \geq \frac{2}{\errorRate} \cdot (\ln \frac{1}{\significanceLevel} + 2)$ samples to form $\SampleDomain{\parametersDomain} \subseteq \parametersDomain$:
This direct safety check is the degree-zero instance of the scenario-based verification approach for uncertain parametric MDPs by Badings et~al.~\cite{DBLP:journals/sttt/BadingsCJJKT22}: 
it samples parameter valuations and optimizes a constant bound on the sampled property values.
We recall it here to make the later comparison with learned linear and polynomial approximations self-contained.
\begin{equation}
\label{eq:safeRegion}
\begin{split}
    \opt_{\bowtie} \quad & \margin \\
    \mathrm{s.t.} \quad & f_{\lsf}(\parameters) \bowtie \margin \qquad \forall \parameters \in \SampleDomain{\parametersDomain},
\end{split}
\end{equation}
where $\opt_{\bowtie} = \min$ and $\mathord{\bowtie} = \mathord{<}$ if $\lsf = \lQ[\max]$ and $\opt_{\bowtie} = \max$ and $\mathord{\bowtie} = \mathord{>}$ if $\lsf = \lQ[\min]$.
Here, the idea is that $\pmdrm$ is considered safe if, for instance, its maximum probability of reaching some bad state is at most $\margin$, or the minimum expected reward before reaching failure is at least $\margin$. 
Since we use polynomials of degree $0$ in~\eqref{eq:safeRegion}, the expression $\nsamples \geq \frac{2}{\errorRate} \cdot (\ln \frac{1}{\significanceLevel} + m )$ in Theorem~\ref{thm:PACnumberOfSamples} becomes $\nsamples \geq \frac{2}{\errorRate} \cdot (\ln \frac{1}{\significanceLevel} + 2)$ since $m = \binom{n + d}{n} + 1 = \binom{n + 0}{n} + 1 = 2$.

The optimization problem~\eqref{eq:safeRegion} can be solved in time $\bigO(\size{\SampleDomain{\parametersDomain}})$, since it only needs to compute the maximum/minimum value of $f_{\lsf}(\parameters)$, depending on $\mathord{\bowtie} \in \setnocond{<, >}$, for $\parameters \in \SampleDomain{\parametersDomain}$ as the optimal solution $\margin^{*}$.  
Although the calculation is very simple, polynomials with degree 0, i.e., constants, also have good probability and statistical meaning, so we have the following result as a direct consequence of the definitions:
\begin{lemma}
\label{lem:zeroApproximation}
    Given the safety level $\safetyLevel$, if the optimal solution $\margin^{*}$ of the problem~\eqref{eq:safeRegion} satisfies $\margin^{*} \bowtie \safetyLevel$, then the domain $\parametersDomain$ is safe with $(\errorRate, \significanceLevel)$-guarantee.
    Otherwise, if $\margin^{*} \not\bowtie \safetyLevel$, then the parameter point $\parameters^{*} \in \SampleDomain{\parametersDomain}$ corresponding to $\margin^{*}$ is unsafe.
\end{lemma}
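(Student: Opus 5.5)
The plan is to treat the problem~\eqref{eq:safeRegion} as an instance of the scenario optimization problem~\eqref{eq:optProblemPAC}. The decision variable is $\theta = \margin$, and the uncertain parameter is $\omega = \parameters \in \parametersDomain$, sampled according to $P$. The constraint for $\lsf = \lQ[\max]$ is
\[
f_{\parameters}(\margin) = f_{\lsf}(\parameters) - \margin \leq 0 ,
\]
and symmetrically $\margin - f_{\lsf}(\parameters) \leq 0$ for $\min$. Each constraint is affine, hence convex, in $\theta$, and the objective is linear.

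First I will handle the formal mismatch with Theorem~\ref{thm:PACnumberOfSamples}. The strict inequality in~\eqref{eq:safeRegion} would make the optimum not attained, so I read it as its non-strict closure; the optimizer is then simply $\margin^{*} = \max_{\parameters \in \SampleDomain{\parametersDomain}} f_{\lsf}(\parameters)$ (respectively the minimum). This optimum exists because the sample set is finite, and it is unique, so feasibility and uniqueness hold. The sample size $\nsamples \geq \frac{2}{\errorRate}(\ln\frac{1}{\significanceLevel}+2)$ dominates the bound required by the theorem with $m=2$. I count $m=2$ as the paper does, namely the degree-zero coefficient plus $\margin$; the true dimension is $1$, so the bound is conservative. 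Theorem~\ref{thm:PACnumberOfSamples} then yields, with confidence at least $1-\significanceLevel$,
\[
P\bigl(\setcond{\parameters}{f_{\lsf}(\parameters) > \margin^{*}}\bigr) < \errorRate .
\]

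Second, I will combine this with the hypothesis $\margin^{*} \bowtie \safetyLevel$, taking the $\max$ case. Every $\parameters$ with $f_{\lsf}(\parameters) \leq \margin^{*}$ satisfies $f_{\lsf}(\parameters) \leq \margin^{*} < \safetyLevel$ and is therefore safe. Hence the set of safe points has $P$-measure at least $1-\errorRate$ with confidence $1-\significanceLevel$, which is exactly what ``$\parametersDomain$ is safe with $(\errorRate,\significanceLevel)$-guarantee'' means, in analogy with Definition~\ref{def:PACmodel}. The $\min$ case is symmetric, with the inequalities reversed.

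Third, for the negative part I argue deterministically. Since $\margin^{*}$ is the maximum (respectively minimum) of finitely many sampled values, it is attained at some $\parameters^{*} \in \SampleDomain{\parametersDomain}$ with $f_{\lsf}(\parameters^{*}) = \margin^{*}$. If $\margin^{*} \not\bowtie \safetyLevel$, then $f_{\lsf}(\parameters^{*}) \not\bowtie \safetyLevel$, so $\parameters^{*}$ is unsafe by Definition~\ref{def:safeRegion}; no statistics are needed here.

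The main obstacle is the first step: making precise that ``safe with $(\errorRate,\significanceLevel)$-guarantee'' means a measure statement rather than pointwise safety, and justifying the strict versus non-strict inequality and the uniqueness condition needed to apply Theorem~\ref{thm:PACnumberOfSamples}. I would state this interpretation explicitly and note that a tie-break rule is unnecessary because the optimum of a one-dimensional problem is unique.
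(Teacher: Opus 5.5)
Your proposal is correct and matches the argument the paper intends. The paper gives no separate proof and calls the lemma a direct consequence of the definitions: Theorem~\ref{thm:PACnumberOfSamples} applied to the degree-zero problem~\eqref{eq:safeRegion}, whose optimum is the extreme sampled value, gives the positive part, and the fact that $\margin^{*}$ is attained at a sampled point gives the negative part. You spell out details the paper leaves implicit, namely the closure of the strict inequality, uniqueness of the optimum, and the conservative choice $m=2$.
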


By Lemma~\ref{lem:zeroApproximation}, we can analyze with $(\errorRate, \significanceLevel)$-guarantee whether the parameter space is safe or not. 
For example, consider the cloud-service pMDP shown in Figure~\ref{fig:pmdp} and the safety property $\lP[\max]{< 0.9}(\lF \success)$, asking whether the maximum probability of successful recovery stays below the safety level $0.9$ over the parameter domain.
If we set $\errorRate = \significanceLevel = 0.05$, by sampling in the region $\parametersDomain = \interval{0.2}{0.9} \times \interval{0.2}{0.8}$ at least 200 points and solving the resulting optimization problem~\eqref{eq:safeRegion}, a representative sampling gives the optimum value $\margin^{*} \approx 0.856$, attained at about $(p,q) = (0.899,0.789)$, which is below the safety level.
Since $\margin^{*} < 0.9$, by Lemma~\ref{lem:zeroApproximation}, the region $\parametersDomain$ is safe with $(0.05, 0.05)$-guarantee.
Since we have the functions representing $\lP[\max]{=?}(\lF \success)$ from Example~\ref{ex:pmdp}, and they are not too complex, we can also directly inspect them:
the maximum of $\max\setnocond{f_{\lsf}^{\textsf{r}}, f_{\lsf}^{\textsf{m}}}$ over $\parametersDomain$ is attained at $(p,q) = (0.9,0.8)$ and is roughly $0.859$.
Thus the domain is in fact safe for the threshold $0.9$; 
the scenario argument above gives the corresponding PAC-style certificate without relying on this hand calculation.

\subsubsection{Linear PAC Approximation and Counterexamples.}
\label{sssec:1degree}

Since constants can approximate the maximum value of the function $f_{\lsf}$ with the given $(\errorRate, \significanceLevel)$-PAC guarantee, linear functions can also be used to approximate $f_{\lsf}$, which are more precise than constants.
In other words, by using a linear approximation $\ApproxFunOfProperty{f}$ instead of a constant one, as done above, by taking $\nsamples \geq \frac{2}{\errorRate} \cdot (\ln \frac{1}{\significanceLevel} + n + 2)$ samples, we can verify with $(\errorRate, \significanceLevel)$-statistical guarantee whether the function $f_{\lsf}$ is respecting the safety level $\safetyLevel$ on the whole domain $\parametersDomain$, with additional information about where possible counterexamples can be found.
More precisely, since it is easy to find the extreme values for a linear function, by e.g.\@ solving a linear programming problem, we can inspect them to evaluate whether the function $f_{\lsf}$ exceeds $\safetyLevel$, by computing the value $f_{\lsf}(\parameters^{*})$ on the parameters $\parameters^{*}$ corresponding to the extreme values.
Then, we know whether $\parameters^{*}$ is a real or a spurious counterexample; 
in the latter case, we can just add it to the set $\SampleDomain{\parametersDomain}$ and improve the computation of $\ApproxFunOfProperty{f}$.
This approach gives us the following result:
\begin{lemma}
\label{lem:linearApproximation}
    Given a pMDRM $\pmdrm$ and a PRCTL property $\lsf = \lQ$, let $P$ be a probability measure over $\parametersDomain$ and $\ApproxFunOfProperty{f}$ be a linear $\margin$-PAC approximation of $f_{\lsf}$ with $(\errorRate, \significanceLevel)$-guarantee.
    Let $\mathord{\pm} = \mathord{+}$ and $\mathord{\lessgtr} = \mathord{<}$ if $\lsf = \lQ[\max]$, and symmetrically $\mathord{\pm} = \mathord{-}$ and $\mathord{\lessgtr} = \mathord{>}$ if $\lsf = \lQ[\min]$; 
    let $\mp$ and $\gtrless$ denote the opposite of $\pm$ and $\lessgtr$, respectively.
    
    Given the safety level $\safetyLevel \in \posreals$, if for each $\parameters \in \parametersDomain$ we have $\ApproxFunOfProperty{f}(\parameters) \pm \margin \lessgtr \safetyLevel$, then $P(f_{\lsf}(\parameters) \lessgtr \safetyLevel) \geq 1 - \errorRate$ holds with confidence $1 - \significanceLevel$. 
    On the other hand, if $P(\ApproxFunOfProperty{f}(\parameters) \mp \margin \gtrless \safetyLevel) > \errorRate$, then there exists $\parameters \in \parametersDomain$ such that $f_{\lsf}(\parameters) \gtrless \safetyLevel$ holds with confidence $1 - \significanceLevel$.
\end{lemma}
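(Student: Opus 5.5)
The plan is to argue directly from Definition~\ref{def:PACmodel}, by isolating the ``good set''
\[
    G = \setcond{\parameters \in \parametersDomain}{\abs{\ApproxFunOfProperty{f}(\parameters) - f_{\lsf}(\parameters)} \leq \margin}.
\]
Since $\ApproxFunOfProperty{f}$ is a $\margin$-PAC approximation with $(\errorRate, \significanceLevel)$-guarantee, the event $P(G) \geq 1 - \errorRate$ holds with confidence $1 - \significanceLevel$. I will condition on this single event, so both claims inherit the same confidence level $1-\significanceLevel$, and the rest of the argument is deterministic. I will treat the case $\lsf = \lQ[\max]$ (so $\pm = +$ and $\lessgtr = <$); the $\min$ case is symmetric, obtained by flipping signs and inequalities throughout.

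\emph{First claim.} Take any $\parameters \in G$. Then $f_{\lsf}(\parameters) \leq \ApproxFunOfProperty{f}(\parameters) + \margin$. By hypothesis, $\ApproxFunOfProperty{f}(\parameters) + \margin < \safetyLevel$ holds at every point of $\parametersDomain$, so $f_{\lsf}(\parameters) < \safetyLevel$. This gives the inclusion $G \subseteq \setcond{\parameters}{f_{\lsf}(\parameters) < \safetyLevel}$, and monotonicity of $P$ yields $P(f_{\lsf}(\parameters) < \safetyLevel) \geq P(G) \geq 1 - \errorRate$ on the conditioning event. Linearity is not actually used in this step; it only matters algorithmically, because it makes the hypothesis ``for each $\parameters$'' checkable by an LP over the box $\parametersDomain$.

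\emph{Second claim.} Let $B = \setcond{\parameters}{\ApproxFunOfProperty{f}(\parameters) - \margin > \safetyLevel}$ and assume $P(B) > \errorRate$. On the conditioning event, $P(G) \geq 1-\errorRate$, so
\[
    P(B \cap G) \geq P(B) + P(G) - 1 > \errorRate + (1 - \errorRate) - 1 = 0 .
\]
Hence $B \cap G$ is nonempty. For any $\parameters$ in it, $f_{\lsf}(\parameters) \geq \ApproxFunOfProperty{f}(\parameters) - \margin > \safetyLevel$, which gives the required counterexample point, again with confidence $1-\significanceLevel$.

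The main subtlety, rather than an obstacle, is the confidence bookkeeping. The quantity $P(G)$ is itself a random variable, because $\ApproxFunOfProperty{f}$ and $\margin$ depend on the sample. Both conclusions should therefore be stated as holding on the single high-probability sampling event $\{P(G) \geq 1-\errorRate\}$, and not as a union of two separate events, so no extra $\significanceLevel$ is lost. I also need measurability of $G$ and $B$, which holds because $f_{\lsf}$ is piecewise rational and $\ApproxFunOfProperty{f}$ is polynomial, so both sets are Borel.
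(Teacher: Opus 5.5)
Your proof is correct. The first claim follows the paper's route: both arguments rest on the inclusion of the PAC-good set into $\setcond{\parameters}{f_{\lsf}(\parameters) < \safetyLevel}$ together with monotonicity of $P$. Your version is slightly tighter. You use $f_{\lsf} \leq \ApproxFunOfProperty{f} + \margin < \safetyLevel$ on the non-strict good set $G$, which is exactly what Definition~\ref{def:PACmodel} provides. The paper's chain instead passes through $P(\abs{f_{\lsf}(\parameters) - \ApproxFunOfProperty{f}(\parameters)} < \margin)$, a strict inequality that the definition does not directly bound.

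For the second claim the routes differ. The paper argues by contradiction: it assumes $f_{\lsf}(\parameters) < \safetyLevel$ on all of $\parametersDomain$ and uses the good set to deduce $P(\ApproxFunOfProperty{f}(\parameters) - \margin > \safetyLevel) \leq \errorRate$. You argue directly via $P(B \cap G) \geq P(B) + P(G) - 1 > 0$, which is the same argument the paper later uses for Corollary~\ref{cor:SMCPACthreshold}.

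The two arguments are contrapositives of each other, but your direct version gives more:
\begin{itemize}
\item It yields the strict inequality $f_{\lsf}(\parameters) > \safetyLevel$. Negating the paper's hypothesis only gives a point with $f_{\lsf}(\parameters) \geq \safetyLevel$.
\item It shows the counterexamples form a set of positive $P$-measure inside $B$, i.e., where the approximation itself predicts a violation.
\item It makes explicit that both claims hold simultaneously on the single sampling event $\setnocond{P(G) \geq 1 - \errorRate}$, rather than each separately with confidence $1 - \significanceLevel$.
\end{itemize}
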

\begin{proof}
    In the proof below, we consider the case $\lsf = \lQ[\max]$, so $\mathord{\pm} = \mathord{+}$ and $\mathord{\lessgtr} = \mathord{<}$.
    The other case $\lsf = \lQ[\min]$ is just symmetrical.
    
    To prove the statement of the lemma, consider on the one hand that, if the condition $\ApproxFunOfProperty{f}(\parameters) + \margin < \safetyLevel$ holds for each $\parameters \in \parametersDomain$, then we have 
    \begin{align*}
        P(f_{\lsf}(\parameters) < \safetyLevel) 
        & \geq P(f_{\lsf}(\parameters) < \ApproxFunOfProperty{f}(\parameters) + \margin)\\
        & {} \geq P(f_{\lsf}(\parameters) - \ApproxFunOfProperty{f}(\parameters) < \margin) \\
        & {} \geq P(\abs{f(\parameters) - \ApproxFunOfProperty{f}(\parameters)} < \margin).
    \end{align*}
    By the definition of PAC approximation, it follows that $P(f_{\lsf}(\parameters) < \safetyLevel) \geq 1 - \errorRate$ so the parameters space $\parametersDomain$ is safe with confidence $1 - \significanceLevel$. 
    
    On the other hand, we first assume that for each $\parameters \in \parametersDomain$, the condition $f_{\lsf}(\parameters) < \safetyLevel$ holds. 
    Since $\ApproxFunOfProperty{f}$ is a PAC approximation of $f_{\lsf}$ with $(\errorRate, \significanceLevel)$-guarantee, this implies that $P(\abs{\ApproxFunOfProperty{f}(\parameters) - f_{\lsf}(\parameters)} \leq \margin) \geq 1 - \errorRate$ according to Definition~\ref{def:PACmodel}. 
    Moreover, we have $P(f_{\lsf}(\parameters) - \margin \leq \ApproxFunOfProperty{f}(\parameters) \leq f_{\lsf}(\parameters) + \margin) \geq 1 - \errorRate$.
    Therefore, $P(\ApproxFunOfProperty{f}(\parameters) \leq f_{\lsf}(\parameters) + \margin) \geq 1 - \errorRate$ holds. 
    Since $f_{\lsf}(\parameters) < \safetyLevel$, this implies that $P(\ApproxFunOfProperty{f}(\parameters) \leq \safetyLevel + \margin) \geq 1 - \errorRate$, which is equivalent to the inequality $P(\ApproxFunOfProperty{f}(\parameters) > \safetyLevel + \margin) < \errorRate$.
    This contradicts the assumption ``$P(\ApproxFunOfProperty{f}(\parameters) -\margin > \safetyLevel) > \errorRate$'' in the statement of the lemma, thus the condition we assumed ``$\forall \parameters \in \parametersDomain$, the condition $f_{\lsf}(\parameters) < \safetyLevel$ holds'' cannot be true. 
    From this we derive that there exists a point $\parameters \in \parametersDomain$ such that $f_{\lsf}(\parameters) > \safetyLevel$ with confidence $1 - \significanceLevel$, i.e., the domain of parameters $\parametersDomain$ is unsafe, as desired.
\end{proof}

\begin{figure}[tb]
    \centering
    \resizebox{\linewidth}{!}{
        \includegraphics{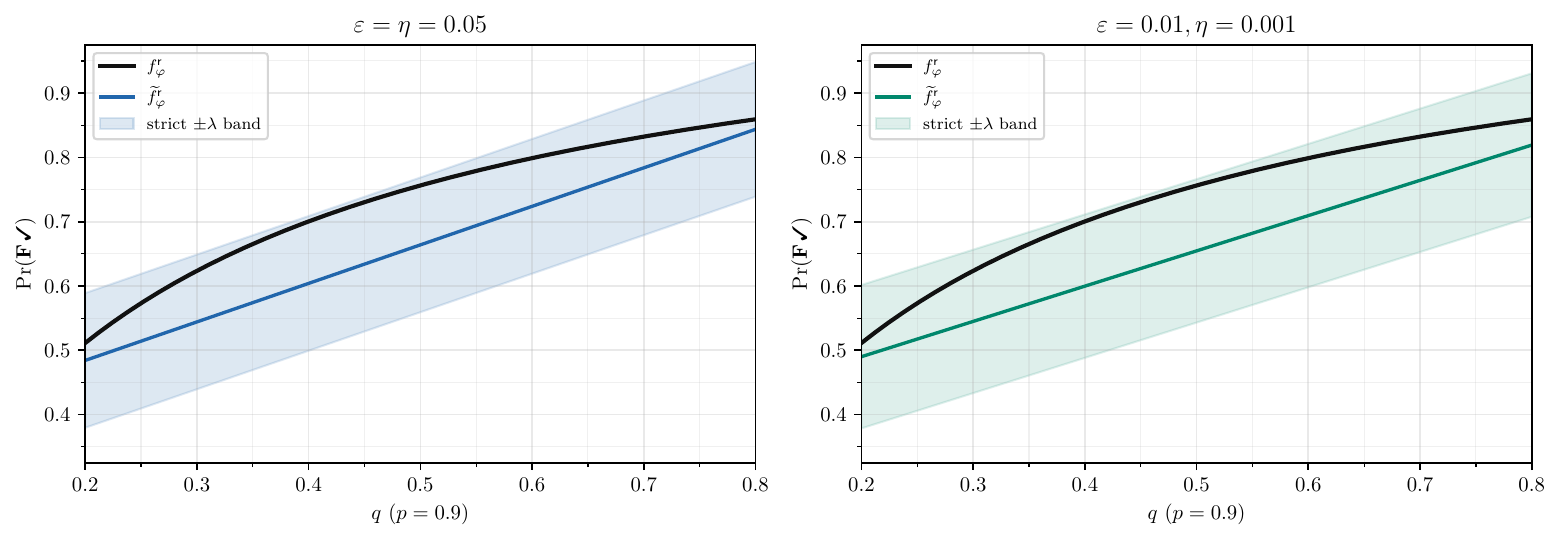}
    }
    \caption{The rational function $f_{\lsf}^{\textsf{r}}(p, q) = \frac{pq}{1-p^{2}+p^{2}q}$ and its linear approximations $\ApproxFunOfProperty{f}(p, q)$ for two choices of $\errorRate$ and $\significanceLevel$; the plots are shown for $p = 0.9$, to achieve the maximum value of $f_{\lsf}^{\textsf{r}}(p, q)$ (cf.~Figure~\ref{fig:pmdp1MultipleFunctions})}
    \label{fig:linearApproximation}
\end{figure}

\begin{markedexample}
\label{ex:linearApproximation}
    The plots in Figure~\ref{fig:linearApproximation} show the results of applying a linear PAC approximation on the policy-specific function $f_{\lsf}^{\textsf{r}}(p, q) = \frac{pq}{1-p^{2}+p^{2}q}$ (cf.~Example~\ref{ex:pmdp}), where $\lsf = \lP{=?}(\lF \success)$, induced by choosing the repair action in the cloud-service pMDP shown in Figure~\ref{fig:pmdp}.
    We sampled 280 points for $\errorRate = \significanceLevel = 0.05$ and 2182 points for $\errorRate = 0.01$ and $\significanceLevel = 0.001$, respectively, according to Theorem~\ref{thm:PACnumberOfSamples}. 
    The plot on the left uses the first statistical choice, while the plot on the right uses the stronger statistical choice; 
    in both plots, we fix the parameter $p = 0.9$ and show the approximation band determined by the computed margin $\margin$.
    For the case $\errorRate = \significanceLevel = 0.05$, the linear approximation is $\ApproxFunOfProperty{f}(p, q) = -0.375 + 0.821p + 0.599q$ with $\margin = 0.090$ by rounding the coefficients to three decimals. 
    We can easily check that for each $(p,q) \in \parametersDomain$ we have $\ApproxFunOfProperty{f}(p,q) + \margin < 0.94$ by linear programming, so $\parametersDomain = \interval{0.2}{0.9} \times \interval{0.2}{0.8}$ is a $0.94$-safe region with respect to $f_{\lsf}^{\textsf{r}}(p,q)$ with $(0.05,0.05)$-guarantee. 
    On the other hand, if we set $\safetyLevel = 0.60$, we can prove that $P(\ApproxFunOfProperty{f} - \margin > \safetyLevel) \approx 0.057 > \errorRate = 0.05$, so by Lemma~\ref{lem:linearApproximation} we get that there exists an unsafe region such that $f_{\lsf}(p,q) > \safetyLevel$, with confidence $95\%$.
    
    We can take advantage of the easy computation of linear programming with linear functions to further search for potential counterexamples that may exist.
    The maximum value of $\ApproxFunOfProperty{f}$ can be found at $(0.9, 0.8)$, according to the linearity of $\ApproxFunOfProperty{f}$, so we can instantiate the cloud-service pMDP in Figure~\ref{fig:pmdp} with the parameter point $(0.9, 0.8)$ to get that $f_{\lsf}^{\textsf{r}}(p,q) \approx 0.859$. 
    Since $f_{\lsf}^{\textsf{r}}(p,q) > 0.60$ for the safety level $\safetyLevel = 0.60$, we can claim that the \emph{real counterexample} $(0.9, 0.8)$ is found.
    In the case that the parameter point $\parameters_{0} = (p,q)$ corresponding to maximum value of $\ApproxFunOfProperty{f}$ is a spurious counterexample for the pMDP with respect to $\lsf$, we can learn a more precise approximation by adding $\parameters_{0}$ to $\SampleDomain{\parametersDomain}$. 
    One may also divide the domain $\parametersDomain$ into several subdomains and analyze each of them separately. 
\end{markedexample}

As for the computational complexity, it is easy to find the maximum value of a linear function by linear programming; 
on the other hand, computing the maximum value of polynomials and rational functions is rather difficult if their degree is very high or the dimension of the parameter space is too large.  
So a linear function is a good alternative to compute the maximum value of $f_{\lsf}$ with PAC guarantee, while polynomials are suitable for analyzing more complicated properties, such as the global ones considered below.

\subsubsection{Polynomial PAC Approximation.}
\label{sssec:polynomialApproximation}

So far, we have seen how to use an approximate polynomial $\ApproxFunOfProperty{f}$ of degree $0$ and $1$ in analyzing properties of the rational function $f_{\lsf}$ for a pMDRM with respect to a state property $\lsf = \lQ$.
In general, one advantage of polynomials over rational functions is that they make it easy to compute complex operations such as inner product and integral~\cite{rudin1976principles}, as needed to evaluate e.g.\@ the $L_{p}$ norm $\norm[p]{g} = \sqrt[p]{\int_{Z} \abs{g(z)}^{p}\,dz}$ of a function $g \colon Z \to \reals$, with $p \geq 1$.
The following lemma links the value of the $L_{p}$ norm of $g$ and the one of its approximation $\approxfun{g}$:
\begin{lemma}
\label{lem:PACnormApproximation}
    Let $Z \subseteq \reals^{n}$ have finite measure, $g \colon Z \to \reals$ be a measurable function, and $\approxfun{g}$ be a polynomial $\margin$-PAC approximation of $g$ with $(\errorRate, \significanceLevel)$-guarantee with respect to the uniform distribution on $Z$.
    Assume moreover that there is a computable global bound $\Gamma \geq \margin$ such that $\abs{g(z) - \approxfun{g}(z)} \leq \Gamma$ for every $z \in Z$.
    Then, for each $p \geq 1$, with confidence $1-\significanceLevel$,
    $\abs{\norm[p]{g} - \norm[p]{\approxfun{g}}}
        \leq
        \sqrt[p]{\size{Z} \big((1-\errorRate) \margin^{p} + \errorRate \Gamma^{p} \big)}$.
    In particular, if $\Gamma = c\margin$ for some $c \geq 1$, then
        $\abs{\norm[p]{g} - \norm[p]{\approxfun{g}}}
        \leq
        \margin \sqrt[p]{\size{Z} \big((1-\errorRate) + \errorRate c^{p}\big)}$.
\end{lemma}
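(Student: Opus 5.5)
We work on the event, of probability at least $1-\significanceLevel$ over the sampling, on which Definition~\ref{def:PACmodel} holds. Writing $G = \setcond{z \in Z}{\abs{g(z) - \approxfun{g}(z)} \leq \margin}$ for the good set and $B = Z \setminus G$ for its complement, this event gives $P(G) \geq 1 - \errorRate$. Since $P$ is uniform on $Z$, $P(A) = \size{A}/\size{Z}$, so $\size{B} \leq \errorRate \size{Z}$. Every subsequent step is deterministic given this event, so the final bound inherits confidence $1-\significanceLevel$. We also need $\approxfun{g}$ (a polynomial) and $g$ to lie in $L_{p}(Z)$; the global bound $\Gamma$ gives $\abs{g} \leq \abs{\approxfun{g}} + \Gamma$, which suffices if $Z$ is bounded. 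If $Z$ is unbounded, we should simply assume both norms are finite.

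The first key step is the reverse Minkowski inequality $\abs{\norm[p]{g} - \norm[p]{\approxfun{g}}} \leq \norm[p]{g - \approxfun{g}}$, valid for $p \geq 1$. This reduces the problem to bounding $\int_{Z} \abs{g - \approxfun{g}}^{p}\,dz$. We split the integral over $G$ and $B$: on $G$ the integrand is at most $\margin^{p}$, and on $B$ it is at most $\Gamma^{p}$. This gives
\[
    \int_{Z} \abs{g - \approxfun{g}}^{p}\,dz \leq \size{G}\margin^{p} + \size{B}\Gamma^{p}.
\]

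The step needing care is replacing $\size{G}$ and $\size{B}$ by $(1-\errorRate)\size{Z}$ and $\errorRate\size{Z}$, because we only know $\size{B} \leq \errorRate\size{Z}$. Write $\size{G} = \size{Z} - \size{B}$, so the bound becomes $\size{Z}\margin^{p} + \size{B}(\Gamma^{p} - \margin^{p})$. Because $\Gamma \geq \margin$, the coefficient $\Gamma^{p} - \margin^{p}$ is nonnegative, so this expression is nondecreasing in $\size{B}$. Substituting $\size{B} \leq \errorRate\size{Z}$ therefore yields $\size{Z}\big((1-\errorRate)\margin^{p} + \errorRate\Gamma^{p}\big)$, and taking $p$-th roots gives the first claim.

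For the special case, substitute $\Gamma = c\margin$ and factor $\margin^{p}$ out of the radicand, which gives $\margin\sqrt[p]{\size{Z}\big((1-\errorRate) + \errorRate c^{p}\big)}$. This is the second claim.
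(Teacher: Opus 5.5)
Your proof is correct and follows the paper's argument step for step. Both use the reverse triangle inequality for $\norm[p]{\cdot}$, split $\int_{Z}\abs{g-\approxfun{g}}^{p}$ over the PAC-good and bad sets, and use $\Gamma \geq \margin$ to raise $\size{B}$ to $\errorRate\size{Z}$. Your rewriting as $\size{Z}\margin^{p} + \size{B}(\Gamma^{p}-\margin^{p})$ simply makes explicit the paper's ``largest when $\size{B} = \errorRate\size{Z}$'' step, and your integrability remark covers a detail the paper leaves implicit.
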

\begin{proof}
    By the reverse triangle inequality for the $L_{p}$ norm, we have that $\abs{\norm[p]{g} - \norm[p]{\approxfun{g}}} \leq \norm[p]{g-\approxfun{g}}$.
    With confidence $1-\significanceLevel$, we can partition the set $Z$ into two sets $G$ and $B$ of good and bad points, respectively, such that $G = \setcond{z \in Z}{\abs{g(z)-\approxfun{g}(z)} \leq \margin}$ and $B = \setcond{z \in Z}{\abs{g(z)-\approxfun{g}(z)} > \margin}$;
    the PAC guarantee ensures that the measure of $B$ is at most $\errorRate \size{Z}$.
    We now split the $L_{p}$ error integral according to $Z = G \cup B$:
    \[
        \norm[p]{g - \approxfun{g}}^{p}
        =
        \int_{G} \abs{g(z) - \approxfun{g}(z)}^{p}\,dz
        +
        \int_{B} \abs{g(z) - \approxfun{g}(z)}^{p}\,dz.
    \]
    On the good set $G$, the PAC error bound gives $\abs{g(z) - \approxfun{g}(z)}^{p} \leq \margin^{p}$.
    On the bad set $B$, the PAC error bound may fail, so we use the known global bound $\Gamma^{p}$ instead.
    Since $\size{B} \leq \errorRate \size{Z}$ and $\Gamma \geq \margin$, the right-hand side is the largest when $\size{B} = \errorRate\size{Z}$, hence
        $\norm[p]{g - \approxfun{g}}^{p}
        \leq
        (1-\errorRate) \size{Z} \margin^{p}
        +
        \errorRate \size{Z} \Gamma^{p}$.
    Taking the $p$-th root establishes the first inequality in the statement of the lemma, and substituting $\Gamma = c\margin$ gives the second one.
\end{proof}
We can thus use higher-degree polynomials to better approximate $f_{\lsf}$ and get more information about its behavior.
For instance, we can check whether $f_{\lsf}$ is within distance $\safetyLevel$ from a desired target value $\targetValue$ with respect to a given norm $\norm[p]{\functionDot}$.
This is useful, for instance, to evaluate how much the behavior of $\pmdrm$ with respect to the property $\lsf$ is affected by the variations of the parameters.
We can model this situation as follows:
\begin{definition}
\label{def:fluctuations}
    Given the domain $\parametersDomain$ of a set of parameters, a function $f_{\lsf} \colon \parametersDomain \to \posreals$, a safety level $\safetyLevel$, and a target value $\targetValue \in \posreals$, we say that \emph{$f$ is near $\targetValue$ within the safety level $\safetyLevel$} on $\parametersDomain$ with respect to the $L_{p}$ norm, if $\norm[p]{f - \targetValue} < \safetyLevel$.
\end{definition}
To verify the above property, we can rely on the following result:
\begin{lemma}
\label{lem:nearMargin}
    Given the domain $\parametersDomain$ of a set of parameters, a function $f_{\lsf} \colon \parametersDomain \to \posreals$, a safety level $\safetyLevel$, and a target value $\targetValue \in \posreals$, let $\size{\parametersDomain} = \int_{\parametersDomain} 1\,d\parameters$, $M$ be an upper bound of $f_{\lsf}(\parametersDomain)$, and $\ApproxFunOfProperty{f}$ be a $\margin$-PAC approximation of $f_{\lsf}$ with $(\errorRate, \significanceLevel)$-guarantee.
    For each $p \geq 1$, if $\ApproxFunOfProperty{f}$ satisfies the condition 
    \begin{equation}
    \label{eq:polyUpperbound}
        \sqrt[p]{ \left( \margin \sqrt[p]{(1 - \errorRate) \cdot \size{\parametersDomain} } + \norm[p]{\ApproxFunOfProperty{f} - \targetValue} \right)^{p} + \errorRate \cdot \size{\parametersDomain} \cdot \max(\abs{M - \targetValue}^{p}, \targetValue^{p})} < \safetyLevel
    \end{equation}
    then $\norm[p]{f_{\lsf} - \targetValue} < \safetyLevel$ holds with confidence $1 - \significanceLevel$.
\end{lemma}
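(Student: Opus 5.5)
The plan is to split the parameter domain into a good set and a bad set, as in the proof of Lemma~\ref{lem:PACnormApproximation}, and bound the $L_{p}$ integral of $f_{\lsf} - \targetValue$ separately on each. Throughout we condition on the event, of probability at least $1-\significanceLevel$, on which the PAC guarantee of Definition~\ref{def:PACmodel} holds. On this event we set $G = \setcond{\parameters \in \parametersDomain}{\abs{\ApproxFunOfProperty{f}(\parameters) - f_{\lsf}(\parameters)} \leq \margin}$ and $B = \parametersDomain \setminus G$. Since $P$ is uniform, $\size{B} \leq \errorRate \size{\parametersDomain}$ and $\size{G} \geq (1-\errorRate)\size{\parametersDomain}$. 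We then write
\[
    \norm[p]{f_{\lsf} - \targetValue}^{p} = \int_{G} \abs{f_{\lsf} - \targetValue}^{p}\,d\parameters + \int_{B} \abs{f_{\lsf} - \targetValue}^{p}\,d\parameters ,
\]
and it suffices to show that the right-hand side is at most the $p$-th power of the left-hand side of \eqref{eq:polyUpperbound}.

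For the bad part, we use only that $0 \leq f_{\lsf} \leq M$, which holds because $f_{\lsf}$ is nonnegative and $M$ is an upper bound. Then $f_{\lsf}(\parameters) - \targetValue \in \interval{-\targetValue}{M - \targetValue}$, so
\[
    \abs{f_{\lsf}(\parameters) - \targetValue}^{p} \leq \max(\abs{M-\targetValue}^{p}, \targetValue^{p}).
\]
Integrating over $B$ gives at most $\errorRate \size{\parametersDomain} \max(\abs{M-\targetValue}^{p}, \targetValue^{p})$, which is exactly the second summand under the outer root.

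For the good part, we apply Minkowski's inequality on $L_{p}(G)$ to the decomposition $f_{\lsf} - \targetValue = (f_{\lsf} - \ApproxFunOfProperty{f}) + (\ApproxFunOfProperty{f} - \targetValue)$. This gives
\[
    \Big(\int_{G} \abs{f_{\lsf} - \targetValue}^{p}\Big)^{1/p} \leq \margin \size{G}^{1/p} + \norm[p]{\ApproxFunOfProperty{f} - \targetValue},
\]
where we used the pointwise bound $\margin$ on $G$ and enlarged the second integral from $G$ to all of $\parametersDomain$. Raising to the $p$-th power and adding the bound on $B$ yields the inequality $\norm[p]{f_{\lsf} - \targetValue} \leq (\text{left-hand side of \eqref{eq:polyUpperbound}}) < \safetyLevel$, provided $\size{G}^{1/p}$ can be replaced by $\sqrt[p]{(1-\errorRate)\size{\parametersDomain}}$.

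This replacement is the main obstacle. The PAC guarantee gives $\size{G} \geq (1-\errorRate)\size{\parametersDomain}$, which is the wrong direction for an upper bound. The fix is to avoid bounding $\size{G}$ and $\size{B}$ separately and instead trade measure between them, as in Lemma~\ref{lem:PACnormApproximation}. Let $\gamma = \size{B}/\size{\parametersDomain} \leq \errorRate$, write $K = \max(\abs{M-\targetValue}^{p}, \targetValue^{p})$, and view the total bound as a function of $\gamma$. The bad term contributes $\gamma \size{\parametersDomain} K$. In the good term, we keep the full-domain norm $\norm[p]{\ApproxFunOfProperty{f} - \targetValue}$ but estimate the $\margin$-contribution via $\size{G} = (1-\gamma)\size{\parametersDomain}$. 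We then argue that moving measure from $G$ to $B$ can only increase the bound, because on $B$ the integrand is bounded by the worst case $K$, which should dominate $\margin^{p}$ in any situation where the condition is informative. This makes $\gamma = \errorRate$ the extremal case.

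If that monotonicity argument proves delicate (it needs $K \geq \margin^{p}$, or a comparable comparison), I will use the following fallback. On any set of measure at most $\errorRate\size{\parametersDomain}$ that must be re-assigned from $G$ to $B$, the integrand is bounded by $K$. Those points can therefore be counted in the $B$-term, so the $\margin$-term ranges over a set of measure exactly $(1-\errorRate)\size{\parametersDomain}$. If this bookkeeping also fails, I will check whether the statement needs the mild extra assumption $\margin^{p} \leq K$ and state it explicitly.
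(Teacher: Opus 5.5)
Your fallback is exactly the paper's proof, so the proposal goes through, but only along that route. The paper takes the PAC bad set $B_{0}$, of measure at most $\errorRate\size{\parametersDomain}$, and enlarges it to some $B \supseteq B_{0}$ with $\size{B} = \errorRate\size{\parametersDomain}$. This is possible because $\parametersDomain$ is a box and Lebesgue measure has no atoms. Setting $G = \parametersDomain \setminus B$ then gives a set that still lies inside the good set and has measure exactly $(1-\errorRate)\size{\parametersDomain}$. Minkowski on $G$, together with the pointwise bound $\abs{f_{\lsf} - \targetValue}^{p} \leq K = \max(\abs{M-\targetValue}^{p}, \targetValue^{p})$ on $B$, then yields the claim.

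You should lead with this argument and drop the monotonicity route. The fallback needs no extra hypothesis: moving points from $G$ to $B$ leaves the integrand $\abs{f_{\lsf} - \targetValue}^{p}$ unchanged and only switches which valid bound is applied to it. Since $0 \leq f_{\lsf} \leq M$ holds everywhere, the bound $K$ is valid on any reassigned point. So you never need the bound at $\gamma = \errorRate$ to dominate the bound at the true $\gamma$, and you never compare $K$ with $\margin^{p}$.

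The monotonicity argument is also mis-stated. Write $a = ((1-\gamma)\size{\parametersDomain})^{1/p}$ and $N = \norm[p]{\ApproxFunOfProperty{f} - \targetValue}$. The derivative in $\gamma$ of your bound $(\margin a + N)^{p} + \gamma\size{\parametersDomain}K$ is $\size{\parametersDomain}\,(K - \margin(\margin + N/a)^{p-1})$. For $p > 1$ and $N > 0$ this requires more than $K \geq \margin^{p}$. So the extra assumption $\margin^{p} \leq K$ you contemplate would not rescue that route, and the lemma does not need it.
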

\begin{proof}
    Similarly to the proof of Lemma~\ref{lem:PACnormApproximation}, with confidence $1-\significanceLevel$, the PAC guarantee gives a measurable bad set $B_{0}$ of measure at most $\errorRate \size{\parametersDomain}$ on which the error may exceed $\margin$.
    Since $\parametersDomain$ is a box in our applications, we can enlarge this bad set, if needed, to a measurable set $B \supseteq B_{0}$ with $\size{B} = \errorRate \size{\parametersDomain}$ and define $G = \parametersDomain \setminus B$.
    Then $\size{G} = (1-\errorRate) \size{\parametersDomain}$ and $\abs{f_{\lsf}(\parameters) - \ApproxFunOfProperty{f}(\parameters)} \leq \margin$ for every $\parameters \in G$.
    In the following sequence of (in)equalities, we motivate between them how to obtain the next term in the sequence.
    \begin{align*}
        & \norm[p]{f_{\lsf} - \targetValue} \\
        \intertext{by definition of the $L_{p}$ norm we get}
        = & \sqrt[p]{\int_{\parametersDomain} \abs{f_{\lsf}(\parameters) - \targetValue}^{p}\,d\parameters}\\
        \intertext{and by splitting the integral region into two parts we obtain}
        = & \sqrt[p]{\int_{G} \abs{f_{\lsf}(\parameters) - \targetValue}^{p}\,d\parameters + \int_{B} \abs{f_{\lsf}(\parameters) - \targetValue}^{p}\,d\parameters}  \\
        \intertext{where $\parametersDomain = G \uplus B$ and $G$ is chosen as above, so that $\size{G}=(1-\errorRate)\size{\parametersDomain}$ and for each $\parameters \in G$, we have $\abs{f_{\lsf}(\parameters) - \ApproxFunOfProperty{f}(\parameters)} \leq \margin$. 
        Then by the known triangular inequality of the $L_{p}$ norm, we derive}
        \leq & \sqrt[p]{\left( \sqrt[p]{\int_{G} \abs{f_{\lsf}(\parameters) - \ApproxFunOfProperty{f}(\parameters)}^{p}\,d\parameters} + \sqrt[p]{\int_{G} \abs{\ApproxFunOfProperty{f}(\parameters) - \targetValue}^{p}\,d\parameters} \right)^{p} + \int_{B} \abs{f_{\lsf}(\parameters) - \targetValue}^{p}\,d\parameters}\\
        \intertext{and the construction of $G$ from the PAC-good set implies}
        \leq & \sqrt[p]{\left(\sqrt[p]{(1 - \errorRate) \size{\parametersDomain} \margin^{p}} + \sqrt[p]{\int_{\parametersDomain} \abs{\ApproxFunOfProperty{f}(\parameters) - \targetValue}^{p}\,d\parameters} \right)^{p} + \int_{B} \abs{f_{\lsf}(\parameters) - \targetValue}^{p}\,d\parameters}\\
        \intertext{The first term is obtained by taking $\margin^{p}$ out of the $p$-th root, the second integral over $G$ is upper-bounded by the integral over all of $\parametersDomain$, and the last integral is bounded using $\size{B} = \errorRate\size{\parametersDomain}$ and $0 \leq f_{\lsf}(\parameters) \leq M$.}
        \leq & \sqrt[p]{\left(\margin \sqrt[p]{(1 - \errorRate) \cdot \size{\parametersDomain} } + \norm[p]{\ApproxFunOfProperty{f} - \targetValue} \right)^{p} + \errorRate \cdot \size{\parametersDomain} \cdot \max(\abs{M - \targetValue}^{p}, \targetValue^{p})}. 
    \end{align*}
    Since by the lemma assumption we have that 
    \[
        \sqrt[p]{\left(\margin \sqrt[p]{(1 - \errorRate) \cdot \size{\parametersDomain} } + \norm[p]{\ApproxFunOfProperty{f} - \targetValue} \right)^{p} + \errorRate \cdot \size{\parametersDomain} \cdot \max(\abs{M - \targetValue}^{p}, \targetValue^{p})} < \safetyLevel
    \] 
    holds, it follows that the property $\norm[p]{f_{\lsf} - \targetValue} < \safetyLevel$ is satisfied as well, with confidence $1-\significanceLevel$.
\end{proof}

\begin{markedexample}
\label{ex:probabilityLimited}
    Consider again the repair policy in the cloud-service pMDP shown in Figure~\ref{fig:pmdp} and $\lsf = \lP{=?}(\lF \success)$; 
    since $f_{\lsf}$ represents probabilities, we have the well-known upper bound $M = 1$.
    Here we consider the $L_{2}$ norm, which is widely used in describing the error between functions in the signal processing field (see, e.g.,~\cite{boggess2015first,conway2019course}), as it can reflect the global approximation properties and is easy to compute. 
    To simplify the notation, let $\upperboundFXBeta$ denote the complex expression occurring in the formula~\eqref{eq:polyUpperbound}, that is:
    \[
        \upperboundFXBeta(\ApproxFunOfProperty{f}, \parametersDomain, \targetValue) = \sqrt{ \left(\margin \sqrt{(1 - \errorRate) \cdot \size{\parametersDomain}} + \norm[2]{\ApproxFunOfProperty{f} - \targetValue}\right)^{2} + \errorRate \cdot \size{\parametersDomain} \cdot \max(\abs{1-\targetValue}^{2}, \targetValue^{2})}.
    \] 
    We want to know whether $f_{\lsf}^{\textsf{r}}(p, q) = \frac{pq}{1-p^{2}+p^{2}q}$ is near $0.45$ within $0.20$, i.e., given the safety level $\safetyLevel = 0.20$, we want to check $\norm[2]{f_{\lsf}^{\textsf{r}} - 0.45} < 0.20$.
    According to Lemma~\ref{lem:nearMargin}, we first compute a PAC approximation $\ApproxFunOfProperty{f}$ of $f_{\lsf}$. 
    By setting $\errorRate = \significanceLevel = 0.05$, we get the quadratic polynomial 
    $\ApproxFunOfProperty{f}(p, q) = -0.073 - 0.405p + 0.623q + 0.873p^{2} + 0.603pq - 0.463q^{2}$, by rounding to three decimals. 
    In this case, we get $\upperboundFXBeta(\ApproxFunOfProperty{f}, \parametersDomain, \targetValue) = 0.183 < \safetyLevel = 0.20$, so Lemma~\ref{lem:nearMargin} applies. 
    If, instead, we would have chosen $\safetyLevel' = 0.18$, then we cannot prove $\norm[2]{f_{\lsf}^{\textsf{r}} - 0.45} < 0.18$ by relying on Lemma~\ref{lem:nearMargin}. 
    To do so, we need to consider more samples or a higher-degree approximation in order to decrease the approximation margin and the norm term in~\eqref{eq:polyUpperbound}.
\end{markedexample}

The condition about $M$ in the statement of Lemma~\ref{lem:nearMargin} makes the lemma always applicable to the rational function $f_{\lsf}$ for the property $\lsf = \lP{\mathord{=}?}[\lpf]$, since such a function represents the probability of satisfying $\lpf$, so $M = 1$ is a proper upper bound for $f_{\lsf}$ that can be used independently of the underlying pMDP we are analyzing.
Instead, when we consider the property $\lsf = \lER{\mathord{=}?}[\lF \lsf']$, we have no suitable value for the upper bound $M$ that we can use in all cases.
In fact, by definition $\lER{\mathord{=}?}[\lF \lsf'] = \infty$ if $\lsf'$ cannot be satisfied almost surely under the $\opt$imizing policy.
Moreover, even if we assume that $\lP{\mathord{=}1}[\lF \lsf'] = 1$, then we have no value of $M$ that we can use in all cases.

\begin{markedexample}
\label{ex:rewardNotLimited}
    Consider again the pMDP shown in Figure~\ref{fig:pmdp} and the reward structure $\mreward = (\mreward_{\mstates}, \mreward_{\mactions})$ defined as
    \[
        \mreward_{\mstates}(s) = 
        \begin{cases}
            1000 & \text{if $s = s_{3}$} \\
            0 & \text{otherwise}
        \end{cases}
        \quad \text{and} \quad
        \mreward_{\mactions}(s, a) = 
        \begin{cases}
            10 & \text{if $(s,a) = (s_{1}, \mathsf{m})$} \\
            3 & \text{if $(s,a) = (s_{1}, \mathsf{r})$} \\
            0 & \text{otherwise}
        \end{cases}
    \]
    already seen in Example~\ref{ex:rewardStructure}. 
    For the reward property $\lsf = \lER{\mathord{=}?}[\lF (\success \lor \failure)]$, we have that it holds almost surely and the two rational functions corresponding to choosing \textsf{r} and \textsf{m} in $s_{1}$ are $f_{\lsf}^{\textsf{r}} = \frac{1000(1 - pq - p^{2} + p^{2}q) + 3p}{1 - p^{2} + p^{2}q}$ and $f_{\lsf}^{\textsf{m}} = \frac{1000(1 - pq - p^{2} + 2p^{2}q)+10p}{1 - pq + p^{2}q}$, respectively.
    On the domain $\interval{0.2}{0.9} \times \interval{0.2}{0.8}$, these functions have $M_{\textsf{r}} \approx 959$, attained at $(p,q) = (0.2,0.2)$, and $M_{\textsf{m}} \approx 993$, attained at $(p,q) = (0.2,0.8)$, respectively, as upper bounds. 
    If we would change the reward structure, for instance by setting $\mreward_{\mstates}(s_{3}) = 2000$, then the upper bounds would change to $M'_{\textsf{r}} \approx 1918$, attained at $(p,q) = (0.2,0.2)$, and $M'_{\textsf{m}} \approx 1984$, attained at $(p,q) = (0.2,0.8)$, respectively, showing that the upper bound $M$ used in Lemma~\ref{lem:nearMargin} strictly depends on the actual pMDRM $\pmdrm$ we are analyzing.
\end{markedexample}

\subsubsection{Generalization for Reward Properties.}

The constructions given above can be applied to both the functions corresponding to probability and reward properties:
for instance, we can approximate the rational function representing the state property $\lsf = \lER{\mathord{= ?}}(\lF \lsf')$, the reward counterpart of $\lP{\mathord{=}?}(\lpf')$, by instantiating $f_{\lsf}(\parameters_{i})$ in Problem~\eqref{eq:PACLPpolynomialApproximation} with the expected reward value computed on the pMDP instantiated with $\parameters_{i}$.
Similarly, we can compute linear and polynomial PAC approximations for safe regions, with the latter defined in terms of the value of the reward instead of the probability.
The only result that does not apply easily to rewards is Lemma~\ref{lem:nearMargin}, as it needs an upper bound of the reward property that is not known in advance.

To generalize this result, we can consider also the following case:
given a pMDRM $\pmdrm$, we want to verify whether the expected value of $\lsf = \lER{=?}(\lF \lsf')$ over the parameters $\parameters$, denoted $f_{\lsf}(\parameters)$, can reach a given threshold $\threshold$. 
This model the situations where, to make a decision, we need to know whether the expectation of the rewards for a certain decision satisfies the given conditions.
We formalize this case as follows:
\begin{definition}
\label{def:integralAboveThreshold}
    Given the domain $\parametersDomain$ of a set of parameters, a function $f_{\lsf} \colon \parametersDomain \to \posreals$, a threshold $\threshold \in \posreals$, and a probability measure $P$ over $\parametersDomain$, we say that \emph{the expectation of $f_{\lsf}$} on $\parametersDomain$ with respect to $P$ can reach the threshold $\threshold$, if 
    \begin{equation}
    \label{eq:integralThreshold}
        \int_{\parametersDomain} f_{\lsf}(\parameters)\,dP(\parameters) > \threshold.
    \end{equation}
\end{definition}

We can resort to the following lemma to check condition~\eqref{eq:integralThreshold}:
\begin{lemma}
\label{lem:integralThreshold}
    Given the domain $\parametersDomain$ of a set of parameters, a probability measure $P$ over $\parametersDomain$, a function $f_{\lsf} \colon \parametersDomain \to \posreals$, and a threshold $\threshold \in \posreals$, let $\ApproxFunOfProperty{f}$ be a $\margin$-PAC approximation of $f$ with $(\errorRate, \significanceLevel)$-guarantee. 
    If $\ApproxFunOfProperty{f}$ satisfies the condition 
    \begin{equation}
    \label{eq:lowerboundofrewardmodel}
        \int_{\parametersDomain}(\ApproxFunOfProperty{f}(\parameters) - \margin)\,dP(\parameters) - \errorRate \cdot \size{\parametersDomain} \cdot \max_{\parameters \in \parametersDomain}(\ApproxFunOfProperty{f}(\parameters) - \margin) > \threshold,
    \end{equation}
    then $\int_{\parametersDomain} f_{\lsf}(\parameters)\,dP(\parameters) > \threshold$ holds with confidence $1 - \significanceLevel$.
\end{lemma}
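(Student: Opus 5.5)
The plan is to imitate the good/bad splitting used in the proofs of Lemma~\ref{lem:PACnormApproximation} and Lemma~\ref{lem:nearMargin}, but this time to obtain a \emph{lower} bound on the integral of $f_{\lsf}$. First, with confidence $1-\significanceLevel$, Definition~\ref{def:PACmodel} lets me partition $\parametersDomain = G \uplus B$. Here $G = \setcond{\parameters \in \parametersDomain}{\abs{\ApproxFunOfProperty{f}(\parameters) - f_{\lsf}(\parameters)} \leq \margin}$ and $B$ is its complement, with $P(B) \leq \errorRate$. All subsequent steps are deterministic consequences of this event, so the final conclusion inherits confidence $1-\significanceLevel$.

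Second, I bound the two parts of the integral separately. On $G$ we have $f_{\lsf}(\parameters) \geq \ApproxFunOfProperty{f}(\parameters) - \margin$. On $B$ I give up on the approximation and use only that $f_{\lsf}$ takes values in $\posreals$, so $f_{\lsf}(\parameters) \geq 0$. Hence
\[
    \int_{\parametersDomain} f_{\lsf}\,dP \;\geq\; \int_{G} (\ApproxFunOfProperty{f} - \margin)\,dP \;=\; \int_{\parametersDomain} (\ApproxFunOfProperty{f} - \margin)\,dP - \int_{B} (\ApproxFunOfProperty{f} - \margin)\,dP .
\]
The last integral is bounded above by the measure of $B$ times $\max_{\parameters \in \parametersDomain}(\ApproxFunOfProperty{f}(\parameters) - \margin)$. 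This maximum exists because $\ApproxFunOfProperty{f}$ is a polynomial on a compact box. Substituting this bound gives the left-hand side of~\eqref{eq:lowerboundofrewardmodel}, which by hypothesis exceeds $\threshold$.

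Third, I have to check the sign in the step ``$\int_B \leq \text{measure}(B)\cdot\max$''. That step is only valid as an upper bound of the required form when the maximum is nonnegative. If $\max(\ApproxFunOfProperty{f} - \margin) \leq 0$, then the whole left-hand side of~\eqref{eq:lowerboundofrewardmodel} is at most $\int (\ApproxFunOfProperty{f} - \margin)\,dP \leq 0 \leq \threshold$, since $\threshold \in \posreals$. So the hypothesis forces the maximum to be positive, and enlarging the measure bound on $B$ only weakens the estimate in the safe direction.

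The main obstacle I expect is reconciling the normalisation: the statement writes $\errorRate \cdot \size{\parametersDomain}$ rather than $\errorRate$. I would follow the convention already used in Lemma~\ref{lem:nearMargin}, treating the integral as taken with respect to the (unnormalised) uniform/Lebesgue measure on the box, so that the bad set has measure at most $\errorRate\size{\parametersDomain}$. Alternatively, when $P$ is a probability measure, I would note that $P(B) \leq \errorRate \leq \errorRate\size{\parametersDomain}$ whenever $\size{\parametersDomain} \geq 1$. Since the maximum is positive by the previous step, replacing $P(B)$ by this larger quantity preserves the inequality. I would state explicitly which of the two readings is being used, so that the final chain of inequalities is sound.
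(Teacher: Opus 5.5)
Your decomposition is exactly the paper's. You drop the bad set using $f_{\lsf} \geq 0$, bound $f_{\lsf}$ below by $\ApproxFunOfProperty{f} - \margin$ on the good set, and rewrite the good-set integral as the full integral minus the bad-set integral. You then bound the latter by $\errorRate \cdot \size{\parametersDomain} \cdot M$, with $M = \max_{\parameters \in \parametersDomain}(\ApproxFunOfProperty{f}(\parameters) - \margin)$. The paper takes this last step without looking at the sign of $M$. You are right that it needs $M \geq 0$ (the case $M = 0$ is trivial), but your argument excluding $M < 0$ has the inequality backwards. If $M \leq 0$ then $-\errorRate \size{\parametersDomain} M \geq 0$, so the left-hand side of~\eqref{eq:lowerboundofrewardmodel} is \emph{at least} $\int_{\parametersDomain}(\ApproxFunOfProperty{f} - \margin)\,dP$, not at most. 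What works is to bound that integral by $M$ times the total mass of the measure. In your unnormalised reading (Lebesgue measure, $P$ uniform) the left-hand side is then at most $(1 - \errorRate)\size{\parametersDomain} M \leq 0 \leq \threshold$. So the hypothesis does force $M > 0$, and the rest of your chain is sound.

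Your second reading does not survive this case. There $P$ is a probability measure and $\size{\parametersDomain} \geq 1$, so the same estimate only yields $(1 - \errorRate\size{\parametersDomain}) M$. This is positive when $M < 0$ and $\errorRate\size{\parametersDomain} > 1$, and in that regime the statement is genuinely false. Take $P$ uniform on a box with $\errorRate\size{\parametersDomain} > 1 + \threshold$, $f_{\lsf} \equiv 0$, $\ApproxFunOfProperty{f} \equiv 0$ and $\margin = 1$; this is a valid $1$-PAC approximation, since the error is $0$ everywhere. Then $M = -1$ and the left-hand side of~\eqref{eq:lowerboundofrewardmodel} is $\errorRate\size{\parametersDomain} - 1 > \threshold$, while $\int_{\parametersDomain} f_{\lsf}\,dP = 0$. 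So either commit to the unnormalised reading, or add $\errorRate\size{\parametersDomain} \leq 1$ or $M \geq 0$ as a hypothesis.

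The condition $M \geq 0$ holds automatically for the pair $(\coefficients, \margin)$ returned by problem~\eqref{eq:PACLPpolynomialApproximation}. Because the template has a constant term, optimality forces some sample with $\ApproxFunOfProperty{f}(\parameters_{i}) - \margin = f_{\lsf}(\parameters_{i}) \geq 0$; otherwise shifting the constant coefficient up would allow a smaller $\margin$. It can fail, however, for an arbitrary $\margin$-PAC approximation whose margin has been enlarged.
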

\begin{proof}
    By splitting the integral region $\parametersDomain$ into two parts $\parametersDomain = \parametersDomain_{1} \uplus \parametersDomain_{2}$ where $\parametersDomain_{1}$ is such that $P(\parametersDomain_{1}) \geq 1 - \errorRate$ and for each $\parameters \in \parametersDomain_{1}$ we have $\abs{f_{\lsf}(\parameters) - \ApproxFunOfProperty{f}(\parameters)} \leq \margin$, as in the proof of Lemma~\ref{lem:nearMargin}, we have
    \[
        \int_{\parametersDomain} f_{\lsf}(\parameters)\,dP(\parameters) = \int_{\parametersDomain_{1}} f_{\lsf}(\parameters)\,dP(\parameters) + \int_{\parametersDomain_{2}} f_{\lsf}(\parameters)\,dP(\parameters).
    \]
    Since we have $f_{\lsf}(\parameters) \geq 0$ for each $\parameters \in \parametersDomain$ by definition of $f_{\lsf}$, it follows that $\int_{\parametersDomain_{2}} f(\parameters)\,dP(\parameters) \geq 0$. 
    This implies that
    \begin{align*}
        \int_{\parametersDomain} f_{\lsf}(\parameters)\,dP(\parameters) & \geq \int_{\parametersDomain_{1}} f_{\lsf}(\parameters)\,dP(\parameters) \\
        & \geq \int_{\parametersDomain_{1}} (\ApproxFunOfProperty{f}(\parameters) - \margin)\,dP(\parameters) \\
        & = \int_{\parametersDomain} (\ApproxFunOfProperty{f}(\parameters) - \margin)\,dP(\parameters) - \int_{\parametersDomain_{2}} (\ApproxFunOfProperty{f}(\parameters) - \margin)\,dP(\parameters) \\
        & \geq \int_{\parametersDomain} (\ApproxFunOfProperty{f}(\parameters) - \margin)\,dP(\parameters) - \errorRate \cdot \size{\parametersDomain} \cdot \max_{\parameters \in \parametersDomain_{2}}(\ApproxFunOfProperty{f}(\parameters) - \margin) \\ 
        & \geq \int_{\parametersDomain} (\ApproxFunOfProperty{f}(\parameters) - \margin)\,dP(\parameters) - \errorRate \cdot \size{\parametersDomain} \cdot \max_{\parameters \in \parametersDomain}(\ApproxFunOfProperty{f}(\parameters) - \margin).
    \end{align*}
    This means that if the approximation polynomial $\ApproxFunOfProperty{f}$ satisfies 
    \[
        \int_{\parametersDomain} (\ApproxFunOfProperty{f}(\parameters) - \margin)\,dP(\parameters) - \errorRate \cdot \size{\parametersDomain} \cdot \max_{\parameters \in \parametersDomain}(\ApproxFunOfProperty{f}(\parameters) - \margin) \geq \threshold,
    \]
    then 
    \[
        \int_{\parametersDomain} f_{\lsf}(\parameters)\,dP(\parameters) \geq \int_{\parametersDomain} (\ApproxFunOfProperty{f}(\parameters) - \margin)\,dP(\parameters) - \errorRate \cdot \size{\parametersDomain} \cdot \max_{\parameters \in \parametersDomain}(\ApproxFunOfProperty{f}(\parameters) - \margin) \geq \threshold
    \]
    holds with confidence $1 - \significanceLevel$, as required.
\end{proof}

\subsection{pMDRM Specific PRCTL Analysis}
\label{ssed:pMDRMspecificPRCTLanalysis}

One main difference between pMCs and pMDRMs is that for a given PRCTL property $\lsf = \lQ$, for the former the function $f_{\lsf}$ has a single expression while for the latter the function $f_{\lsf}$ is likely to be defined piece-wise, since for different subregions of $\parametersDomain$ the policy $\opt$imizing $\lQ$ can be different, so is the corresponding expression of $f_{\lsf}$.
Although each fixed-policy branch is a rational function on a well-posed parameter domain, the optimized value function for a pMDRM may fail to be smooth at the boundaries where the optimizing policy changes.
To see this, consider again the two functions $f_{\lsf}^{\textsf{r}}$ and $f_{\lsf}^{\textsf{m}}$ shown in Figure~\ref{fig:pmdp1MultipleFunctions}:
the value of $\max\setnocond{f_{\lsf}^{\textsf{r}}, f_{\lsf}^{\textsf{m}}}$ bounces at the policy switch points, as clearly visible along $p = 0.9$. 
The following results therefore do not base the choice of the policy on the smoothness of the induced functions; 
instead, they compare fixed-policy branches directly and use the PAC approximation of their differences to identify which policy gives the highest value for $\lsf$ on a given parameter region.

\subsubsection*{Locally better policy.}
By fixing a specific policy, the pMDP degenerates into a pMC, so we can easily calculate the PAC approximation of the induced rational function for $\lsf$, 
which represents the property of interest.
In general, for a pMDP, there are multiple policies we can use to resolve nondeterminism; 
among a pair of them, we want to choose the one providing the higher reachability probability under some parameter conditions. 
This can be achieved by means of the following lemma:
\begin{lemma}
\label{lem:betterPolicy}
    Given a pMDRM $\pmdrm$, let $\parametersDomain$ be the domain of its parameters and $P$ be a probability measure over $\parametersDomain$. 
    Given a PRCTL state formula $\lsf$ and two policies $\policy$ and $\policy'$, let $f_{\lsf}$ and $f'_{\lsf}$ be the two rationals functions corresponding to $\lsf$ when computed on the induced parametric Markov chains $\applypolicy{\pmdrm}$ and $\applypolicy[\policy']{\pmdrm}$, respectively.
    Let $\ApproxFunOfDiff{g}$ be a $\margin$-PAC approximation of $f_{\lsf} - f'_{\lsf}$ with $(\errorRate, \significanceLevel)$-guarantee.

    \begin{enumerate}
    \item
        If for each parameter $\parameters \in \parametersDomain$, we have that
        \begin{equation}
        \label{eq:pmdppolicy}
            \ApproxFunOfDiff{g}(\parameters) - \margin > 0
        \end{equation}
        holds, then the functions $f$ and $f'$ satisfy the condition $f_{\lsf}(\parameters) > f'_{\lsf}(\parameters)$ with $(\errorRate, \significanceLevel)$-guarantee, that is, $P(f_{\lsf}(\parameters) > f'_{\lsf}(\parameters)) \geq 1 - \errorRate$ holds with confidence $1 - \significanceLevel$.
    \item
        In turn, if $P(\ApproxFunOfDiff{g}(\parameters) + \margin \leq 0) > \errorRate$, then there exists at least a choice of parameters $\parameters^{*}$ in the parameter space such that $f(\parameters^{*}) \leq f'(\parameters^{*})$ with confidence $1 - \significanceLevel$, that is, the policy $\policy'$ is better than policy $\policy$ at the parameter $\parameters^{*}$ with confidence $1 - \significanceLevel$.
    \end{enumerate}
\end{lemma}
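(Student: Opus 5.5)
The plan is to reduce both items to the argument already used for Lemma~\ref{lem:linearApproximation}, applied to the difference function $h = f_{\lsf} - f'_{\lsf}$ with safety level $\safetyLevel = 0$. Fixing $\policy$ and $\policy'$ turns $\pmdrm$ into the pMCs $\applypolicy{\pmdrm}$ and $\applypolicy[\policy']{\pmdrm}$. Their values $f_{\lsf}$ and $f'_{\lsf}$ are rational functions, well defined on $\parametersDomain$ by Assumption~\ref{asmt:wellPosedDomain}, so $h$ is a measurable real function on $\parametersDomain$. By hypothesis $\ApproxFunOfDiff{g}$ is a $\margin$-PAC approximation of $h$ in the sense of Definition~\ref{def:PACmodel}. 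With confidence $1-\significanceLevel$ we therefore have a good set $G = \setcond{\parameters \in \parametersDomain}{\abs{\ApproxFunOfDiff{g}(\parameters) - h(\parameters)} \leq \margin}$ with $P(G) \geq 1 - \errorRate$. I would state once that all subsequent reasoning happens on this confidence event, so that every conclusion inherits the same confidence $1-\significanceLevel$.

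For item~1, take any $\parameters \in G$. Then $h(\parameters) \geq \ApproxFunOfDiff{g}(\parameters) - \margin$, and by condition~\eqref{eq:pmdppolicy} the right-hand side is $> 0$, so $f_{\lsf}(\parameters) > f'_{\lsf}(\parameters)$. Hence $\setcond{\parameters}{f_{\lsf}(\parameters) > f'_{\lsf}(\parameters)} \supseteq G$. Monotonicity of $P$ then gives $P(f_{\lsf}(\parameters) > f'_{\lsf}(\parameters)) \geq P(G) \geq 1-\errorRate$. This is the same chain of inequalities as in the first half of the proof of Lemma~\ref{lem:linearApproximation}, with the $\max$ case and $\safetyLevel = 0$. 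Note that the argument never uses linearity of $\ApproxFunOfDiff{g}$.

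For item~2, I would argue directly rather than by contradiction. Let $A = \setcond{\parameters}{\ApproxFunOfDiff{g}(\parameters) + \margin \leq 0}$. Since $P(A) > \errorRate \geq P(\parametersDomain \setminus G)$, we get $P(A \cap G) \geq P(A) - P(\parametersDomain \setminus G) > 0$, so $A \cap G \neq \emptyset$. Pick $\parameters^{*} \in A \cap G$. Then $h(\parameters^{*}) \leq \ApproxFunOfDiff{g}(\parameters^{*}) + \margin \leq 0$, that is, $f_{\lsf}(\parameters^{*}) \leq f'_{\lsf}(\parameters^{*})$. This says $\policy'$ is at least as good as $\policy$ at $\parameters^{*}$, on the confidence event. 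It mirrors the second half of Lemma~\ref{lem:linearApproximation}: there the contradiction was phrased as ``if $h > 0$ everywhere then $P(\ApproxFunOfDiff{g} + \margin \leq 0) \leq \errorRate$''. I may present it that way for uniformity with the paper.

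The main obstacle is not computational but one of careful bookkeeping about what is random. The guarantee ``with confidence $1-\significanceLevel$'' attaches to the sampled pair $(\ApproxFunOfDiff{g}, \margin)$, so both items must be stated conditionally on the event that $P(G) \geq 1-\errorRate$. A second point needs care: the set $A$ in item~2 is defined through the random approximation itself. This is harmless because, once we condition on the good event, $A$ is a fixed measurable set (a polynomial sublevel set). I would make this conditioning explicit at the start so that the two deterministic arguments above go through cleanly.
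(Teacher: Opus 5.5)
Your proposal is correct and matches the paper's proof. Item~1 is the same good-set and monotonicity argument. For item~2, your direct bound $P(A \cap G) \geq P(A) - P(\parametersDomain \setminus G) > 0$ is the contrapositive of the paper's proof by contradiction, and it is the same reasoning the paper uses for Corollary~\ref{cor:SMCPACthreshold}. Your version is slightly tidier: it avoids the paper's unjustified strict inequality between probabilities, and it exhibits the witness $\parameters^{*}$ inside the PAC-good set.
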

\begin{proof}
    We prove the first statement of the lemma as follows. 
    Since $\ApproxFunOfDiff{g}$ is a $\margin$-PAC approximations of $f_{\lsf} - f'_{\lsf}$ with $(\errorRate, \significanceLevel)$-guarantee, namely, $P(\ApproxFunOfDiff{g}(\parameters) - \margin \leq f_{\lsf}(\parameters) - f'_{\lsf}(\parameters) \leq \ApproxFunOfDiff{g}(\parameters) + \margin) \geq 1 - \errorRate$ holds with confidence $1 - \significanceLevel$, so it can be easily deduced that $P(f_{\lsf}(\parameters) - f'_{\lsf}(\parameters) \geq \ApproxFunOfDiff{g}(\parameters) - \margin) \geq 1 - \errorRate$.
    Since the condition $\ApproxFunOfDiff{g}(\parameters) - \margin > 0$ is satisfied, the following inequalities can be easily derived with confidence $1 - \significanceLevel$:
    \[
        P(f_{\lsf}(\parameters) - f'_{\lsf}(\parameters) > 0) \geq P(f_{\lsf}(\parameters) - f'_{\lsf}(\parameters) \geq \ApproxFunOfDiff{g}(\parameters) - \margin) \geq 1 - \errorRate.
    \]

    Consider now the second statement of the lemma; 
    we prove it by contradiction: 
    under the hypothesis that $P(\ApproxFunOfDiff{g}(\parameters) + \margin \leq 0) > \errorRate$, suppose that for all parameters $\parameters \in \parametersDomain$, we have that $f(\parameters) > f'(\parameters)$ holds with the confidence $1 - \significanceLevel$.
    Since $\ApproxFunOfDiff{g}$ is a $\margin$-PAC approximation of $f_{\lsf} - f'_{\lsf}$ with $(\errorRate, \significanceLevel)$-guarantee, we have that $P(\abs{f_{\lsf}(\parameters) - f'_{\lsf}(\parameters) -\ApproxFunOfDiff{g}(\parameters)} \leq \margin) \geq 1 - \errorRate$ holds; 
    this implies that 
    \[
        P(f_{\lsf}(\parameters) - f'_{\lsf}(\parameters) - \margin \leq \ApproxFunOfDiff{g}(\parameters) \leq f_{\lsf}(\parameters) - f'_{\lsf}(\parameters) + \margin) \geq 1 - \errorRate,
    \]
    hence $P(\ApproxFunOfDiff{g}(\parameters) + \margin \geq f_{\lsf}(\parameters) - f'_{\lsf}(\parameters)) \geq 1 - \errorRate$ holds as well. 
    Since by assumption for all parameters $\parameters \in \parametersDomain$, $f_{\lsf}(\parameters) > f'_{\lsf}(\parameters)$ holds with confidence $1 - \significanceLevel$,
    we have that
    \[
        P(\ApproxFunOfDiff{g}(\parameters) + \margin > 0) > P(\ApproxFunOfDiff{g}(\parameters) + \margin > f_{\lsf}(\parameters) - f'_{\lsf}(\parameters)) \geq 1 - \errorRate,
    \]
    which is equivalent to $P(\ApproxFunOfDiff{g}(\parameters) + \margin \leq 0) \leq \errorRate$;
    this however contradicts the hypothesis that ``$P(\ApproxFunOfDiff{g}(\parameters) + \margin \leq 0) > \errorRate$'', so the assumption ``for all parameters $\parameters \in \parametersDomain$, we have that $f(\parameters) > f'(\parameters)$ holds with the confidence $1 - \significanceLevel$'' is false, thus there exists at least a choice of parameters $\parameters^{*}$ in the parameter space such that $f(\parameters^{*}) \leq f'(\parameters^{*})$ holds with confidence $1 - \significanceLevel$, as desired.
\end{proof}

\begin{figure}[tb]
    \centering
    \resizebox{\linewidth}{!}{
        \begin{tikzpicture}
            \node (rat) at (0,0) {\includegraphics{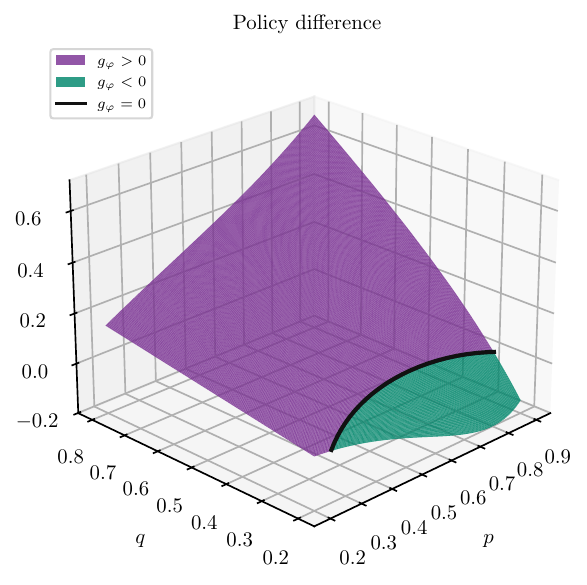}};
            \node[anchor=west] (apprx) at (rat.east) {\includegraphics{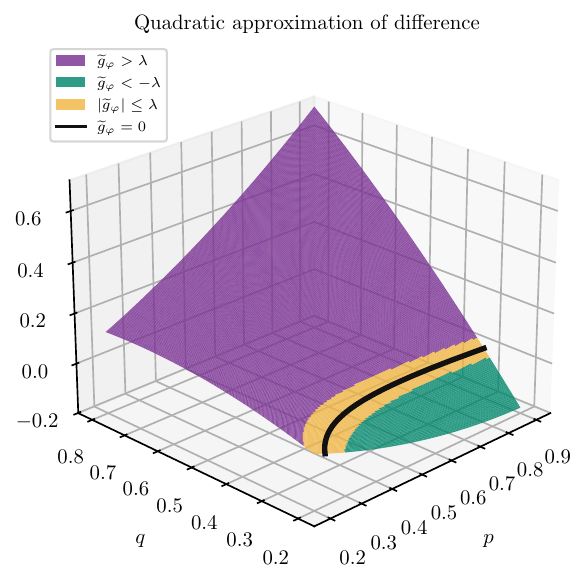}};
        \end{tikzpicture}
    }
    \caption{The difference $g_{\lsf} = f_{\lsf}^{\textsf{r}} - f_{\lsf}^{\textsf{m}}$ (on the left) and its quadratic approximation (on the right). The purple regions have positive difference and therefore favor repair, the teal regions have negative difference and therefore favor migration, and the amber band in the approximation marks the area where $\abs{\ApproxFunOfDiff{g}(\parameters)} \leq \margin$}
    \label{fig:pmdp1MultipleFunctionsDifference}
\end{figure}
\begin{markedexample}
\label{ex:betterPolicy}
    From Lemma~\ref{lem:betterPolicy}, we can know with a high confidence and low error rate that if the inequality~\eqref{eq:pmdppolicy} is satisfied, then the policy $\policy$ gives a higher value for the satisfaction of $\lsf$ than the policy $\policy'$.
    When we apply the lemma to the pMDP shown in Figure~\ref{fig:pmdp}, on the domain of parameters $\range(p) = \interval{0.2}{0.9}$ and $\range(q) = \interval{0.2}{0.8}$ we know from Example~\ref{ex:pmdp} that the two rational functions for $\lsf = \lP[\max]{\mathord{=}?}(\lF \success)$ are 
    \[
        f_{\lsf}^{\textsf{r}} = \frac{pq}{1-p^{2}+p^{2}q} 
        \qquad\qquad
        f_{\lsf}^{\textsf{m}} = \frac{p^{2}(1-q)}{1-pq+p^{2}q}
    \]
    as induced by the policies $\policy^{\textsf{r}}$ and $\policy^{\textsf{m}}$ choosing \textsf{r} and \textsf{m} in $s_{1}$, respectively, that we computed in Example~\ref{ex:pmdp} (depicted in Figure~\ref{fig:pmdp1MultipleFunctions}).
    The left plot of Figure~\ref{fig:pmdp1MultipleFunctionsDifference} shows the difference $g_{\lsf}(p,q) = f_{\lsf}^{\textsf{r}}(p,q) - f_{\lsf}^{\textsf{m}}(p,q)$ between the two functions; 
    in the plot we also mark where their difference crosses the $0$ level by a solid black line, which corresponds to the policy switch shown in Figure~\ref{fig:pmdp1MultipleFunctions} from the difference-function perspective.
    Purple positive values of $g_{\lsf}$ indicate the repair-dominant region, teal negative values indicate the migration-dominant region, and values close to zero are the parameter valuations where the two policies are hard to distinguish.
    The plot on the right shows a quadratic PAC approximation of $g_{\lsf}$; 
    because it is an approximation with margin $\margin$, conclusions about which policy is better are robust away from the zero level set but become uncertain in the narrow amber band where $\abs{\ApproxFunOfDiff{g}}$ is below $\margin$.
    The analytical expression of the difference between the two functions is
    \[
        g_{\lsf}(p,q) = f_{\lsf}^{\textsf{r}} - f_{\lsf}^{\textsf{m}}
        =
        \frac{p(p^{3}q^{2}-2p^{3}q+p^{3}+p^{2}q^{2}-pq^{2}+pq-p+q)}{(1-p^{2}+p^{2}q)(1-pq+p^{2}q)},
    \]
    so the better policy switches along the zero set of the numerator.
    If we take $g_{\lsf}(p,q)$ and compute its approximation $\ApproxFunOfDiff{g}$, we can easily see for the computed margin $\margin \approx 0.029$ that the condition~\eqref{eq:pmdppolicy} is satisfied for instance at the point $(p, q) = (0.7, 0.4)$, for which we have $\ApproxFunOfDiff{g}(0.7, 0.4) - 0.029 \approx 0.073$.
    On the other hand, the condition is violated for e.g.\@ the point $(p, q) = (0.7, 0.2)$, for which we have $\ApproxFunOfDiff{g}(0.7, 0.2) - 0.029 \approx -0.186$.
    If we consider instead the condition $P(\ApproxFunOfDiff{g}(\parameters) + \margin \leq 0) > \errorRate$ in the second point of the lemma, then we get that $P(\ApproxFunOfDiff{g}(\parameters) + 0.029 \leq 0) \approx 0.11 > 0.05$, so there exists at least a choice of parameters $\parameters^{*}$ such that $f_{\lsf}^{\textsf{r}}(\parameters^{*}) \leq f_{\lsf}^{\textsf{m}}(\parameters^{*})$ with confidence $1 - \significanceLevel$: 
    this happens on the migration-dominant side of the policy-switch curve, i.e., the teal parts of the plots in Figure~\ref{fig:pmdp1MultipleFunctionsDifference}.
\end{markedexample}

\subsubsection*{Overall better policy.}
As we have seen from the previous example, different policies can give very different outcomes on the behavior of the pMDP.
If we have to choose only one policy to be applied to the pMDP to solve nondeterminism independently on the actual value of the parameters, we need to understand the overall performance of different policies over the entire parameter space to help us choose the policy that on average performs better overall.
Consider again the previous example, where for the cloud-service pMDP shown in Figure~\ref{fig:pmdp} there are two possible policies, namely $\policy^{\textsf{r}}$ and $\policy^{\textsf{m}}$, whose corresponding reachability probability functions of the goal state are $f_{\lsf}^{\textsf{r}}$ and $f_{\lsf}^{\textsf{m}}$, respectively.
We say that the policy $\policy^{\textsf{r}}$ is better than $\policy^{\textsf{m}}$ with respect to a given probability measure $P$ over $\parametersDomain$, if the inequality $\int_{\parametersDomain} f_{\lsf}^{\textsf{r}}(\parameters)\,dP(\parameters) \geq \int_{\parametersDomain} f_{\lsf}^{\textsf{m}}(\parameters)\,dP(\parameters)$ holds, that is, the expected value of $f_{\lsf}^{\textsf{r}}$ over the whole domain $\parametersDomain$ is larger than the expected value of $f_{\lsf}^{\textsf{m}}$.

Since computing and evaluating the rational functions $f_{\lsf}^{\textsf{r}}$ and $f_{\lsf}^{\textsf{m}}$ can be difficult, we consider instead their PAC approximations $\ApproxFunOfProperty{f}^{\textsf{r}}$ and $\ApproxFunOfProperty{f}^{\textsf{m}}$; 
the following lemma allows us to determine the better policy over the whole parameter space, with statistical guarantees.

\begin{lemma}
\label{lem:betterpolicyoverall}
    Given a pMDRM with domain of parameters $\parametersDomain$, a PRCTL property $\lsf = \lQ$, two policies $\policy$ and $\policy'$, and their corresponding functions $f_{\lsf} \colon \parametersDomain \to \posreals$ and $f'_{\lsf} \colon \parametersDomain \to \posreals$,
    let $\ApproxFunOfDiff{g}$ be a $\margin$-PAC approximation of $g_{\lsf} = f_{\lsf} - f'_{\lsf}$ with $(\errorRate, \significanceLevel)$-guarantee.
    If the inequality
    \begin{equation}
    \label{eq:pmdpMargin}
        \int_{\parametersDomain} \ApproxFunOfDiff{g}(\parameters)\,dP(\parameters) \geq (\max_{\parameters \in \parametersDomain} \ApproxFunOfDiff{g}(\parameters) - \inf_{\parameters \in \parametersDomain} g_{\lsf}(\parameters)) \cdot \errorRate \cdot \size{\parametersDomain} + \margin  \cdot \size{\parametersDomain}
    \end{equation}
    holds, where $\size{\parametersDomain} = \int_{\parametersDomain} 1\,d\parameters$, then the inequality
    \[
        \int_{\parametersDomain} f_{\lsf}(\parameters)\,dP(\parameters) \geq \int_{\parametersDomain} f'_{\lsf}(\parameters)\,dP(\parameters)
    \]
    holds with confidence $1 - \significanceLevel$.
\end{lemma}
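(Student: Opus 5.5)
The plan is to show $\int_{\parametersDomain} g_{\lsf}(\parameters)\,dP(\parameters) \geq 0$, which by linearity of the integral is exactly $\int_{\parametersDomain} f_{\lsf}\,dP \geq \int_{\parametersDomain} f'_{\lsf}\,dP$. I would follow the same splitting argument as in the proofs of Lemma~\ref{lem:nearMargin} and Lemma~\ref{lem:integralThreshold}. With confidence $1-\significanceLevel$, the PAC guarantee for $\ApproxFunOfDiff{g}$ lets us write $\parametersDomain = G \uplus B$. Here $G = \setcond{\parameters \in \parametersDomain}{\abs{g_{\lsf}(\parameters) - \ApproxFunOfDiff{g}(\parameters)} \leq \margin}$ is the good set and $B$ is the bad set. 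The bound on the bad set is $P(B) \leq \errorRate$, and, adopting the same convention as Lemma~\ref{lem:integralThreshold} for relating $P$ to $\size{\parametersDomain}$, it reads $P(B) \leq \errorRate \cdot \size{\parametersDomain}$. All subsequent inequalities are meant to hold on this confidence event.

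The key steps, in order, are as follows.

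First, on $G$ use the pointwise lower bound $g_{\lsf} \geq \ApproxFunOfDiff{g} - \margin$. On $B$, where the PAC bound may fail, use the trivial bound $g_{\lsf} \geq \inf_{\parametersDomain} g_{\lsf}$. This gives
\[
\int_{\parametersDomain} g_{\lsf}\,dP \geq \int_{G} \ApproxFunOfDiff{g}\,dP - \margin P(G) + P(B)\inf_{\parametersDomain} g_{\lsf}.
\]

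Second, rewrite $\int_{G} \ApproxFunOfDiff{g}\,dP$ as $\int_{\parametersDomain} \ApproxFunOfDiff{g}\,dP - \int_{B} \ApproxFunOfDiff{g}\,dP$. Then bound $\int_{B} \ApproxFunOfDiff{g}\,dP \leq P(B)\max_{\parametersDomain}\ApproxFunOfDiff{g}$. The maximum exists because $\ApproxFunOfDiff{g}$ is a polynomial on the compact box $\parametersDomain$. Collecting terms yields
\[
\int_{\parametersDomain} g_{\lsf}\,dP \geq \int_{\parametersDomain}\ApproxFunOfDiff{g}\,dP - \big(\max_{\parametersDomain}\ApproxFunOfDiff{g} - \inf_{\parametersDomain} g_{\lsf}\big)P(B) - \margin P(G).
\]

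Third, replace $P(B)$ by $\errorRate\size{\parametersDomain}$ and $P(G)$ by $\size{\parametersDomain}$. Under hypothesis~\eqref{eq:pmdpMargin}, the right-hand side is then nonnegative, which concludes the proof.

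The main obstacle is the sign bookkeeping in this last substitution. Replacing $P(B)$ by its upper bound $\errorRate\size{\parametersDomain}$ is only legitimate when the coefficient $\max_{\parametersDomain}\ApproxFunOfDiff{g} - \inf_{\parametersDomain} g_{\lsf}$ is nonnegative. So I would first argue this nonnegativity: on $G$, which is nonempty since $P(G) \geq 1-\errorRate > 0$ for $\errorRate < 1$, we have $\ApproxFunOfDiff{g} \geq g_{\lsf} - \margin$, but that alone does not give it. If the coefficient turns out to be negative, I would handle that case separately: then $g_{\lsf} > \ApproxFunOfDiff{g}$ everywhere, so $g_{\lsf} \geq \ApproxFunOfDiff{g} \geq \ApproxFunOfDiff{g} - \margin$ on all of $\parametersDomain$. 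Integrating directly gives $\int g_{\lsf}\,dP \geq \int \ApproxFunOfDiff{g}\,dP - \margin\size{\parametersDomain} \geq 0$ by~\eqref{eq:pmdpMargin}, again using the lemma's convention $P(\parametersDomain) \leq \size{\parametersDomain}$.

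The remaining care point is the consistent treatment of $P$ versus $\size{\parametersDomain}$. Following Lemma~\ref{lem:integralThreshold}, $\margin P(G)$ is bounded by $\margin\size{\parametersDomain}$ and $P(B)$ by $\errorRate\size{\parametersDomain}$. I would state this convention explicitly at the start of the proof, so that every inequality above is read with respect to the same normalization.
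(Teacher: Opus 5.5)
Your main case is the paper's argument. You split $\parametersDomain$ into the PAC-good set and its complement and use $g_{\lsf} \geq \ApproxFunOfDiff{g} - \margin$ on the good set. You then bound the bad-set contributions via $\max_{\parametersDomain}\ApproxFunOfDiff{g}$ and $\inf_{\parametersDomain} g_{\lsf}$, and substitute $\errorRate\size{\parametersDomain}$ and $\size{\parametersDomain}$. Merging the two bad-set terms into the single coefficient $c = \max_{\parametersDomain}\ApproxFunOfDiff{g} - \inf_{\parametersDomain} g_{\lsf}$ is more careful than the paper. The paper bounds $\int_{\parametersDomain_{> \margin}} \ApproxFunOfDiff{g}\,dP$ and $\int_{\parametersDomain_{> \margin}} g_{\lsf}\,dP$ separately, and so silently needs both $\max_{\parametersDomain}\ApproxFunOfDiff{g} \geq 0$ and $\inf_{\parametersDomain} g_{\lsf} \leq 0$; you only need $c \geq 0$.

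The step that fails is your treatment of $c < 0$. There, hypothesis~\eqref{eq:pmdpMargin} only yields $\int_{\parametersDomain}\ApproxFunOfDiff{g}\,dP - \margin\size{\parametersDomain} \geq c\,\errorRate\size{\parametersDomain}$, and the right-hand side is now negative. So ``$\geq 0$ by~\eqref{eq:pmdpMargin}'' does not follow, and it can be false. Take $g_{\lsf} \equiv a$, $\ApproxFunOfDiff{g} \equiv a - \margin/2$, $a = \frac{3-\errorRate}{2}\margin$ and $P(\parametersDomain) = \size{\parametersDomain}$. Then the hypothesis holds with equality, while $\int_{\parametersDomain}\ApproxFunOfDiff{g}\,dP - \margin\size{\parametersDomain} = -\frac{\errorRate\margin}{2}\size{\parametersDomain} < 0$.

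The lemma's conclusion still holds in this case, but you need the quantitative content of ``$g_{\lsf} > \ApproxFunOfDiff{g}$ everywhere''. That content is $g_{\lsf} - \ApproxFunOfDiff{g} \geq \inf_{\parametersDomain} g_{\lsf} - \max_{\parametersDomain}\ApproxFunOfDiff{g} = -c$ on all of $\parametersDomain$, which gives
\[
    \int_{\parametersDomain} g_{\lsf}\,dP \geq \int_{\parametersDomain} \ApproxFunOfDiff{g}\,dP - c\,\size{\parametersDomain} \geq -c\,(1-\errorRate)\size{\parametersDomain} + \margin\size{\parametersDomain} \geq 0 .
\]
The first inequality needs $P(\parametersDomain) = \size{\parametersDomain}$, not just your convention $P(\parametersDomain) \leq \size{\parametersDomain}$, because the coefficient $-c$ is positive. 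So fix the normalization to the one the paper's proof actually uses when it writes $\int_{\parametersDomain_{\leq \margin}} \margin\,dP = \margin \size{\parametersDomain_{\leq \margin}}$. Under that reading, $dP$ is Lebesgue measure and the PAC bound reads $\size{B} \leq \errorRate\size{\parametersDomain}$. Under a normalized probability $P$, your substitutions $P(B) \leq \errorRate\size{\parametersDomain}$ and $P(G) \leq \size{\parametersDomain}$ would in general require $\size{\parametersDomain} \geq 1$. That already fails for the running example, where $\size{\parametersDomain} = 0.42$.
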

\begin{proof}
    We partition the parameter space $\parametersDomain$ into two disjoint parts $\parametersDomain = \parametersDomain_{\leq \margin} \cup \parametersDomain_{> \margin}$ where $\parametersDomain_{\leq \margin} = \setcond{\parameters \in \parametersDomain}{\abs{g_{\lsf}(\parameters) - \ApproxFunOfDiff{g}(\parameters)} \leq \margin}$ and $\parametersDomain_{> \margin} = \setcond{\parameters \in \parametersDomain}{\abs{g_{\lsf}(\parameters) - \ApproxFunOfDiff{g}(\parameters)} > \margin}$, which satisfies $P(\parametersDomain_{\leq \margin}) \geq 1 - \errorRate$ and $P(\parametersDomain_{> \margin}) < \errorRate$ with confidence $1 - \significanceLevel$.
    Then we have that
    \[
        \int_{\parametersDomain_{\leq \margin}} g_{\lsf}(\parameters)\,dP(\parameters) = \int_{\parametersDomain} g_{\lsf}(\parameters)\,dP(\parameters) - \int_{\parametersDomain_{> \margin}} g_{\lsf}(\parameters)\,dP(\parameters);
    \]
    a similar equality holds when $\ApproxFunOfDiff{g}$ is used instead of $g_{\lsf}$. 
    This implies that
    \begin{align*}
        \int_{\parametersDomain_{\leq \margin}} g_{\lsf}(\parameters)\,dP(\parameters) \geq {} & \int_{\parametersDomain_{\leq \margin}} (\ApproxFunOfDiff{g}(\parameters) - \margin)\,dP(\parameters) \\
        {} = {} & \int_{\parametersDomain_{\leq \margin}} \ApproxFunOfDiff{g}(\parameters)\,dP(\parameters) - \int_{\parametersDomain_{\leq \margin}} \margin \,dP(\parameters) \\
        {} = {} & \int_{\parametersDomain_{\leq \margin}} \ApproxFunOfDiff{g}(\parameters)\,dP(\parameters) - \margin \cdot \size{\parametersDomain_{\leq \margin}} \\
        {} \geq {} & \int_{\parametersDomain_{\leq \margin}} \ApproxFunOfDiff{g}(\parameters)\,dP(\parameters) - \margin \cdot \size{\parametersDomain} \\
        {} = {} & \int_{\parametersDomain} \ApproxFunOfDiff{g}(\parameters)\,dP(\parameters) - \int_{\parametersDomain_{> \margin}} \ApproxFunOfDiff{g}(\parameters)\,dP(\parameters) - \margin \cdot \size{\parametersDomain} \\
        {} \geq {} & \int_{\parametersDomain} \ApproxFunOfDiff{g}(\parameters)\,dP(\parameters) - (\max_{\parameters \in \parametersDomain} \ApproxFunOfDiff{g}(\parameters)) \cdot \errorRate \cdot \size{\parametersDomain} - \margin \cdot \size{\parametersDomain}. 
    \end{align*}
    with confidence $1-\significanceLevel$. Moreover, we have that 
    \[
        \int_{\parametersDomain_{> \margin}} g_{\lsf}(\parameters)\,dP(\parameters) \geq (\inf_{\parameters \in \parametersDomain} g_{\lsf}(\parameters)) \cdot \errorRate \cdot \size{\parametersDomain}
    \]
    holds with confidence $1-\significanceLevel$. 
    By combining all inequalities together, we have that
    \begin{align*}
        \int_{\parametersDomain} g_{\lsf}(\parameters)\,dP(\parameters) \geq {} & \int_{\parametersDomain} \ApproxFunOfDiff{g}(\parameters)\,dP(\parameters) - \margin \cdot \size{\parametersDomain} - (\max_{\parameters \in \parametersDomain} \ApproxFunOfDiff{g}(\parameters)) \cdot \errorRate \cdot \size{\parametersDomain} + (\inf_{\parameters \in \parametersDomain} g_{\lsf}(\parameters) \cdot \errorRate \cdot \size{\parametersDomain} \\
        {} = {} & \int_{\parametersDomain} \ApproxFunOfDiff{g}(\parameters)\,dP(\parameters) - \margin \cdot \size{\parametersDomain} - (\max_{\parameters \in \parametersDomain} \ApproxFunOfDiff{g}(\parameters) - \inf_{\parameters \in \parametersDomain} g_{\lsf}(\parameters)) \cdot \errorRate \cdot \size{\parametersDomain}. 
    \end{align*}
    Therefore, under the condition~\eqref{eq:pmdpMargin}, the inequality 
    \[
        \int_{\parametersDomain} f_{\lsf}(\parameters)\,dP(\parameters) \geq \int_{\parametersDomain} f'_{\lsf}(\parameters)\,dP(\parameters)
    \]
    holds with confidence $1 - \significanceLevel$.
\end{proof}
In Lemma~\ref{lem:betterpolicyoverall}, we have to deal with $\inf_{\parameters \in \parametersDomain} g_{\lsf}(\parameters)$, which depends on the actual rational functions corresponding to $\lsf$.
For reward properties, we cannot know in advance its value; 
for probabilistic properties, instead, we have that the value of $\inf_{\parameters \in \parametersDomain} g_{\lsf}(\parameters)$ is at least -1 since the upper bounds and lower bounds of $f_{\lsf}(\parameters)$ and $f'_{\lsf}(\parameters)$ are known to be 1 and 0, respectively.

\section{Combining PAC Approximations with Statistical Model Checking}
\label{sec:combiningPACwithSMC}

When solving the optimization problem~\eqref{eq:PACLPpolynomialApproximation}, computing $f_{\lsf}(\parameters_{i})$ usually requires the use of model checking software such as \storm or \prism, which can be very costly for larger models. 
Therefore, attempting to estimate $f_{\lsf}(\parameters_{i})$ using Statistical Model Checking (SMC) methods~\cite{DBLP:series/lncs/LegayLTYSG19} is highly valuable, as SMC can quickly provide a value for $f_{\lsf}(\parameters_{i})$ with statistical guarantees. 
In this section, we show how we can integrate the statistical guarantees provided by SMC with the ones given by the scenario approach, to establish global guarantees.
Following the standard SMC estimation view~\cite{DBLP:series/lncs/LegayLTYSG19}, we use the following definition.

\begin{definition}
\label{def:smcestimation}
    Given a domain of parameters $\parametersDomain$, a probability measure $P$ over $\parametersDomain$, and a function $f_{\lsf} \colon \parametersDomain \to \posreals$, let $\margin \in \posreals$ be a margin to measure the estimation error and $\errorRate \in \posreals$ be an error rate.

    We say that the SMC estimation $\smcEstimation{f}_{\lsf} \colon \parametersDomain \to \posreals$ of $f_{\lsf}$ with respect to probability measure $P$ is a $(\errorRate, \margin)$-estimation if 
    we have $P(\abs{f_{\lsf}(\parameters) - \smcEstimation{f}_{\lsf}(\parameters)} \leq \margin) \geq 1 - \errorRate$.
    $\errorRate$ is related to the number of simulations $\nsims$ used in the SMC algorithm by 
    \[
        \nsims \geq \frac{\ln 2 - \ln \errorRate}{2 \margin^{2}}.
    \]
\end{definition}

The following theorem combines the guarantees provided by SMC with the PAC guarantees from the scenario approach, to establish global guarantees for the obtained approximation.
\begin{theorem}
\label{thm:combiningPACwithSMC}
    Given a domain of parameters $\parametersDomain$, a probability measure $P$ over $\parametersDomain$, and a function $f_{\lsf} \colon \parametersDomain \to \posreals$, let $\smcEstimation{f}_{\lsf} \colon \parametersDomain \to \posreals$ be an SMC $(\errorRate_{s}, \margin_{s})$-estimation of $f_{\lsf}$
    and $\ApproxFunOfProperty[\margin_{p}]{f}$ be a $\margin_{p}$-PAC approximation of $\smcEstimation{f}_{\lsf}$ with $(\errorRate_{p}, \significanceLevel_{p})$-guarantee.
    Then $\ApproxFunOfProperty[\margin_{p}]{f}$ is a $(\margin_{s} + \margin_{p})$-PAC approximation of $f_{\lsf}$ with $(\errorRate, \significanceLevel_{p})$-guarantee, that is,
    \[
        P(\abs{f_{\lsf}(\parameters) - \ApproxFunOfProperty[\margin_{p}]{f}(\parameters)} \leq \margin_{s} + \margin_{p}) \geq 1 - \errorRate
    \]
    holds with confidence $1 - \significanceLevel_{p}$, where $\errorRate = \min\setnocond{1, \errorRate_{s} + \errorRate_{p}}$.
    Moreover, $\errorRate$ can be replaced by $\errorRate = \errorRate_{\mathrm{ind}} = \errorRate_{s} + \errorRate_{p} - \errorRate_{s} \errorRate_{p}$ if the product-good-set condition $P(G_{s} \cap G_{p}) \geq (1 - \errorRate_{s})(1 - \errorRate_{p})$ holds, where 
    $G_{s} =
        \setcond{\parameters \in \parametersDomain}{\abs{f_{\lsf}(\parameters)-\smcEstimation{f}_{\lsf}(\parameters)} \leq \margin_{s}}$
    and 
    $G_{p} =
        \setcond{\parameters \in \parametersDomain}{\abs{\smcEstimation{f}_{\lsf}(\parameters)-\ApproxFunOfProperty[\margin_{p}]{f}(\parameters)} \leq \margin_{p}}$.
    
    The number of simulations $\nsims$ used in the SMC algorithm and the number of sampling points $\nsamples$ used in the PAC algorithm are related to the error rates $\errorRate_{s}$ and $\errorRate_{p}$, significance level $\significanceLevel_{p}$, and margin $\margin_{s}$ by
    \[
        \nsims \geq \frac{\ln 2 - \ln \errorRate_{s}}{2\margin_{s}^{2}},
        \qquad \nsamples \geq \frac{2}{\errorRate_{p}} \cdot \left(\ln \frac{1}{\significanceLevel_{p}} + m \right).
    \]
\end{theorem}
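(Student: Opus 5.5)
The proof works on the good sets $G_{s}$ and $G_{p}$ from the statement. By Definition~\ref{def:smcestimation}, $P(G_{s}) \geq 1 - \errorRate_{s}$, so $P(\parametersDomain \setminus G_{s}) \leq \errorRate_{s}$. By Definition~\ref{def:PACmodel}, applied to the PAC approximation $\ApproxFunOfProperty[\margin_{p}]{f}$ of $\smcEstimation{f}_{\lsf}$, the event ``$P(G_{p}) \geq 1 - \errorRate_{p}$'' holds with confidence $1 - \significanceLevel_{p}$. The SMC part is stated as a deterministic property of the estimator $\smcEstimation{f}_{\lsf}$ with respect to $P$, so it adds no further confidence loss. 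This is why the final confidence is $1 - \significanceLevel_{p}$ alone.

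\emph{Step 1: pointwise bound.} On $G_{s} \cap G_{p}$ the triangle inequality gives
\[
\abs{f_{\lsf}(\parameters) - \ApproxFunOfProperty[\margin_{p}]{f}(\parameters)} \leq \abs{f_{\lsf}(\parameters) - \smcEstimation{f}_{\lsf}(\parameters)} + \abs{\smcEstimation{f}_{\lsf}(\parameters) - \ApproxFunOfProperty[\margin_{p}]{f}(\parameters)} \leq \margin_{s} + \margin_{p}.
\]
Hence the set where the combined bound holds contains $G_{s} \cap G_{p}$, and its $P$-measure is at least $P(G_{s} \cap G_{p})$.

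\emph{Step 2: union bound.} Conditioned on the confidence event from Step~1, the union bound gives
\[
P(G_{s} \cap G_{p}) \geq 1 - P(\parametersDomain \setminus G_{s}) - P(\parametersDomain \setminus G_{p}) \geq 1 - \errorRate_{s} - \errorRate_{p}.
\]
Since probabilities are nonnegative, we also have $P(G_{s} \cap G_{p}) \geq 0$. Together these give the bound $1 - \min\setnocond{1, \errorRate_{s} + \errorRate_{p}}$. This proves the first claim, with confidence $1 - \significanceLevel_{p}$.

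\emph{Step 3: product-good-set refinement.} Under the assumption $P(G_{s} \cap G_{p}) \geq (1-\errorRate_{s})(1-\errorRate_{p})$, we expand
\[
(1-\errorRate_{s})(1-\errorRate_{p}) = 1 - (\errorRate_{s} + \errorRate_{p} - \errorRate_{s}\errorRate_{p}) = 1 - \errorRate_{\mathrm{ind}}.
\]
Combining this with Step~1 yields the refined bound $1 - \errorRate_{\mathrm{ind}}$. Because the product condition is taken as a hypothesis, no independence argument is needed.

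\emph{Step 4: sample sizes.} The requirement on $\nsims$ is Definition~\ref{def:smcestimation} with $(\errorRate_{s}, \margin_{s})$; it is the Hoeffding/Okamoto bound. The requirement on $\nsamples$ is Theorem~\ref{thm:PACnumberOfSamples} applied to problem~\eqref{eq:PACLPpolynomialApproximation} with the sampled values $\smcEstimation{f}_{\lsf}(\parameters_{i})$ in place of $f_{\lsf}(\parameters_{i})$.

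The main obstacle is this last substitution. The scenario theorem requires the constraint samples to be i.i.d.\ draws of a fixed function of $\parameters$. We must therefore treat $\smcEstimation{f}_{\lsf}$ as a fixed measurable map: for example, a deterministic estimator, or one whose simulation randomness is absorbed into an augmented i.i.d.\ sample $(\parameters_{i}, \text{seed}_{i})$. The PAC guarantee then concerns $\smcEstimation{f}_{\lsf}$ itself, and Step~1 transfers it to $f_{\lsf}$. I would state this convention explicitly and then invoke Theorem~\ref{thm:PACnumberOfSamples} unchanged.
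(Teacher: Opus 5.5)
Your proof is correct and follows essentially the same route as the paper: the triangle inequality on $G_{s} \cap G_{p}$, a union bound (needing no independence) for the rate $\min\setnocond{1,\errorRate_{s}+\errorRate_{p}}$, the product-good-set hypothesis for $\errorRate_{\mathrm{ind}}$, and the sample sizes read off from Definition~\ref{def:smcestimation} and Theorem~\ref{thm:PACnumberOfSamples}. Your added convention that $\smcEstimation{f}_{\lsf}$ be treated as a fixed measurable map when invoking the scenario theorem is a sensible clarification that the paper leaves implicit.
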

\begin{proof}
    For any parameter $\parameters \in \parametersDomain$, we have that
    \[  
        \abs{f_{\lsf}(\parameters) - \ApproxFunOfProperty[\margin_{p}]{f}(\parameters)} 
        \leq \abs{f_{\lsf}(\parameters) - \smcEstimation{f}_{\lsf}(\parameters)} + \abs{\smcEstimation{f}_{\lsf}(\parameters) -\ApproxFunOfProperty[\margin_{p}]{f}(\parameters)}.
    \]
    This implies the following set inclusion relation on the parameter space $\parametersDomain$:
    \begin{align*}
        & \setcond{\parameters \in \parametersDomain}{\abs{f_{\lsf}(\parameters) - \ApproxFunOfProperty[\margin_{p}]{f}(\parameters)} \leq \margin_{s} + \margin_{p}} \\
        {} \supseteq {} &
        \setcond{\parameters \in \parametersDomain}{\abs{f_{\lsf}(\parameters) - \smcEstimation{f}_{\lsf}(\parameters)} + \abs{\smcEstimation{f}_{\lsf}(\parameters) - \ApproxFunOfProperty[\margin_{p}]{f}(\parameters)} \leq \margin_{s} + \margin_{p}}\\
        {} \supseteq {} & 
        \setcond{\parameters \in \parametersDomain}{\text{$\abs{f_{\lsf}(\parameters) - \smcEstimation{f}_{\lsf}(\parameters)} \leq \margin_{s}$ and $\abs{\smcEstimation{f}_{\lsf}(\parameters) - \ApproxFunOfProperty[\margin_{p}]{f}(\parameters)} \leq \margin_{p}$}}.
    \end{align*}
    Therefore, under the probability measure $P$, we have
    \begin{align*}
        & P(\abs{f_{\lsf}(\parameters) - \ApproxFunOfProperty[\margin_{p}]{f}(\parameters)} \leq \margin_{s} + \margin_{p}) \\
        {} \geq {} & P(\abs{f_{\lsf}(\parameters) - \smcEstimation{f}_{\lsf}(\parameters)} + \abs{\smcEstimation{f}_{\lsf}(\parameters) -\ApproxFunOfProperty[\margin_{p}]{f}(\parameters)} \leq \margin_{s} + \margin_{p})\\
        {} \geq {} & P\big(\text{$\abs{f_{\lsf}(\parameters) - \smcEstimation{f}_{\lsf}(\parameters)} \leq \margin_{s}$ and $\abs{\smcEstimation{f}_{\lsf}(\parameters) - \ApproxFunOfProperty[\margin_{p}]{f}(\parameters)} \leq \margin_{p}$} \big) \\
        {} \geq {} & 1 - P(\abs{f_{\lsf}(\parameters) - \smcEstimation{f}_{\lsf}(\parameters)} > \margin_{s}) \\
        & {} - P(\abs{\smcEstimation{f}_{\lsf}(\parameters) - \ApproxFunOfProperty[\margin_{p}]{f}(\parameters)} > \margin_{p}) \\
        {} \mystackrelsingle{*}{\geq} {} & 1 - \errorRate_{s} - \errorRate_{p}.
    \end{align*}
    The inequality $\mystackrelsingle{*}{\geq}$ is justified by the conclusion of Definition~\ref{def:smcestimation} for the first bad event and by the $\margin_{p}$-PAC $(\errorRate_{p}, \significanceLevel_{p})$-guarantee on $\ApproxFunOfProperty[\margin_{p}]{f}$ for the second bad event.
    This argument uses the union bound and therefore does not require independence between the SMC estimation error and the PAC approximation error.
    If the product-good-set condition $P(G_{s} \cap G_{p}) \geq (1-\errorRate_{s})(1-\errorRate_{p})$ holds, then the third line of the display can instead be bounded from below by $(1-\errorRate_{s})(1-\errorRate_{p}) = 1 - \errorRate_{s} - \errorRate_{p} + \errorRate_{s}\errorRate_{p}$, which yields the sharper error rate $\errorRate_{\mathrm{ind}} = \errorRate_{s} + \errorRate_{p} - \errorRate_{s}\errorRate_{p}$.
    Thus we have that if the number of simulations $\nsims$ and the number of sampled points $\nsamples$ satisfy the requirements of the SMC and PAC algorithms 
    \[
        \nsims \geq \frac{\ln 2 - \ln \errorRate_{s}}{2\margin_{s}^{2}},
        \qquad \nsamples \geq \frac{2}{\errorRate_{p}} \cdot \left(\ln \frac{1}{\significanceLevel_{p}} + m \right),
    \]
    respectively, then we have that $P(\abs{f_{\lsf}(\parameters) - \ApproxFunOfProperty[\margin_{p}]{f}(\parameters)} \leq \margin_{s} + \margin_{p}) \geq 1 - \errorRate$ holds with confidence $1 - \significanceLevel_{p}$, where $\errorRate = \min\setnocond{1,\errorRate_{s}+\errorRate_{p}}$.
    Under the product-good-set condition, the same conclusion holds with $\errorRate_{\mathrm{ind}}$ in place of $\errorRate$.
\end{proof}

\begin{remark}
\label{rem:productGoodSet}
    The product-good-set condition in Theorem~\ref{thm:combiningPACwithSMC} is a condition on the two induced subsets $G_{s}$ and $G_{p}$ of the parameter domain, not merely on the pseudo-random seeds used by an implementation.
    Using independent random generators for SMC simulations and for PAC scenario sampling is a useful implementation practice, but it does not by itself prove that the events $\parameters \in G_{s}$ and $\parameters \in G_{p}$ are independent under the parameter distribution $P$, because the PAC approximation is fitted to SMC-produced values and the two error events may still be coupled through the same parameter-dependent function landscape.
    The sharper rate $\errorRate_{\mathrm{ind}}$ is therefore justified when the product-good-set condition can be established analytically, by the design of an experiment in which the two error mechanisms are independent conditional on $\parameters$, or by an additional validation step estimating $P(G_{s} \cap G_{p})$.
    Without such evidence, the union-bound rate $\errorRate = \min\setnocond{1,\errorRate_{s}+\errorRate_{p}}$ is the sound default.
\end{remark}

\begin{figure}[tb]
    \resizebox{\linewidth}{!}{
    \begin{tikzpicture}
        \node (exact) at (0,0) {\includegraphics{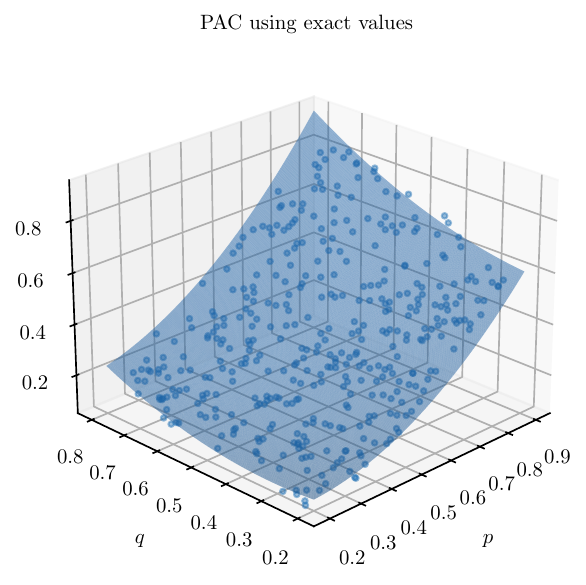}};
        \node[anchor=west] at (exact.east) {\includegraphics{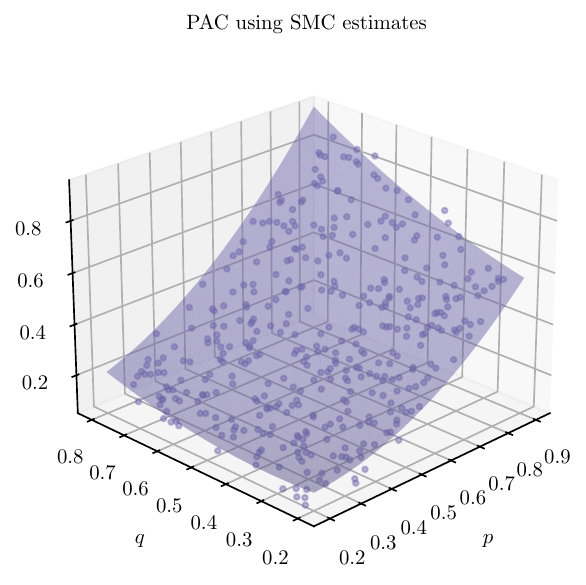}};
    \end{tikzpicture}
    }
    \caption{PAC approximation on exact values (on the left) vs. SMC (on the right)}
    \label{fig:PACcombinedSMC}
\end{figure}
\begin{markedexample}
\label{ex:PACcombinedSMC}
    As an example of combining statistical model checking with approximation based on PAC scenarios, consider again the cloud-service pMDP shown in Figure~\ref{fig:pmdp}.
    From Example~\ref{ex:pmdp} we know that the rational function for $\lsf = \lP[\max]{=?}(\lF \success)$ is the pointwise maximum of $f_{\lsf}^{\textsf{r}} = \frac{pq}{1-p^{2}+p^{2}q}$ and $f_{\lsf}^{\textsf{m}} = \frac{p^{2}(1-q)}{1-pq+p^{2}q}$, plotted in Figure~\ref{fig:pmdp1MultipleFunctions}.
    In Figure~\ref{fig:PACcombinedSMC} we show on the left the quadratic polynomial $\ApproxFunOfProperty{f}$ superimposed with the blue cloud of 400 points computed by the ordinary model checker and used by the PAC scenario to synthesize it;
    on the right of the figure we show the quadratic polynomial $\smcEstimation{f}_{\lsf}$ and in purple the cloud of points for the same choice of sampled parameters computed by a statistical model checker.
    As we can see from the plots, the blue cloud whose $z$ coordinates lie in $\interval{0.048}{0.825}$ is closer to the synthesized function than the purple cloud whose $z$ coordinates lie in $\interval{0.049}{0.871}$; 
    this is reflected by the computed margins, that are $\approxfun[]{\margin} = 0.040$ for $\ApproxFunOfProperty{f}$ and $\smcEstimation{\margin} = 0.076$ for $\smcEstimation{f}_{\lsf}$.
    This is in line with the statement of Theorem~\ref{thm:combiningPACwithSMC}, where using SMC with PAC scenario is affected by an increase of the margin $\margin_{s} + \margin_{p}$ and of the error rate $\min\setnocond{1,\errorRate_{s}+\errorRate_{p}}$. 
\end{markedexample}

Theorem~\ref{thm:combiningPACwithSMC} says that an approximation learned from SMC estimates can be treated as a PAC approximation of the exact satisfaction function, provided that we use the combined margin $\margin_{s} + \margin_{p}$ and the combined error rate $\min\setnocond{1,\errorRate_{s}+\errorRate_{p}}$.
Consequently, the same threshold reasoning used in Lemma~\ref{lem:linearApproximation} can be applied also in the SMC-backed setting.
The following corollary makes this explicit: 
it is the SMC counterpart of Lemma~\ref{lem:linearApproximation}, with $\margin$ replaced by the total SMC-plus-PAC margin and $\errorRate$ replaced by the total error rate.
\begin{corollary}
\label{cor:SMCPACthreshold}
    Under the assumptions of Theorem~\ref{thm:combiningPACwithSMC}, let $\margin_{\mathrm{tot}} = \margin_{s} + \margin_{p}$ and $\errorRate_{\mathrm{tot}} = \min\setnocond{1,\errorRate_{s}+\errorRate_{p}}$.
    For a safety level $\safetyLevel$, the following statements hold with confidence $1-\significanceLevel_{p}$:
    \begin{itemize}
    \item
        if $\ApproxFunOfProperty[\margin_{p}]{f}(\parameters) + \margin_{\mathrm{tot}} < \safetyLevel$ for all $\parameters \in \parametersDomain$, then $P(f_{\lsf}(\parameters) < \safetyLevel) \geq 1-\errorRate_{\mathrm{tot}}$
        and
    \item
        if $P(\ApproxFunOfProperty[\margin_{p}]{f}(\parameters) -\margin_{\mathrm{tot}} > \safetyLevel) > \errorRate_{\mathrm{tot}}$, then $P(f_{\lsf}(\parameters)>\safetyLevel) > 0$.
    \end{itemize}
\end{corollary}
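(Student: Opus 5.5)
The plan is to reduce both items to Theorem~\ref{thm:combiningPACwithSMC} and then argue directly on the good set, as in the proof of Lemma~\ref{lem:linearApproximation}. By Theorem~\ref{thm:combiningPACwithSMC}, with confidence $1-\significanceLevel_{p}$ the set
\[
    G = \setcond{\parameters \in \parametersDomain}{\abs{f_{\lsf}(\parameters) - \ApproxFunOfProperty[\margin_{p}]{f}(\parameters)} \leq \margin_{\mathrm{tot}}}
\]
satisfies $P(G) \geq 1-\errorRate_{\mathrm{tot}}$. I will fix this single confidence event once, so that both items are derived deterministically inside it. This way no further confidence loss occurs, and the two items hold simultaneously with confidence $1-\significanceLevel_{p}$.

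For the first item, take any $\parameters \in G$. Then $f_{\lsf}(\parameters) \leq \ApproxFunOfProperty[\margin_{p}]{f}(\parameters) + \margin_{\mathrm{tot}} < \safetyLevel$, where the strict inequality uses the hypothesis, which holds for all points of $\parametersDomain$. Hence $G \subseteq \setcond{\parameters}{f_{\lsf}(\parameters) < \safetyLevel}$, and monotonicity of $P$ gives $P(f_{\lsf}(\parameters) < \safetyLevel) \geq P(G) \geq 1-\errorRate_{\mathrm{tot}}$.

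For the second item, let $A = \setcond{\parameters}{\ApproxFunOfProperty[\margin_{p}]{f}(\parameters) - \margin_{\mathrm{tot}} > \safetyLevel}$. For $\parameters \in A \cap G$ we have $f_{\lsf}(\parameters) \geq \ApproxFunOfProperty[\margin_{p}]{f}(\parameters) - \margin_{\mathrm{tot}} > \safetyLevel$, so it suffices to show $P(A \cap G) > 0$. This follows from
\[
    P(A \cap G) \geq P(A) - P(\parametersDomain \setminus G) > \errorRate_{\mathrm{tot}} - \errorRate_{\mathrm{tot}} = 0.
\]
This is cleaner than the contradiction argument of Lemma~\ref{lem:linearApproximation}, and it yields the stated positive-probability conclusion rather than mere existence.

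The main obstacle is mild bookkeeping, in two places. First, measurability of $A$ and $G$: $A$ is measurable because $\ApproxFunOfProperty[\margin_{p}]{f}$ is a polynomial, and I will assume $f_{\lsf}$ is measurable, as implicitly done throughout. Second, the edge case $\errorRate_{\mathrm{tot}} = 1$. There the hypothesis of the second item is vacuous, since no probability exceeds $1$, and the first item's bound is trivial, so the $\min\setnocond{1,\cdot}$ causes no trouble.
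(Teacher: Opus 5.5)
Your proof is correct and follows essentially the same route as the paper. You fix the good set $G$ from Theorem~\ref{thm:combiningPACwithSMC}, use $G \subseteq \setcond{\parameters \in \parametersDomain}{f_{\lsf}(\parameters) < \safetyLevel}$ for the first item, and for the second bound $P(A \cap G) \geq P(A) - P(\parametersDomain \setminus G) > 0$, which is exactly the paper's $P(A) + P(G) - 1 > 0$; your remarks on fixing a single confidence event, on measurability, and on the case $\errorRate_{\mathrm{tot}} = 1$ are harmless additions that the paper leaves implicit.
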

\begin{proof}
    By Theorem~\ref{thm:combiningPACwithSMC}, with confidence $1-\significanceLevel_{p}$ the set $G = \setcond{\parameters \in \parametersDomain}{\abs{f_{\lsf}(\parameters)-\ApproxFunOfProperty[\margin_{p}]{f}(\parameters)} \leq \margin_{\mathrm{tot}}}$ has probability at least $1-\errorRate_{\mathrm{tot}}$.
    If $\ApproxFunOfProperty[\margin_{p}]{f}(\parameters)+\margin_{\mathrm{tot}} < \safetyLevel$ for all $\parameters \in \parametersDomain$, then $f_{\lsf}(\parameters)<\safetyLevel$ for every $\parameters\in G$, hence the first claim follows.
    For the second claim, let $A = \setcond{\parameters \in \parametersDomain}{\ApproxFunOfProperty[\margin_{p}]{f}(\parameters) - \margin_{\mathrm{tot}} > \safetyLevel}$.
    If $P(A) > \errorRate_{\mathrm{tot}}$, then $P(A \cap G) \geq P(A) + P(G) - 1 > 0$.
    For every $\parameters \in A\cap G$, we have $f_{\lsf}(\parameters) \geq \ApproxFunOfProperty[\margin_{p}]{f}(\parameters) - \margin_{\mathrm{tot}} > \safetyLevel$, and therefore $P(f_{\lsf}(\parameters) > \safetyLevel) > 0$.
\end{proof}

\section{DIRECT-Based Optimization for Parametric Markov Models}
\label{sec:DIRECT}

In the previous sections, we have presented the PAC scenario approach for synthesizing an approximation function $\ApproxFunOfProperty{f}$ of the rational function $f_{\lsf}$ associated with a PRCTL property $\lsf$ on a parametric Markov model.
A natural follow-up question is: given the approximation function $\ApproxFunOfProperty{f}$, how can we efficiently locate the parameters $\parameters^{*}$ that optimize $f_{\lsf}$ over the parameter domain $\parametersDomain$?
This is particularly important in the context of parametric MDPs, where one is often interested in finding the parameter configuration that maximizes or minimizes the satisfaction value of a given property, for instance to identify the worst-case or best-case behavior of the system.

To this end, we adopt the DIRECT (DIviding RECTangles) algorithm~\cite{DIRECT}, a derivative-free global optimization method.
DIRECT is well suited for our setting because: 
(i) the function $f_{\lsf}$ (or its approximation $\ApproxFunOfProperty{f}$) is treated as a black box, since we can only evaluate it at sampled parameter points via model checking, and no gradient information about the objective function is required; 
(ii) DIRECT does not require the knowledge of the Lipschitz constant of $f_{\lsf}$, which may be hard to estimate for complex rational functions; 
and (iii) the algorithm naturally balances global exploration and local exploitation, which is desirable when the function landscape contains multiple subregions with different optimal policies.

In this section, we first recall the DIRECT algorithm (Section~\ref{ssec:DIRECT}), then present how it can be applied to the optimization of property satisfaction values in parametric Markov models (Section~\ref{ssec:DIRECTforPMDP}), and finally provide conditional guarantees on the quality of the obtained solutions by relating the optimality gap to the Lipschitz constant and the partition diameter (Section~\ref{ssec:theoreticalGuarantees}).

\subsection{The DIRECT Algorithm}
\label{ssec:DIRECT}
The DIRECT algorithm~\cite{DIRECT} is a global optimization method that operates without the need for a Lipschitz constant. 
Its core principle involves dynamically partitioning the search space into hyperrectangles and conducting searches across all possible Lipschitz constants, effectively balancing global and local exploration. 
The algorithm requires only the continuity of the objective function and samples points within the domain to guide its search process. 
By leveraging the information gathered, it strategically determines the next areas to explore, ensuring a balance between efficiency and effectiveness. 
This makes DIRECT particularly useful for optimizing ``black-box'' functions or simulations, where the underlying structure of the objective function is poorly understood.

Now we describe the DIRECT algorithm in detail. 
To maintain generality, the parameter space $\parametersDomain = \prod_{i = 1}^{n} \range(\parameter_{i})$ is first normalized to the unit hypercube $\hypercube \subseteq \reals^{\Numdimension}$. 
The algorithm initially obtains the function value at the centroid of the normalized domain and then iteratively selects and divides the hyperrectangles that potentially contain the optimal points.    
Let us assume that $\hypercube$ has been partitioned into $\Numdivide$ hyperrectangles $\hypercube_\Numdivide = \setnocond{\hyperrectangle_{1}, \dots, \hyperrectangle_{\Numdivide}}$.
Let the side lengths of each $\hyperrectangle_{k}$ be $\lenside_{k} = (\lenside_{k, 1}, \dots, \lenside_{k, n})$ 
and $c_{k} = (c_{k, 1}, \dots, c_{k, n})$ be the centroid, i.e., the geometric center, of $\hyperrectangle_{k}$; 
then $\distance_{k} = \frac{1}{2}\norm[2]{\lenside_{k}}$ is the
\emph{distance} from the centroid of the $k$-th hyperrectangle to its 
vertices. 
We present the standard DIRECT selection rule for minimization. 
If the original goal is to maximize a function $f$, we apply DIRECT to $h=-f$ and then translate the returned minimizer of $h$ back into a maximizer of $f$.
In DIRECT, the hyperrectangle to be partitioned in each iteration is referred to as a \emph{potentially optimal hyperrectangle}, defined as follows:
\begin{definition}
\label{def:potentiallyOptimal}
    Given $\Numdivide$ hyperrectangles $\hyperrectangle_{1}, \dots, \hyperrectangle_{\Numdivide}$, let $\forsmall > 0$ and $f_{\opt} = \min_{1 \leq k \leq \Numdivide} f(c_{k})$ be the current best sampled function value for a minimization problem. 
    A candidate hyperrectangle $\hyperrectangle_{r} \in \setnocond{\hyperrectangle_{1}, \dots, \hyperrectangle_{\Numdivide}}$ is \emph{potentially optimal}
    if there exists $\tilde{\lipcons} > 0$ such that
    \begin{equation}
    \label{eq:potentiallyOptimal}
        \begin{cases}
        f(c_{r}) - \tilde{\lipcons} \distance_{r} \leq f(c_{k}) - \tilde{\lipcons} \distance_{k}, & \forall k = 1, \dots, \Numdivide,\\
        f(c_{r}) - \tilde{\lipcons} \distance_{r} \leq f_{\opt} - \forsmall \abs{f_{\opt}}.
        \end{cases}
    \end{equation}
\end{definition}
The first condition in Eq.~\eqref{eq:potentiallyOptimal} ensures that $\hyperrectangle_{r}$ could contain the global optimum if the Lipschitz constant were $\tilde{\lipcons}$: 
a hyperrectangle with a small function value at its centroid and a large distance (hence a large unexplored volume) is more likely to be potentially optimal.
The second condition prevents the algorithm from exploring regions where the potential improvement is negligible compared to the current best value $f_{\opt}$.
In each iteration, all potentially optimal hyperrectangles are identified from the current set $\setnocond{\hyperrectangle_{1}, \dots, \hyperrectangle_{\Numdivide}}$ and partitioned.
This selection does not depend on the order in which the hyperrectangles are listed; 
only implementation-level ties, such as equal function values in the later splitting rule, require a deterministic tie-breaking convention.

Once the potentially optimal hyperrectangles are identified, it is necessary to determine the rules for partitioning them. 
For one-dimensional problems, each potentially optimal hyperrectangle is simply divided into three equal parts. 
However, for $n$-dimensional problems, there are $n$ possible directions for partitioning. 
The DIRECT algorithm chooses to partition along the longest edge, as this ensures that the hyperrectangles eventually shrink in every dimension. 
When there are multiple longest edges, the partitioning process is as follows:
\begin{itemize}
\item 
    Let $\hyperrectangle_{k}$ be the potentially optimal hyperrectangle in $\reals^{n}$. 
    Define the set $I = \setcond{j}{\lenside_{k, j} = \max \setcond{\lenside_{k, i}}{i = 1, \dots, n}}$.
\item 
    For all $j \in I$, let $\Delta = \frac{1}{3} \lenside_{k, j}$ and 
    evaluate $f$ at $c_{k} \pm \Delta e_{j}$, where $e_{j}$ is the standard basis vector in the $j$-th direction.
\item 
    Partition $\hyperrectangle_{k}$ along each dimension $j \in I$ into three equal sub-rectangles. 
    The order of partitioning is determined by the values of
    \[
        u_{j} = \min \setnocond{f(c_{k} + \Delta e_{j}), f(c_{k} - \Delta e_{j})}.
    \]
    The dimensions are processed in ascending order of $u_{j}$, with ties broken arbitrarily, so that the largest sub-rectangles contain the centroids with the best function values.
\end{itemize}

\begin{figure}[tb]
    \centering
    \resizebox{0.82\linewidth}{!}{
        \includegraphics{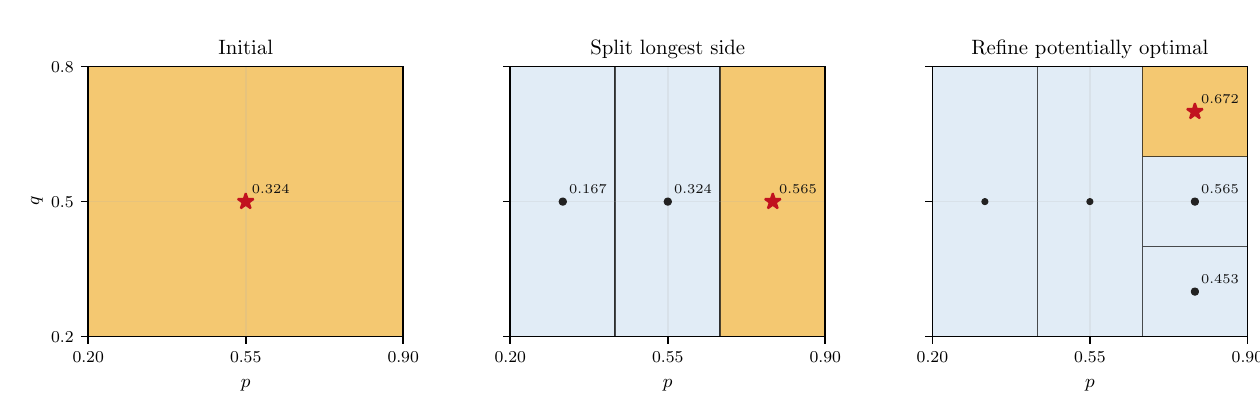}
    }
    \caption{Schematic DIRECT hyperrectangle refinement on the cloud-service parameter domain. 
    The first panel shows the initial hyperrectangle $\parametersDomain = \interval{0.2}{0.9} \times \interval{0.2}{0.8}$, the second panel shows a split along the longest side, and the third panel shows a further refinement of one potentially optimal hyperrectangle.}
    \label{fig:directCloudServiceBlocks}
\end{figure}

\begin{markedexample}
\label{ex:directHyperrectangleSplit}
    Consider again the cloud-service pMDP with parameter domain $\parametersDomain = \interval{0.2}{0.9} \times \interval{0.2}{0.8}$.
    DIRECT first evaluates the objective at the centroid $(p,q) = (0.55,0.5)$ of the initial hyperrectangle $\hyperrectangle = \parametersDomain$, as shown on the left side of Figure~\ref{fig:directCloudServiceBlocks}.
    This is the only hyperrectangle, so it is the only potentially optimal one.
    Since the $p$-side has length $0.7$ and the $q$-side has length $0.6$, the first split is along the $p$ direction, producing three hyperrectangles with centroids approximately $(0.317,0.5)$, $(0.55,0.5)$, and $(0.783,0.5)$, shown in Figure~\ref{fig:directCloudServiceBlocks}, central plot.
    Their reachability values are approximately $0.167$, $0.324$, and $0.565$, respectively.
    For maximizing $f_{\lsf} = \lP[\max]{\mathord{=}?}(\lF\success)$, DIRECT can equivalently minimize the negated value $-f_{\lsf}$; 
    hence the orange hyperrectangle with centroid $(0.783,0.5)$ is the best among the three newly evaluated hyperrectangles, and it is selected as the potentially optimal hyperrectangle in the schematic refinement.
    Another iteration of DIRECT is performed, leading to the hyperrectangles shown on the right side of Figure~\ref{fig:directCloudServiceBlocks}.
\end{markedexample}

\subsection{DIRECT-Based Exploration-Exploitation for Property Optimization}
\label{ssec:DIRECTforPMDP}

We now describe how the DIRECT algorithm can be applied to find the parameter configuration that optimizes the satisfaction value $f_{\lsf}$ of a PRCTL property $\lsf$ on a parametric Markov model.
The key idea is to use DIRECT to search the parameter domain $\parametersDomain$ by treating the evaluation of the property as a black-box function:
given a parameter point $\parameters \in \parametersDomain$, we obtain the value $f_{\lsf}(\parameters)$ by instantiating the parametric model with $\parameters$ and running a model checker on the resulting instantiated Markov model.
More precisely, let $\pmdp$ be a pMDP with parameter domain $\parametersDomain$.
Given a PRCTL state formula $\lsf = \lQ(\lpf)$, we define the \emph{property optimization problem} as:
\begin{equation}
\label{eq:propertyOptimization}
    \parameters^{*} = \arg\opt_{\parameters \in \parametersDomain} f_{\lsf}(\parameters),
\end{equation}
where $f_{\lsf}(\parameters)$ is the optimal value of $\lsf$ on the instantiated MDP $\evaluate[\parameters]{\pmdp}$ induced by the parameters $\parameters$.

To obtain $\parameters^{*}$ satisfying Eq.~\ref{eq:propertyOptimization}, we propose the following procedure, which integrates DIRECT with the PAC scenario framework:
\begin{enumerate}
    \item 
        \textbf{Normalization:} 
        normalize the parameter domain $\parametersDomain$ to the unit hypercube $\hypercube \subseteq \reals^{n}$.

    \item 
        \textbf{DIRECT iterations:} 
        apply the DIRECT algorithm on $\hypercube$:
        \begin{itemize}
        \item 
            in each iteration, identify the set of potentially optimal hyperrectangles according to Definition~\ref{def:potentiallyOptimal};
        \item 
            for each potentially optimal hyperrectangle, evaluate the chosen objective at the newly generated centroid points: either evaluate $f_{\lsf}$ by instantiating the pMDP and invoking a model checker (or a statistical model checker, as described in Section~\ref{sec:combiningPACwithSMC}), or evaluate the already synthesized PAC approximation $\ApproxFunOfProperty{f}$; 
            and
        \item 
            partition the potentially optimal hyperrectangles and update the partition $\hypercube_{\Numdivide}$.
        \end{itemize}
    
    \item 
        \textbf{Termination:} 
        stop when a convergence criterion is met, e.g., when the maximum distance $\max_{k} \distance_{k}$ falls below a prescribed tolerance $\delta > 0$, or when the number of function evaluations exceeds a budget $N_{\max}$.
    
    \item 
        \textbf{Output:} 
        return the best parameter point found during the search with respect to the chosen objective, i.e., $\hat{\parameters} = \arg\opt_{k} f_{\lsf}(c_{k})$ for exact or statistical evaluations of $f_{\lsf}$, or $\hat{\parameters} = \arg\opt_{k} \ApproxFunOfProperty{f}(c_{k})$ when DIRECT is run on the PAC approximation, together with the hyperrectangle $\hyperrectangle_{k^{*}}$ containing $\hat{\parameters}$.
\end{enumerate}

\begin{figure}[tb]
    \centering
    \resizebox{0.68\linewidth}{!}{
        \includegraphics{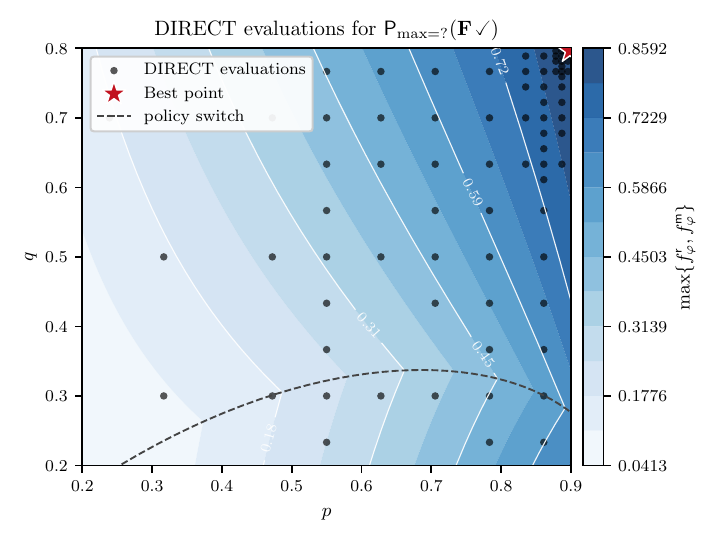}
    }
    \caption{DIRECT applied to the cloud-service pMDP from Section~\ref{sec:synthesisPACfunctions}. 
    The contour background shows the closed-form value of $\max\setnocond{f_{\lsf}^{\textsf{r}}, f_{\lsf}^{\textsf{m}}}$ only for visualization;
    each black point is a DIRECT evaluation of the instantiated model. 
    The dashed curve marks the policy switch, and the red star marks the best point found by DIRECT.}
    \label{fig:directCloudService}
\end{figure}
\begin{markedexample}
\label{ex:directExecution}
    To see how DIRECT works, consider again the cloud-service pMDP $\pmdp$ from Example~\ref{ex:pmdp} and the property $\lsf = \lP[\max]{\mathord{=}?}(\lF \success)$ over the parameter domain $\parametersDomain = \interval{0.2}{0.9} \times \interval{0.2}{0.8}$;
    Figure~\ref{fig:directCloudService} shows the points chosen by DIRECT that are then used to evaluate $f_{\lsf}$ on in a black-box manner through a call to the model checker:
    for each parameter point $(p,q)$ selected by DIRECT, we instantiate the pMDP with the corresponding values of $p$ and $q$ and call the model checker to compute the value of $f_{\lsf}(p,q)$.
    Besides the actual points chosen by DIRECT, shown as black dots, in Figure~\ref{fig:directCloudService} we also show the contour background that has been computed from the closed-form rational functions $f_{\lsf}^{\textsf{r}}$ and $f_{\lsf}^{\textsf{m}}$; 
    the background is provided only to make the landscape visible and it is not used by DIRECT.
    The sampled points illustrate the exploration--exploitation behavior of DIRECT: 
    the algorithm first covers the domain and then refines regions with high sampled values; 
    in this instance, the best sampled points move toward larger values of $p$ and $q$.
\end{markedexample}

The procedure given above naturally balances exploration and exploitation in the parameter space: 
DIRECT explores large hyperrectangles with potentially good function values (exploration), while it also refines hyperrectangles around the current best value (exploitation).
In the context of parametric Markov models, this is particularly beneficial because the function $f_{\lsf}$ for pMDPs is typically piece-wise defined (as different policies may be optimal in different subregions), and DIRECT's space-partitioning strategy is well suited to discover such subregions.
Moreover, when a PAC approximation function $\ApproxFunOfProperty{f}$ has already been synthesized for analysis or threshold checking, it can also be reused as the objective evaluated by DIRECT, corresponding to the PAC-approximation option in the second bullet of the DIRECT iterations above.
In this case, the DIRECT algorithm operates on the cheap-to-evaluate function $\ApproxFunOfProperty{f}$ instead of repeatedly invoking the exact evaluator for $f_{\lsf}$, and the PAC approximation error can be combined with a DIRECT partition-diameter bound under the additional conditions discussed below.

\subsection{Theoretical Guarantees}
\label{ssec:theoreticalGuarantees}

We now establish theoretical guarantees for the DIRECT-based optimization of parametric Markov models.
The bounds below are a posteriori bounds: they are meaningful once a Lipschitz constant or a certified upper bound on it is available, and they depend on the geometry of the hyperrectangles generated by the optimization run.
The main results relate the gap between the true optimal value $f_{\lsf}(\parameters^{*})$ and the best sampled value $f_{\lsf}(\hat{\parameters})$ found by the DIRECT algorithm, using properties of the Lipschitz constant and the partition geometry.

We first recall the notion of Lipschitz continuity in our setting.
\begin{definition}
\label{def:lipschitz}
    A function $f \colon \parametersDomain \to \reals$ is \emph{Lipschitz continuous} with constant $\lipcons \geq 0$ on $\parametersDomain$ if, for all $\parameters, \parameters' \in \parametersDomain$,
    \[
        \abs{f(\parameters) - f(\parameters')} \leq \lipcons \cdot \norm[2]{\parameters - \parameters'}.
    \]
\end{definition}
For reachability and expected-reward properties on pMCs, the induced value function is rational in the parameters whenever the corresponding instantiated linear systems are nonsingular.
Hence it is Lipschitz continuous on any compact parameter domain $\parametersDomain$ on which the rational representation has no singularity after simplification and, for reward properties, the expected rewards are finite.
For pMDPs, the optimal value is the pointwise maximum or minimum of the finitely many policy-induced value functions obtained from deterministic memoryless policies.
Therefore, if each policy-induced reachability or expected-reward function is well defined and Lipschitz continuous on $\parametersDomain$ with a common finite bound, then $f_{\lsf}$ is Lipschitz continuous on $\parametersDomain$ as well.

We now present our main result, which bounds the optimality gap after running the DIRECT algorithm.

\begin{theorem}
\label{thm:DIRECToptimalityGap}
    Let $\pmdp$ be a pMDP with parameter domain $\parametersDomain$ and $\lsf = \lQ(\lpf)$ be a PRCTL state formula.
    Suppose that $f_{\lsf} \colon \parametersDomain \to \posreals$ is Lipschitz continuous with constant $\lipcons$ on $\parametersDomain$.
    After the DIRECT algorithm has generated hyperrectangles $\hypercube_{\Numdivide} = \setnocond{\hyperrectangle_{1}, \dots, \hyperrectangle_{\Numdivide}}$ that cover $\parametersDomain$, with centroids $c_{1}, \dots, c_{\Numdivide}$, the best sampled value $\hat{\parameters} = \arg\opt_{k} f_{\lsf}(c_{k})$ satisfies
    \begin{equation}
    \label{eq:optimalityGap}
        \abs{f_{\lsf}(\parameters^{*}) - f_{\lsf}(\hat{\parameters})} \leq \lipcons \cdot \max_{1 \leq k \leq \Numdivide} \distance_{k},
    \end{equation}
    where $\parameters^{*} = \arg\opt_{\parameters \in \parametersDomain} f_{\lsf}(\parameters)$ are the true optimal parameters and $\distance_{k} = \frac{1}{2} \norm[2]{\lenside_{k}}$ is the Euclidean distance from the centroid of $\hyperrectangle_{k}$ to its vertices.
\end{theorem}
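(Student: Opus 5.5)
We argue for the minimization case and treat maximization by applying the same argument to $h = -f_{\lsf}$. This is legitimate because $h$ has the same Lipschitz constant $\lipcons$, and DIRECT is run on $h$ exactly as described in Section~\ref{ssec:DIRECT}.

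The first step is to locate the optimum in a rectangle. By assumption the hyperrectangles $\hyperrectangle_{1}, \dots, \hyperrectangle_{\Numdivide}$ cover $\parametersDomain$, so some index $k^{\circ}$ satisfies $\parameters^{*} \in \hyperrectangle_{k^{\circ}}$. Its centroid $c_{k^{\circ}}$ was evaluated during the run, because DIRECT samples the centroid of every rectangle it creates.

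The second step is a geometric bound. For any point $x$ of an axis-parallel box with side vector $\lenside_{k}$ and center $c_{k}$, each coordinate satisfies $\abs{x_{j} - c_{k,j}} \leq \lenside_{k,j}/2$. Hence $\norm[2]{x - c_{k}} \leq \frac{1}{2}\norm[2]{\lenside_{k}} = \distance_{k}$. Applying this to $\parameters^{*}$ gives $\norm[2]{\parameters^{*} - c_{k^{\circ}}} \leq \distance_{k^{\circ}} \leq \max_{k} \distance_{k}$.

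The third step chains the inequalities. Since $\parameters^{*}$ is a global minimizer and $\hat{\parameters}$ is the best sampled centroid, we have
\[
0 \leq f_{\lsf}(\hat{\parameters}) - f_{\lsf}(\parameters^{*}) \leq f_{\lsf}(c_{k^{\circ}}) - f_{\lsf}(\parameters^{*}) \leq \lipcons \norm[2]{c_{k^{\circ}} - \parameters^{*}} \leq \lipcons \max_{k} \distance_{k}.
\]
The first inequality uses optimality of $\parameters^{*}$. The second uses the choice of $\hat{\parameters}$ as $\arg\min_{k} f_{\lsf}(c_{k})$. The third is Definition~\ref{def:lipschitz}. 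Because the left-hand side is nonnegative, the difference equals its absolute value, which gives~\eqref{eq:optimalityGap}.

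The one point needing care is normalization. The statement uses distances $\distance_{k}$ and the constant $\lipcons$, and both must be measured in the same coordinates. If DIRECT works on the unit hypercube $\hypercube$, then either the side lengths $\lenside_{k}$ are mapped back to $\parametersDomain$ through the affine rescaling, or $\lipcons$ is replaced by the Lipschitz constant of $f_{\lsf}$ composed with the rescaling. I would state this convention explicitly, namely that $\distance_{k}$ is computed in the original domain.

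We also need the minimizer $\parameters^{*}$ to exist. This holds because a Lipschitz function is continuous and $\parametersDomain$ is compact. No deeper obstacle is expected.
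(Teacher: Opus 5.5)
Your proof is correct and is essentially the paper's argument: locate $\parameters^{*}$ in some covering hyperrectangle, bound its distance to that rectangle's centroid by $\distance_{k}$, apply the Lipschitz condition, and compare with the best sampled centroid. The differences are cosmetic. You take $\opt = \min$ as the base case where the paper takes $\max$, and you spell out three things the paper leaves implicit: the coordinatewise bound $\norm[2]{x - c_{k}} \leq \distance_{k}$, the nonnegativity that lets you drop the absolute value, and the convention that $\distance_{k}$ and $\lipcons$ be measured in the same (original) coordinates.
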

\begin{proof}
    In the proof below, we assume that $\opt = \max$, i.e., that the PRCTL state formula $\lsf = \lQ(\lpf)$ is actually the maximization formula $\lsf = \lQ[\max](\lpf)$; 
    the case with $\opt = \min$ is symmetric, with all inequalities reversed and the final absolute value unchanged.
    
    Let $\parameters^{*} \in \parametersDomain$ be a point at which the maximum $f_{\lsf}(\parameters^{*})$ of $f_{\lsf}$ is attained.
    Since the hyperrectangles $\setnocond{\hyperrectangle_{1}, \dots, \hyperrectangle_{\Numdivide}}$ cover $\parametersDomain$, there exists an index $k^{*} \in \setnocond{1, \dots, \Numdivide}$ such that $\parameters^{*} \in \hyperrectangle_{k^{*}}$.
    Since $c_{k^{*}}$ is the centroid of $\hyperrectangle_{k^{*}}$ and $\parameters^{*}$ lies in $\hyperrectangle_{k^{*}}$, we have $\norm[2]{\parameters^{*} - c_{k^{*}}} \leq \distance_{k^{*}}$.
    By the Lipschitz condition on $f_{\lsf}$, we get
    \[
        f_{\lsf}(\parameters^{*}) - f_{\lsf}(c_{k^{*}}) \leq \abs{f_{\lsf}(\parameters^{*}) - f_{\lsf}(c_{k^{*}})} \leq \lipcons \cdot \norm[2]{\parameters^{*} - c_{k^{*}}} \leq \lipcons \cdot \distance_{k^{*}}.
    \]
    Since $\hat{\parameters} = \arg\max_{k} f_{\lsf}(c_{k})$, we have $f_{\lsf}(c_{k^{*}}) \leq f_{\lsf}(\hat{\parameters})$, which implies
    \[
        f_{\lsf}(\parameters^{*}) - f_{\lsf}(\hat{\parameters}) \leq f_{\lsf}(\parameters^{*}) - f_{\lsf}(c_{k^{*}}) \leq \lipcons \cdot \distance_{k^{*}} \leq \lipcons \cdot \max_{1 \leq k \leq \Numdivide} \distance_{k}.
        \qedhere
    \]
\end{proof}

Theorem~\ref{thm:DIRECToptimalityGap} shows that the optimality gap is controlled by the product of the Lipschitz constant $\lipcons$ and the maximum distance $\max_{k} \distance_{k}$ of the hyperrectangle family considered in the bound.
This is an a posteriori certificate rather than an unconditional convergence statement: if some large hyperrectangles remain unrefined, the maximum distance may stay large even though the best sampled value is already good.
The following corollary therefore states the convergence consequence under the explicit refinement condition needed by the bound:
\begin{corollary}
\label{cor:DIRECTconvergence}
    Under the assumptions of Theorem~\ref{thm:DIRECToptimalityGap}, suppose that the generated hyperrectangles used for the certificate satisfy $\max_{k} \distance_{k} \to 0$ as $\Numdivide \to \infty$. 
    Consequently, $f_{\lsf}(\hat{\parameters}) \to f_{\lsf}(\parameters^{*})$ as $\Numdivide \to \infty$.
\end{corollary}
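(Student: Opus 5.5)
The plan is to derive Corollary~\ref{cor:DIRECTconvergence} directly from the a posteriori bound~\eqref{eq:optimalityGap} of Theorem~\ref{thm:DIRECToptimalityGap} with a squeeze argument. For each $\Numdivide$ we have a family $\hypercube_{\Numdivide}$ that covers $\parametersDomain$, with centroids $c_{1}, \dots, c_{\Numdivide}$ and best sampled point $\hat{\parameters} = \hat{\parameters}_{\Numdivide}$. Theorem~\ref{thm:DIRECToptimalityGap} applies to each such family, so
\[
    0 \leq \abs{f_{\lsf}(\parameters^{*}) - f_{\lsf}(\hat{\parameters}_{\Numdivide})} \leq \lipcons \cdot \max_{1 \leq k \leq \Numdivide} \distance_{k}.
\]
The key point is that the Lipschitz constant $\lipcons$ is fixed: it is a property of $f_{\lsf}$ on $\parametersDomain$ and does not depend on $\Numdivide$. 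The right-hand side is therefore a constant multiple of a quantity that tends to $0$ by hypothesis.

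First, I will check that the optimum $f_{\lsf}(\parameters^{*})$ exists. A Lipschitz function is continuous, and $\parametersDomain$ is a compact box, so the maximum or minimum is attained. This makes $\parameters^{*}$ well defined, as Theorem~\ref{thm:DIRECToptimalityGap} implicitly requires.

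Second, I will make sure the bound is used with the same hyperrectangle family whose maximum distance is assumed to vanish. The statement says ``the generated hyperrectangles used for the certificate'', so I read $\max_{k}\distance_{k}$ in the hypothesis as exactly the quantity appearing in~\eqref{eq:optimalityGap}.

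Third, I will conclude with $\epsilon$--$N$. Given $\epsilon > 0$, I choose $N$ such that $\max_{k}\distance_{k} < \epsilon/(\lipcons + 1)$ for all $\Numdivide \geq N$. Using $\lipcons + 1$ rather than $\lipcons$ handles the degenerate case $\lipcons = 0$, in which $f_{\lsf}$ is constant and the claim is trivial. For every $\Numdivide \geq N$ this gives $\abs{f_{\lsf}(\parameters^{*}) - f_{\lsf}(\hat{\parameters}_{\Numdivide})} < \epsilon$, which is the claimed convergence.

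I expect no genuine obstacle. The only delicate point is interpretive: the hypothesis $\max_{k}\distance_{k} \to 0$ must refer to a covering family for every $\Numdivide$, so that Theorem~\ref{thm:DIRECToptimalityGap} applies at each stage. I will state this explicitly and not try to prove that DIRECT itself refines every hyperrectangle, since the corollary assumes this rather than asserting it.
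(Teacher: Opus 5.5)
Your proposal is correct and follows the paper's own argument: apply the bound~\eqref{eq:optimalityGap} of Theorem~\ref{thm:DIRECToptimalityGap} at each stage, note that $\lipcons$ does not depend on $\Numdivide$, and pass to the limit using $\max_{k}\distance_{k} \to 0$. Your $\epsilon$--$N$ step, the $\lipcons + 1$ device, and the compactness check that $\parameters^{*}$ exists only spell out what the paper treats as immediate.
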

\begin{proof}
    The convergence $f_{\lsf}(\hat{\parameters}) \to f_{\lsf}(\parameters^{*})$ follows immediately by applying Theorem~\ref{thm:DIRECToptimalityGap} and taking the limit under the assumed condition $\max_{k} \distance_{k} \to 0$.
\end{proof}

We now show how to combine the optimality gap bound with the PAC approximation guarantees.
When the DIRECT algorithm uses the PAC approximation $\ApproxFunOfProperty{f}$ instead of the exact function $f_{\lsf}$, we need to account for the additional approximation error introduced by $\ApproxFunOfProperty{f}$.

\begin{theorem}
\label{thm:DIRECTPACoptimalityGap}
    Under the same assumptions as Theorem~\ref{thm:DIRECToptimalityGap}, let $\ApproxFunOfProperty{f}$ be a $\margin$-PAC approximation of $f_{\lsf}$ with $(\errorRate, \significanceLevel)$-guarantee.
    Assume that $\ApproxFunOfProperty{f}$ is Lipschitz continuous on $\parametersDomain$ with constant $\lipcons_{\ApproxFunOfProperty{f}}$.
    Let
    \[
        \hat{\parameters}_{\mathrm{PAC}}
        =
        \arg\opt_{k} \ApproxFunOfProperty{f}(c_{k})
    \]
    be the best choice of parameters found by DIRECT on $\ApproxFunOfProperty{f}$.
    If both $\parameters^{*}$ and $\hat{\parameters}_{\mathrm{PAC}}$ belong to the PAC-good set 
    $\parametersDomain_{\mathrm{good}}
    =
    \setcond{\parameters \in \parametersDomain}{\abs{f_{\lsf}(\parameters) - \ApproxFunOfProperty{f}(\parameters)} \leq \margin}$,
    then
    \begin{equation}
    \label{eq:PACoptimalityGap}
        \abs{f_{\lsf}(\parameters^{*}) - f_{\lsf}(\hat{\parameters}_{\mathrm{PAC}})} \leq \lipcons_{\ApproxFunOfProperty{f}} \cdot \max_{1 \leq k \leq \Numdivide} \distance_{k} + 2\margin.
    \end{equation}
    Moreover, with confidence $1-\significanceLevel$, the PAC guarantee ensures $P(\parametersDomain_{\mathrm{good}}) \geq 1-\errorRate$.
\end{theorem}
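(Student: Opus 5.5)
Take $\opt = \max$; the case $\opt = \min$ is symmetric, with inequalities reversed. The plan is to insert the approximation between the two exact values and split the gap into three pieces:
\[
f_{\lsf}(\parameters^{*}) - f_{\lsf}(\hat{\parameters}_{\mathrm{PAC}})
= \big(f_{\lsf}(\parameters^{*}) - \ApproxFunOfProperty{f}(\parameters^{*})\big)
+ \big(\ApproxFunOfProperty{f}(\parameters^{*}) - \ApproxFunOfProperty{f}(\hat{\parameters}_{\mathrm{PAC}})\big)
+ \big(\ApproxFunOfProperty{f}(\hat{\parameters}_{\mathrm{PAC}}) - f_{\lsf}(\hat{\parameters}_{\mathrm{PAC}})\big).
\]
Since $\parameters^{*}$ and $\hat{\parameters}_{\mathrm{PAC}}$ both lie in $\parametersDomain_{\mathrm{good}}$, the first and third terms are each bounded in absolute value by $\margin$. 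Together they contribute the $2\margin$ term.

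For the middle term I would repeat the argument of Theorem~\ref{thm:DIRECToptimalityGap}, now applied to $\ApproxFunOfProperty{f}$ with constant $\lipcons_{\ApproxFunOfProperty{f}}$. Because the hyperrectangles cover $\parametersDomain$, $\parameters^{*}$ lies in some $\hyperrectangle_{k^{*}}$ with $\norm[2]{\parameters^{*} - c_{k^{*}}} \leq \distance_{k^{*}}$. Lipschitz continuity of $\ApproxFunOfProperty{f}$ then gives
\[
\ApproxFunOfProperty{f}(\parameters^{*}) \leq \ApproxFunOfProperty{f}(c_{k^{*}}) + \lipcons_{\ApproxFunOfProperty{f}} \distance_{k^{*}}.
\]
Maximality of $\hat{\parameters}_{\mathrm{PAC}}$ among the centroids gives $\ApproxFunOfProperty{f}(c_{k^{*}}) \leq \ApproxFunOfProperty{f}(\hat{\parameters}_{\mathrm{PAC}})$. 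Combining the two, the middle term is at most $\lipcons_{\ApproxFunOfProperty{f}} \max_{k} \distance_{k}$. The key point is that $\parameters^{*}$ is not the maximizer of $\ApproxFunOfProperty{f}$, but the argument only needs $\parameters^{*}$ to be a point of the covered domain, so this causes no difficulty. Summing the three bounds yields an upper bound on $f_{\lsf}(\parameters^{*}) - f_{\lsf}(\hat{\parameters}_{\mathrm{PAC}})$.

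To turn this into the absolute value, note that $\hat{\parameters}_{\mathrm{PAC}} \in \parametersDomain$ is a centroid and $\parameters^{*}$ is a global maximizer of $f_{\lsf}$. Hence $f_{\lsf}(\parameters^{*}) - f_{\lsf}(\hat{\parameters}_{\mathrm{PAC}}) \geq 0$, so the difference equals its absolute value and \eqref{eq:PACoptimalityGap} follows. In the $\min$ case the sign is flipped and the same argument applies.

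The final claim, $P(\parametersDomain_{\mathrm{good}}) \geq 1-\errorRate$ with confidence $1-\significanceLevel$, is exactly Definition~\ref{def:PACmodel} applied to the given $\margin$-PAC approximation. The only delicate step is the sign argument for the absolute value, together with the observation that the Lipschitz step uses $\ApproxFunOfProperty{f}$ rather than $f_{\lsf}$. The Lipschitz constant of $f_{\lsf}$ assumed in Theorem~\ref{thm:DIRECToptimalityGap} is not actually needed here.
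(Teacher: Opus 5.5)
Your proof is correct and follows the paper's argument: the paper also uses the good-set inequalities at $\parameters^{*}$ and $\hat{\parameters}_{\mathrm{PAC}}$ to pick up the $2\margin$ term, then bounds $\ApproxFunOfProperty{f}(\parameters^{*}) - \ApproxFunOfProperty{f}(\hat{\parameters}_{\mathrm{PAC}})$ through the centroid $c_{k^{*}}$ of the hyperrectangle containing $\parameters^{*}$, using Lipschitz continuity of $\ApproxFunOfProperty{f}$ and maximality among centroids. The only addition in your version is that you state explicitly why $f_{\lsf}(\parameters^{*}) - f_{\lsf}(\hat{\parameters}_{\mathrm{PAC}}) \geq 0$ (global optimality of $\parameters^{*}$), which is what allows the passage to the absolute value; the paper uses this only tacitly.
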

\begin{proof}
    In the proof below, we assume that $\opt = \max$, i.e., that the PRCTL state formula $\lsf = \lQ(\lpf)$ is actually the maximization formula $\lsf = \lQ[\max](\lpf)$;
    the case with $\opt = \min$ is symmetric, with the inequality direction reversed and the same absolute-value conclusion.

    Since both $\parameters^{*}$ and $\hat{\parameters}_{\mathrm{PAC}}$ lie in $\parametersDomain_{\mathrm{good}}$ by assumption, we have
        $f_{\lsf}(\parameters^{*}) \leq \ApproxFunOfProperty{f}(\parameters^{*}) + \margin$
    and
        $f_{\lsf}(\hat{\parameters}_{\mathrm{PAC}}) \geq \ApproxFunOfProperty{f}(\hat{\parameters}_{\mathrm{PAC}}) - \margin$.
    This implies that 
    \begin{align*}
        f_{\lsf}(\parameters^{*}) - f_{\lsf}(\hat{\parameters}_{\mathrm{PAC}})
        & \leq \left(\ApproxFunOfProperty{f}(\parameters^{*}) + \margin\right) - \left(\ApproxFunOfProperty{f}(\hat{\parameters}_{\mathrm{PAC}}) - \margin\right) \\
        & = \left(\ApproxFunOfProperty{f}(\parameters^{*}) - \ApproxFunOfProperty{f}(\hat{\parameters}_{\mathrm{PAC}})\right) + 2\margin.
    \end{align*}
    Now, let $k^{*}$ be such that $\parameters^{*} \in \hyperrectangle_{k^{*}}$.
    Since $\hat{\parameters}_{\mathrm{PAC}} = \arg\max_{k} \ApproxFunOfProperty{f}(c_{k})$, we have $\ApproxFunOfProperty{f}(c_{k^{*}}) \leq \ApproxFunOfProperty{f}(\hat{\parameters}_{\mathrm{PAC}})$, and since $\ApproxFunOfProperty{f}$ is Lipschitz continuous on $\parametersDomain$ with constant $\lipcons_{\ApproxFunOfProperty{f}}$, it follows that 
    \[
        \ApproxFunOfProperty{f}(\parameters^{*}) - \ApproxFunOfProperty{f}(\hat{\parameters}_{\mathrm{PAC}}) \leq \ApproxFunOfProperty{f}(\parameters^{*}) - \ApproxFunOfProperty{f}(c_{k^{*}}) \leq \lipcons_{\ApproxFunOfProperty{f}} \cdot \distance_{k^{*}}.
    \]
    Hence, in the maximization case we obtain the bound
    \[
        f_{\lsf}(\parameters^{*}) - f_{\lsf}(\hat{\parameters}_{\mathrm{PAC}}) \leq \lipcons_{\ApproxFunOfProperty{f}} \cdot \max_{k} \distance_{k} + 2\margin,
    \]
    which proves the claimed conditional bound.
    For $\opt = \min$, the same argument applied to $-f_{\lsf}$ and $-\ApproxFunOfProperty{f}$ gives
    \[
        f_{\lsf}(\hat{\parameters}_{\mathrm{PAC}}) - f_{\lsf}(\parameters^{*}) \leq \lipcons_{\ApproxFunOfProperty{f}} \cdot \max_{k} \distance_{k} + 2\margin,
    \]
    which is exactly the absolute-value statement in~\eqref{eq:PACoptimalityGap}.
\end{proof}
\begin{remark}
    The statement of Theorem~\ref{thm:DIRECTPACoptimalityGap} does not by itself imply that an arbitrary fixed optimizer $\parameters^{*}$ lies in $\parametersDomain_{\mathrm{good}}$:
    with confidence $1 - \significanceLevel$, the PAC guarantee gives $P(\parametersDomain_{\mathrm{good}}) \geq 1-\errorRate$.
    This means that a parameter point drawn independently according to the sampling measure lies in $\parametersDomain_{\mathrm{good}}$ with probability at least $1-\errorRate$, but it does not imply that a point chosen specifically to satisfy some constraint, such as the true optimizer $\parameters^{*}$ of $f_{\lsf}$, is in $\parametersDomain_{\mathrm{good}}$.
    In general, knowing that $\parameters^{*} \in \parametersDomain_{\mathrm{good}}$ requires additional information: 
    for example, a uniform error certificate for $\ApproxFunOfProperty{f}$, an analytical verification of the region containing $\parameters^{*}$, or an exact/validated evaluation showing that the approximation error at the relevant optimizer candidate is within $\margin$.
    Without such additional information, Theorem~\ref{thm:DIRECTPACoptimalityGap} should be read as a conditional certificate explaining how the PAC approximation error and the DIRECT discretization error combine once the relevant optimizer points are known to be in the PAC-good set.
\end{remark}

Theorem~\ref{thm:DIRECTPACoptimalityGap} provides a useful conditional decomposition of the total error into two terms, once the exact optimizer and the returned PAC optimizer are known to lie in the PAC-good set:
\begin{itemize}
\item 
    the \emph{discretization error} $\lipcons_{\ApproxFunOfProperty{f}} \cdot \max_{k} \distance_{k}$, which depends on how finely the DIRECT algorithm has partitioned the parameter space and on the Lipschitz constant of the approximation $\ApproxFunOfProperty{f}$ optimized by DIRECT.
    This term decreases as more iterations are performed; 
    and
\item 
    the \emph{approximation error} $2\margin$, which depends on how close the PAC approximation $\ApproxFunOfProperty{f}$ is to the true function $f_{\lsf}$.
    This term can be reduced by increasing the degree $d$ of the approximation polynomial, which provides more degrees of freedom for the fit.
\end{itemize}

\subsubsection*{Estimation of the Lipschitz constant.}
\label{ssec:estimationLipschitz}
Although the DIRECT algorithm does not require a Lipschitz constant to operate, a certified upper bound on the Lipschitz constant can enhance the algorithm's efficiency by eliminating hyperrectangles that cannot contain the optimal solution, thereby reducing the search time.
Consider a hyperrectangle $\hyperrectangle_{k}$ with centroid $c_{k}$. 
If the following condition holds:
\begin{equation}
\label{eq:removeRectangle}
    f(c_{k}) + \bar{\lipcons} \cdot \distance_{k} \leq f_{\opt},
\end{equation}
where $\bar{\lipcons}$ is a certified upper bound such that $\bar{\lipcons}\geq \lipcons$, then $\hyperrectangle_{k}$ cannot contain a point with function value larger than $f_{\opt}$, and it can be removed from the active set of hyperrectangles.
When $\lipcons$ is not known exactly, the sampled function values can still provide a diagnostic estimate of its magnitude.
A practical sample-based estimator over the sampled centroids is:
\begin{equation}
\label{eq:lipschitzEstimator}
    \hat{\lipcons} = \max_{j \neq k} \frac{\abs{f(c_{j}) - f(c_{k})}}{\norm[2]{c_{j} - c_{k}}}.
\end{equation}
Since $\hat{\lipcons}$ is computed from finitely many sampled pairs, it generally satisfies $\hat{\lipcons}\leq \lipcons$ and should be interpreted as an empirical lower estimate rather than a safe upper bound.
Therefore, replacing $\bar{\lipcons}$ in~\eqref{eq:removeRectangle} by $\hat{\lipcons}$ is not sound unless $\hat{\lipcons}$ is independently inflated or certified so that the resulting value is at least $\lipcons$.
The estimator in~\eqref{eq:lipschitzEstimator} is thus useful for diagnostics and adaptive parameter tuning, whereas certified pruning requires an analytical or verified numerical upper bound $\bar{\lipcons}$.
In practice, $\hat{\lipcons}$ is useful in three ways even when it is not by itself a sound upper bound.
First, it gives a scale estimate for the objective landscape, which helps interpret the magnitude of $\max_{k} \distance_{k}$ in the a posteriori gap bound.
Second, it can guide engineering choices such as termination tolerances, rescaling of parameters, and prioritization of regions whose sampled slopes are large.
Third, it can be used as the starting point for a conservative bound, for example by inflating it with a model-dependent safety factor or by checking the largest observed slopes with interval arithmetic; only after such an independent certification can the resulting value be used as $\bar{\lipcons}$ in the pruning rule~\eqref{eq:removeRectangle}.

For parametric Markov chains with reachability properties, the Lipschitz constant can sometimes be bounded analytically.
The following result provides such a bound for a specific class of models:
\begin{lemma}
\label{lem:lipschitzBoundReachability}
    Let $\pmc = (\mstates, \minit, \mtransitions, \mlabelling)$ be a pMC with parameters $\parameters \in \parametersDomain$, and let $\lsf = \lPmc{\mathord{=}?}(\lF T)$ be a reachability property for a target set $T \subseteq \mstates$.
    If the transition expressions $\mtransitions(s, s')$ are affine functions of the parameters and Assumption~\ref{asmt:wellPosedDomain} holds, then $f_{\lsf}$ is Lipschitz continuous on $\parametersDomain$ with Lipschitz constant $\lipcons$ bounded by
    \begin{equation}
    \label{eq:lipschitzBound}
        \lipcons \leq \abs{\mstates} \cdot \max_{s, s'} \norm[\infty]{\nabla_{\parameters} \mtransitions(s, s')(\parameters)},
    \end{equation}
    where the maximum is taken over all state pairs $(s, s')$ with $\mtransitions(s, s')(\parameters) > 0$ for some $\parameters \in \parametersDomain$, and $\nabla_{\parameters} \mtransitions(s, s')$ denotes the gradient of the transition probability function with respect to $\parameters$.
\end{lemma}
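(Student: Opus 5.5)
The plan is to write $f_{\lsf}$ as the solution of the standard reachability linear system and differentiate it implicitly. First, let $S_{?}$ be the states that can reach $T$ but are not in $T$. By Assumption~\ref{asmt:sameGraph} this set is independent of the evaluation, since it depends only on the underlying graph. On $S_{?}$, let $A(\parameters)$ be the restriction of $\mtransitions$ to $S_{?} \times S_{?}$, and let $b(\parameters)_{s} = \sum_{t \in T} \mtransitions(s,t)(\parameters)$. The reachability vector $x(\parameters)$ is the unique solution of $(I - A(\parameters))x = b(\parameters)$, and $f_{\lsf}(\parameters) = x_{\minit}(\parameters)$. Uniqueness holds because every state of $S_{?}$ reaches $T$ with positive probability, so $I - A(\parameters)$ is a nonsingular M-matrix for every $\parameters \in \parametersDomain$ (using Assumption~\ref{asmt:wellPosedDomain}). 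Since all entries are affine, $f_{\lsf}$ is rational with a denominator that does not vanish on the compact box $\parametersDomain$. Hence $f_{\lsf}$ is $C^{1}$ there, and by the mean value theorem along segments of the convex set $\parametersDomain$ it suffices to bound $\norm[2]{\nabla_{\parameters} f_{\lsf}}$ uniformly.

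Second, I would differentiate $x = Ax + b$ with respect to a parameter $\parameter_{i}$. This gives $\partial_{i} x = (I-A)^{-1}\big(\partial_{i} A\, x + \partial_{i} b\big)$. Because the entries are affine, $\partial_{i} A$ and $\partial_{i} b$ are constant matrices and vectors. Their entries are bounded in absolute value by $G = \max_{s,s'} \norm[\infty]{\nabla_{\parameters} \mtransitions(s,s')}$, taken over the edges of the graph. Using $0 \le x_{s'} \le 1$, each component of the vector $\partial_{i} A\, x + \partial_{i} b$ has absolute value at most $\sum_{s'} \abs{\partial_{i}\mtransitions(s,s')} \le \abs{\mstates}\, G$. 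This is exactly where the factor $\abs{\mstates} \cdot G$ in~\eqref{eq:lipschitzBound} comes from. A sharper form is available: the row sums of $\mtransitions(s,\cdot)$ equal $1$, so the gradients in a row sum to zero. The row can therefore be rewritten as $\sum_{s'} \partial_{i}\mtransitions(s,s')(x_{s'} - x_{s})$, which I would use if cancellation is needed.

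Third, and this is the step I expect to be the main obstacle, the vector must be propagated through $(I-A)^{-1}$. Its nonnegative entries are the expected numbers of visits before absorption, and they are not bounded by $1$ in general. The stated bound has no factor for them, so I would first try to show that the particular combination $(I-A)^{-1}(\partial_{i}A\,x + \partial_{i}b)$ telescopes. The idea is to read $\partial_{i} x_{\minit}$ probabilistically as an expected accumulated ``reward'' $r_{s} = \sum_{s'}\partial_{i}\mtransitions(s,s')(x_{s'}-x_{s})$ along paths, and to bound that sum using the zero-row-sum identity. If the telescoping does not close the gap, I would state the honest version. This version multiplies the right-hand side by $\max_{\parameters \in \parametersDomain}\norm[\infty]{(I-A(\parameters))^{-1}}$, which is finite by compactness and nonsingularity. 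I would then note that~\eqref{eq:lipschitzBound} holds as written in the acyclic case, or whenever each state is visited at most once in expectation.

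Finally, I would pass from the per-coordinate bound to the Euclidean Lipschitz constant. Here I expect an additional $\sqrt{n}$ from $\norm[2]{\nabla f} \le \sqrt{n}\max_{i}\abs{\partial_{i} f}$, unless the per-coordinate argument is run directly on directional derivatives $\partial_{u}$ with $\norm[2]{u} = 1$. That alternative works if the gradients of the affine entries are controlled in the appropriate dual norm. I would use the directional-derivative version to match the form of~\eqref{eq:lipschitzBound} as closely as possible.
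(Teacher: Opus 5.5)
\textbf{The statement is false, so no proof can close your third step.} The obstacle you flag there is not a weakness of your method: the bound~\eqref{eq:lipschitzBound} fails as stated, and no telescoping argument can recover it.

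Take $S=\{s_0,t,u\}$ with $\minit=s_0$ and $T=\{t\}$. Use a single parameter $p\in[0.01,0.5]$, with $\mtransitions(s_0,t)=p$, $\mtransitions(s_0,u)=0.01$, $\mtransitions(s_0,s_0)=0.99-p$, and $t,u$ absorbing. All entries are affine and strictly positive on the domain, so both standing assumptions hold. The right-hand side of~\eqref{eq:lipschitzBound} is $3\cdot 1=3$. Yet $f_{\lsf}(p)=p/(p+0.01)$, so $f_{\lsf}'(0.01)=25$. If you use $\mtransitions(s_0,u)=\delta$, $\mtransitions(s_0,s_0)=1-\delta-p$ and $p\in[\delta,\tfrac12]$, the slope at $p=\delta$ is $1/(4\delta)$, while the bound stays $3$.

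In your notation, at $p=0.01$ you get $r_{s_0}=1-x_{s_0}=\tfrac12$, so the zero-row-sum rewriting gains nothing. The resolvent is $(I-A)^{-1}=1/(p+0.01)=50$. The expected-visits factor you worried about is exactly what breaks the bound. The paper's own sketch (``summing over the at most $\abs{S}$ terms in the path decomposition'') silently drops this same factor, so it does not establish the lemma either.

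\textbf{Your fallback is the provable version, with two corrections.}

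\emph{The norm.} For a Euclidean Lipschitz constant, the directional-derivative route needs the dual norm $\norm[2]{\nabla_{\parameters}\mtransitions(s,s')}$, not $\norm[\infty]{\cdot}$. So the $\sqrt{n}$ factor is genuinely present. Take the one-step chain $\mtransitions(s_0,t)=\tfrac12+\alpha\sum_{i=1}^{n}(v_i-\tfrac12)=1-\mtransitions(s_0,u)$ on a small box around $(\tfrac12,\dots,\tfrac12)$. Its Euclidean slope is $\alpha\sqrt{n}$, which exceeds $3\alpha$ for $n\geq 10$.

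\emph{The acyclic claim.} Your remark that~\eqref{eq:lipschitzBound} holds as written in the acyclic case, or when each state is visited at most once in expectation, is incorrect. Your estimate uses $\norm[\infty]{(I-A)^{-1}}$, the maximal row sum, i.e.\ the expected number of steps spent in $S_{?}$, not the individual visit counts. This quantity is always $\geq 1$, with equality only when $A=0$, and it can be of order $\abs{S}$ even without cycles. For a concrete case:
\begin{itemize}
\item let $s_1\to s_2\to\dots\to s_k\to u$ with constant probability $1-\tfrac1k$;
\item let every $s_j$ also move to each of $m/2$ target leaves with probability $\tfrac{1}{km}+(p-p_0)$;
\item and to each of $m/2$ sink leaves with probability $\tfrac{1}{km}-(p-p_0)$, for $p$ in a small interval around $p_0$.
\end{itemize}
This chain is acyclic and every transition gradient has norm at most $1$. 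Yet $f_{\lsf}$ is affine with slope $\tfrac{m}{2}\,k\big(1-(1-\tfrac1k)^{k}\big)\approx 0.32\,mk$. For $k=m=100$ this is about $3170$, whereas $\abs{S}=k+m+1=201$.

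A correct version of the lemma must therefore carry a factor such as $\max_{\parameters\in\parametersDomain}\norm[\infty]{(I-A(\parameters))^{-1}}$, together with the $\ell_2$ (or $\sqrt{n}$) correction, exactly as in your honest version.
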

\begin{proof}
    Since the transition probabilities are affine in $\parameters$, their gradients $\nabla_{\parameters} \mtransitions(s, s')$ are constant vectors.
    The reachability probability $f_{\lsf}(\parameters) = g(\minit, T, \parameters)$ is obtained as the solution of the linear system where $g(s, T, \parameters) = 1$ for $s \in T$, $g(s, T, \parameters) = 0$ for absorbing non-target states, and $g(s, T, \parameters) = \sum_{s'} \mtransitions(s, s')(\parameters) \cdot g(s', T, \parameters)$ otherwise.
    By the chain rule and the fact that $g(s', T, \parameters) \in [0, 1]$ for all $s'$ and $\parameters$, the Jacobian of $f_{\lsf}$ can be bounded component-wise.
    Applying the mean value theorem on each component and summing over the at most $\abs{\mstates}$ terms in the path decomposition, we obtain the stated bound.
\end{proof}

The bound in~\eqref{eq:lipschitzBound} depends on $\max_{s,s'}\norm[\infty]{\nabla_{\parameters} \mtransitions(s,s')}$, which is a constant that must be computed from the concrete model.
When every transition probability is an affine function with unit coefficients, e.g., $\mtransitions(s,s') = p_i$ or $\mtransitions(s,s') = 1 - p_i$ for a single parameter $p_i$, each partial derivative is $\pm 1$ and the bound reduces to $\lipcons \leq \abs{\mstates}$.
However, even for affine transition probabilities, coefficients larger than $1$ are possible (e.g., $\mtransitions(s,s')(\parameters) = 3p - 0.5$ on a suitable domain), giving $\norm[\infty]{\nabla_{\parameters} \mtransitions(s,s')} = 3$.
Thus, the gradient norm must be evaluated explicitly for the model at hand rather than assumed to be at most~$1$.

\begin{remark}
    Lemma~\ref{lem:lipschitzBoundReachability} requires the transition probabilities to be \emph{affine} in~$\parameters$.
    In parametric Markov models where the transition probabilities are rational functions of the parameters (e.g., $\mtransitions(s,s')(\parameters) = p/(p+q)$), the gradient is no longer constant and can be unbounded near poles of the denominator.
    For such models, the sample-based estimator in~\eqref{eq:lipschitzEstimator} provides only an empirical lower estimate of the Lipschitz constant, while a sound upper bound must be derived analytically or by a verified numerical method on a case-by-case basis.
    Any certified upper bound, even if loose for a specific model, can be used for the pruning rule~\eqref{eq:removeRectangle} and for the optimality gap bound in Theorem~\ref{thm:DIRECToptimalityGap}.
\end{remark}

\section{Experimental Evaluation}
\label{sec:experiments}

We have implemented the PAC-based analysis approach proposed in Section~\ref{sec:synthesisPACfunctions} in our tool \pacpma\footnote{\url{https://github.com/iscas-tis/PacPMA/}}, the PAC-based Parametric Model Analyzer~\cite{DBLP:conf/atva/LiuTHXZ23}.
In this journal version, we focus our experimental evaluation on the new DIRECT-based optimization layer introduced in Section~\ref{sec:DIRECT}.
The benchmark suite contains 2997 instances: 
723 pMDP probability properties, 526 pMDP reward properties, 1040 pMC probability properties, and 708 pMC reward properties.
As benchmarks, we use the MCs and MDPs from the \prism benchmark suite~\cite{DBLP:conf/qest/KwiatkowskaNP12}, where we replace probabilistic choices by parameters, following the construction already used in~\cite{DBLP:conf/atva/LiuTHXZ23,DBLP:conf/birthday/ChiLT0J25}.
We performed the experiments on a desktop machine with an i5-12400F CPU and 16 GB of memory running Ubuntu Server 24.04.3 LTS;
we used \benchexec~\cite{DBLP:journals/sttt/BeyerLW19} to trace and constrain the executions, allowing each benchmark to use 15 GB of memory and imposing a time limit of 10 minutes of wall-clock time.

In our previous work about scenario approximation for parametric Markov models~\cite{DBLP:conf/atva/LiuTHXZ23,DBLP:conf/birthday/ChiLT0J25}, we already presented the experimental comparison of the PAC approximation with exact rational-function generation by \prism~\cite{DBLP:conf/cav/KwiatkowskaNP11}, the use of \storm~\cite{DBLP:journals/sttt/HenselJKQV22} as ordinary model-checking back-end, and the SMC-backed PAC experiments.
Those results justify the experimental scope adopted here: 
exact rational-function generation and model-checker back-end comparisons are important baselines for the approximation framework; repeating them here on a larger set of benchmarks would not provide new insights on the use of the scenario approach to approximate exact rational functions. 
Thus, we focus our experiments on practically evaluating the use of DIRECT on optimizing the parametric Markov models, i.e., our new contribution of this paper.
We leave a fresh \pacpma vs.\@ \prism{}/\storm comparison and SMC experiments to a future tool-oriented work.
The remainder of this section focuses on whether DIRECT, when integrated with \pacpma, gives optimized values that agree with the scenario-based optimizer on their common successful benchmarks, and on how the DIRECT variants compare in solving time.


In Section~\ref{sec:DIRECT}, we presented how the DIRECT algorithm can be applied to optimize the satisfaction value $f_{\lsf}$ over the parameter domain $\parametersDomain$ of a parametric Markov model.
We now evaluate the performance of DIRECT and its variants on the benchmark suite comprising both pMDPs and pMCs, and compare the results with the PAC scenario approach.
For each benchmark, DIRECT searches for the parameter configuration that optimizes $f_{\lsf}$ by iteratively partitioning the parameter space; 
we use the NLopt implementation of DIRECT~\cite{DIRECT} integrated into \pacpma, with a stopping tolerance of $10^{-8}$.
We test eight variants of the DIRECT algorithm:
the standard DIRECT, its local refinement DIRECT-L, their no-scaling variants (DIRECT-noscal and DIRECT-L-noscal), randomized local variants (DIRECT-L-rand and DIRECT-L-rand-noscal), and the original Fortran implementations (Orig-DIRECT and Orig-DIRECT-L).

\subsection{Overall comparison}

\begin{table}[t]
    \centering
    \caption{Overview of DIRECT optimization outcomes across all benchmarks}
    \label{tab:experimentsDIRECTOverall}
    \begin{tabular}{l r r r r r}
        \hline
        Algorithm & \#Computed & \#Timeout & \#Memout & \#Failure & Total \\
        \hline
        \pacpma & 1979 & 191 & 827 & 0 & 2997 \\
        DIRECT & 1380 & 790 & 827 & 0 & 2997 \\
        DIRECT-L & 1508 & 662 & 827 & 0 & 2997 \\
        DIRECT-noscal & 1509 & 661 & 827 & 0 & 2997 \\
        DIRECT-L-noscal & 1509 & 661 & 827 & 0 & 2997 \\
        DIRECT-L-rand & 1516 & 654 & 827 & 0 & 2997 \\
        DIRECT-L-rand-noscal & 1509 & 661 & 827 & 0 & 2997 \\
        Orig-DIRECT & 1499 & 665 & 820 & 13 & 2997 \\
        Orig-DIRECT-L & 1165 & 1003 & 821 & 8 & 2997 \\
        \hline
    \end{tabular}
\end{table}

Table~\ref{tab:experimentsDIRECTOverall} summarizes the outcomes across all 2997 benchmarks (723 pMDP probability, 526 pMDP reward, 1040 pMC probability, and 708 pMC reward).
The PAC scenario approach (\pacpma), run with degree $d = 0$, successfully computes a result for 1979 benchmarks, giving the largest number of completed runs in this comparison.
Among the DIRECT variants, DIRECT-L-rand achieves the highest success rate with 1516 benchmarks solved, followed by DIRECT-noscal, DIRECT-L-noscal, and DIRECT-L-rand-noscal with 1509 solved benchmarks each, DIRECT-L with 1508 solved benchmarks, Orig-DIRECT with 1499 solved benchmarks, and standard DIRECT with 1380 solved benchmarks.
The randomized local DIRECT variant therefore gives the best completion count on this benchmark suite, while the no-scaling variants remain close behind.
Orig-DIRECT is competitive, but it no longer outperforms the strongest NLopt-based variants on the updated data.
Disabling the internal scaling generally improves performance slightly over standard DIRECT, likely because the parameter domains in our benchmarks are already normalized.

\subsection{Optimality gap: Scenario versus DIRECT}

\begin{figure}[tb]
    \centering
    \resizebox{0.72\linewidth}{!}{
        \includegraphics{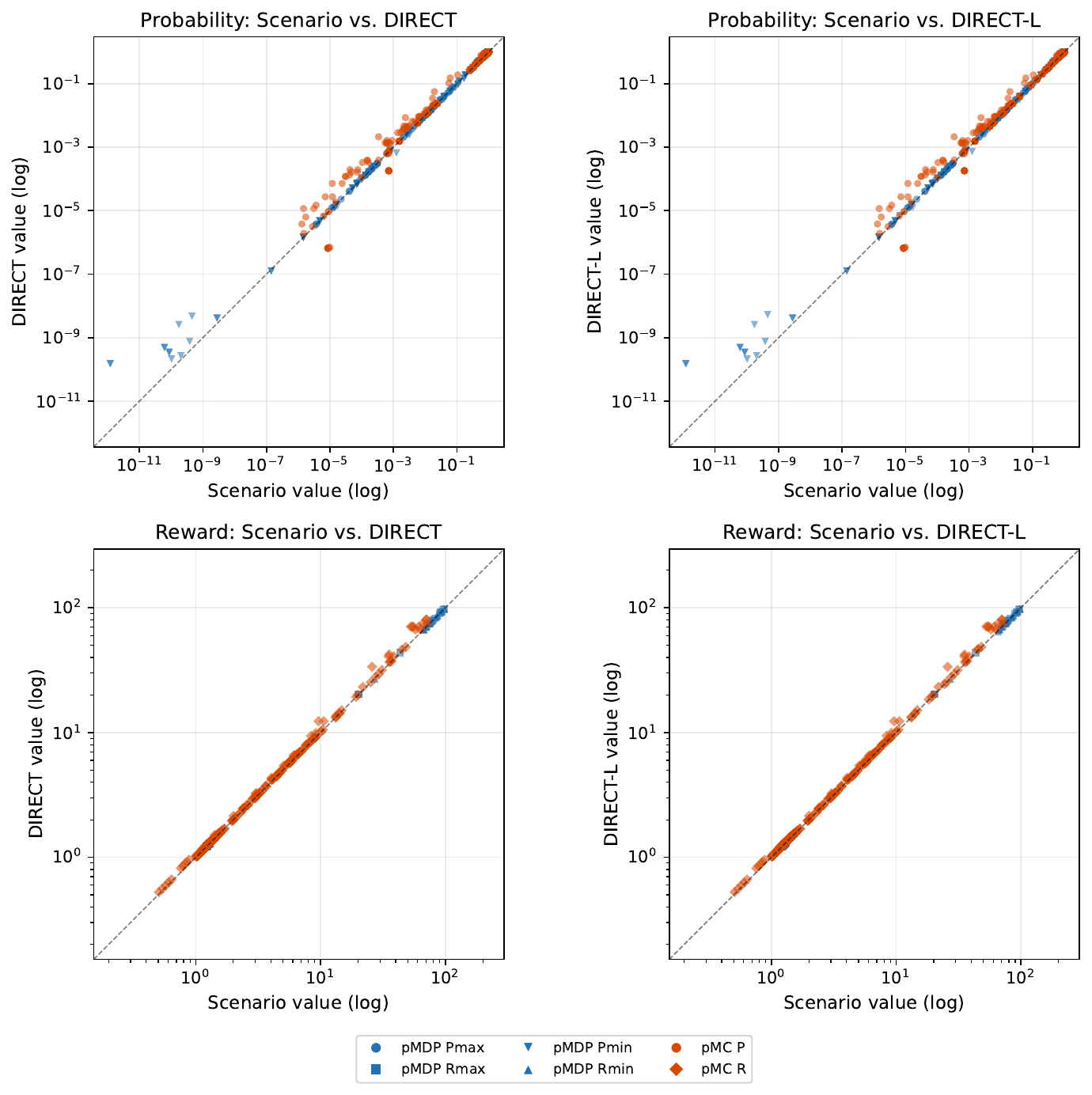}
    }
    \caption{Scenario values ($x$-axis) versus DIRECT and DIRECT-L values ($y$-axis) on common successful benchmarks, with logarithmic axes and zero-valued points omitted. Top plots: probability properties (995/1116 benchmarks for DIRECT/DIRECT-L, respectively); bottom plots: reward properties (344/351)}
    \label{fig:directScatter}
\end{figure}

Figure~\ref{fig:directScatter} compares the optimal values found by the PAC scenario approach and by DIRECT/DIRECT-L on the benchmarks where both methods succeed, split by property type.
The two columns compare Scenario against DIRECT and DIRECT-L, respectively, while the two rows separate probability and reward properties so that the two value scales are not visually mixed.
Each point represents a benchmark, with pMDP instances in blue and pMC instances in orange.
The legend separates six benchmark classes and is arranged with probability entries in the first row and reward entries in the second row; 
its columns align pMDP max, pMDP min, and pMC properties.
For the pMC cases, all properties in the current benchmark set are optimized by maximizing over the parameter domain; 
hence the pMC markers in Figure~\ref{fig:directScatter} should be read as parameter-optimization results, not as scheduler max/min choices.
For the pMDP benchmarks, the optimization direction used in DIRECT agrees with the one of the PRCTL property: $\min$ for $\lQ[\min]$ and $\max$ for $\lQ[\max]$.
Both axes use a logarithmic scale to better visualize the distribution across the wide range of positive values; 
benchmarks whose reported value is $0$ are not included in this log-scale comparison.

The points cluster tightly around the $y = x$ dashed line, showing that DIRECT finds values very close to those obtained via the PAC scenario approach on their common successful instances.
For instance, among the common successful pMDP probability-minimization instances, 63 have both the scenario value and the DIRECT/DIRECT-L value at most $10^{-3}$.
These points are visually compressed by the logarithmic scale, but they still lie close to the diagonal, indicating agreement on near-zero minima.
The clusters of blue triangles in the bottom-left corner of the upper plots correspond to pMDP probability-minimization benchmarks for which both optimizers find very small positive probabilities, below the threshold of $10^{-8}$ we set as termination criterion for DIRECT.
The comparison given in Figure~\ref{fig:directScatter} is empirical: 
it compares two computed procedures rather than the unknown true optimum, and therefore should not be read as a direct verification of Theorem~\ref{thm:DIRECToptimalityGap}.

Quantitatively, for probability properties, the mean absolute difference $\abs{f_{\mathrm{scenario}} - f_{\mathrm{DIRECT}}}$ is $0.0046$ across the 995 common benchmarks for DIRECT and $0.0042$ across the 1116 common benchmarks for DIRECT-L.
For reward properties, the corresponding mean absolute differences are $0.451$ across the 344 common benchmarks for DIRECT and $0.449$ across the 351 common benchmarks for DIRECT-L.
Since rewards are not a priori bounded as the probability properties, the absolute differences should indeed be interpreted together with the scale of the returned reward values:
the mean relative difference with respect to the scenario value is $1.93\%$ for DIRECT and $1.88\%$ for DIRECT-L, while the corresponding median relative differences are $0.20\%$ and $0.21\%$.
When we compare the optimality gap with the margin $\margin$ reported by the PAC scenario (which, by Theorem~\ref{thm:DIRECTPACoptimalityGap}, bounds the additional approximation error), we find that $86.5\%$ of the DIRECT values and $87.7\%$ of the DIRECT-L values fall within $\margin$ of the scenario value.
This empirical behavior is compatible with the form of the bound in Theorem~\ref{thm:DIRECTPACoptimalityGap}, but the experiment does not estimate the Lipschitz constant $\lipcons_{\ApproxFunOfProperty{f}}$ or the true optimum $f_{\lsf}(\parameters^{*})$.
Thus the result should be interpreted as evidence that the two optimization routes agree closely in practice, not as a numerical proof that the theoretical upper bound is tight.

We also compare which procedure returns the better objective value on the common successful instances, interpreting larger values as better for MAX properties and smaller values as better for MIN properties.
Standard DIRECT returns a strictly better value than the scenario optimizer on 1127 of its 1339 common successful benchmarks, while the scenario optimizer is better on 59 and the remaining 153 are tied up to numerical tolerance.
For DIRECT-L, the corresponding numbers are 1253 better values for DIRECT-L, 61 better values for the scenario optimizer, and 153 ties among 1467 common successful benchmarks.
These numbers indicate that DIRECT is not merely reproducing the scenario optimizer: 
on the benchmarks where both complete, it more often improves the returned objective value.
At the same time, the improvement is usually small relative to the PAC certificate, as only 74 DIRECT-favoring comparisons and 21 scenario-favoring comparisons exceed the reported margin $\margin$.
Thus DIRECT should be viewed as a complementary optimization route rather than as a replacement for the scenario approach: 
it solves fewer instances overall, but it can cheaply refine the objective value on many instances where it succeeds.

\subsection{Solving time comparison}

\begin{figure}[tb]
    \centering
    \resizebox{0.68\linewidth}{!}{
        \includegraphics{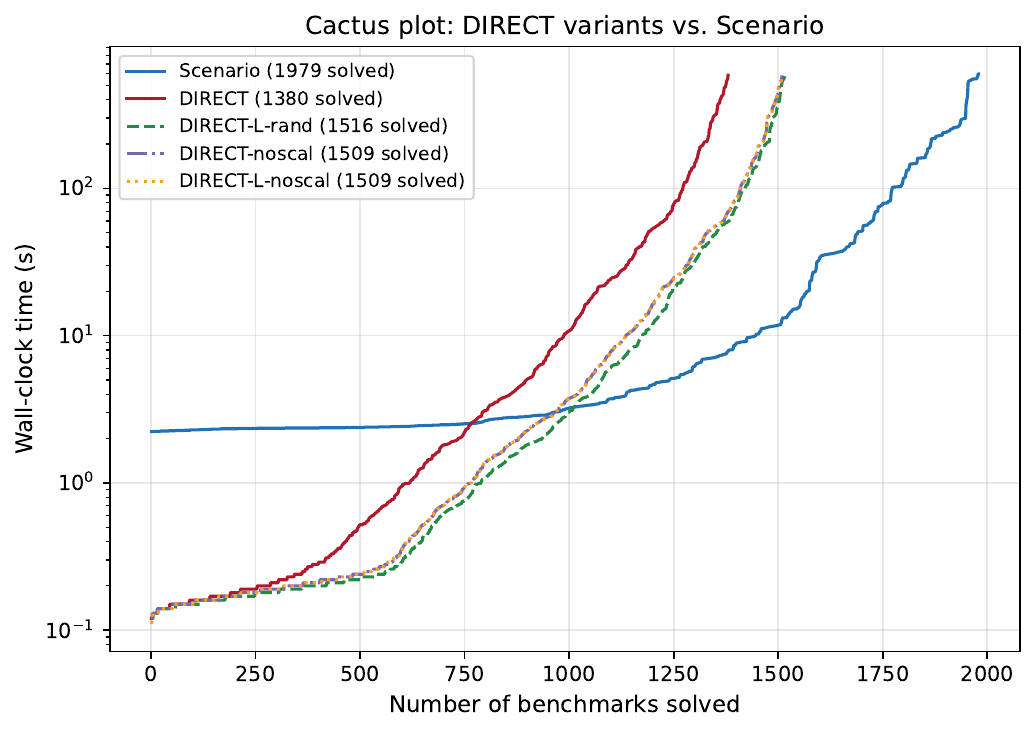}
    }
    \caption{Cactus plot comparing the solving time of \pacpma (Scenario), DIRECT, DIRECT-L-rand, DIRECT-noscal, and DIRECT-L-noscal on all 2997 benchmarks}
    \label{fig:directCactus}
\end{figure}

Figure~\ref{fig:directCactus} shows the cactus plot for the scenario approach, standard DIRECT, and the three best-performing DIRECT variants.
While \pacpma solves the most benchmarks (1979), the DIRECT variants are often faster on the benchmarks they can solve:
the majority of solved instances for the plotted DIRECT variants complete in under 10 seconds, whereas \pacpma shows a wider spread of solving times.
About two seconds of the total running time of \pacpma comes from the start-up time of \matlab when invoked to solve the LP problem after the sampled points have been evaluated.
Among the DIRECT variants, DIRECT-L-rand solves the most benchmarks among the plotted variants, with 1516 solved instances.
DIRECT-noscal and DIRECT-L-noscal solve the same 1509 benchmarks and their cactus curves almost overlap:
on these benchmarks they return identical objective values, and their median absolute running-time difference is only $0.01$ seconds.
Both variants therefore achieve essentially the same balance between the number of solved instances and the solving time.
The cactus plot also suggests that the DIRECT variants with local refinement or without scaling can be faster than standard DIRECT on the subset of benchmarks they solve.
The long tail of the DIRECT curves is expected: 
a difficult instance may require many hyperrectangle refinements before the stopping tolerance is reached, and each refinement triggers one or more objective evaluations.
Moreover, some benchmark families contain instantiated models whose individual evaluations are already expensive, so even a moderate number of DIRECT iterations can lead to long running times.
Together with the value-direction comparison above, this shows the intended role of DIRECT in the tool chain: 
the scenario approach remains the more robust default when the goal is to obtain a certified PAC approximation, whereas DIRECT is useful when the goal is to optimize the property value quickly once an evaluable objective is available.

\section{Discussion and Conclusion}
\label{sec:conclusion}

In this paper, we presented how the PAC-based approximation framework developed for parametric discrete-time Markov chains in~\cite{DBLP:conf/atva/LiuTHXZ23} can be extended to study several properties of parametric Markov decision processes.
We showed how it can be combined with statistical model checking, allowing us to analyze properties of black-box parametric Markov models and synthesize the output profiles with the desired PAC-guarantees.
Furthermore, we integrated the DIRECT algorithm for derivative-free global optimization over the parameter space and established conditional bounds on the optimality gap in terms of the Lipschitz constant and the partition diameter, both for exact and PAC-approximated objective functions.
The experimental results presented in this paper focus on the DIRECT integration: on 2997 benchmarks, the DIRECT variants solve fewer instances than the scenario optimizer, but they are often faster on the instances they solve and, on common successful benchmarks, more often return a better objective value.
The fact that more than $86\%$ of the DIRECT and DIRECT-L computed values remain within the PAC margin of the corresponding scenario value is consistent with the expectation that both optimization routes approximate the same underlying optimum; we report this as an empirical consistency check, showing that the observed DIRECT improvements are usually small relative to the PAC certificate rather than as an independent accuracy guarantee.

The framework developed in this paper has several limitations that are important for interpreting the results.
First, the PAC guarantees are distributional with respect to the chosen sampling measure over the parameter domain; 
they are not uniform guarantees over all parameter valuations unless an additional exhaustive or certified verification step is performed.
Second, the DIRECT optimality bounds require a Lipschitz constant, or a certified upper bound on it, and the PAC version of the bound applies conditionally to points that belong to the PAC-good set.
Third, the current experiments quantify agreement between scenario-based optimization and DIRECT on common successful instances, but they do not compute the unknown true optimum for every benchmark.
To compute such an optimum, an analytical solution of the exact symbolic rational-function would be required, incurring in the lengthy process of generating and analyzing it.

The framework developed in this paper can be extended in multiple directions.

\subsubsection*{Other approximation templates.}
When formulating the LP problem~\eqref{eq:PACLPpolynomialApproximation} for the PAC approximation, we used a (low-degree) polynomial template to get the approximation.
As such, a template allows us to easily compute properties of the resulting approximation function, such as finding its minimum and maximum values or plotting it.
We can easily change the formulation of the LP problem~\eqref{eq:PACLPpolynomialApproximation} to use non-polynomial templates, involving e.g.\@ trigonometric and transcendental operators:
in fact, these functions would not appear explicitly in the constraints of the LP problem~\eqref{eq:PACLPpolynomialApproximation}, since only their value on the sampled parameters is used.
Recall from the example just below the LP problem~\eqref{eq:PACLPpolynomialApproximation} that for the cloud-service pMDP shown in Figure~\ref{fig:pmdp}, the quadratic polynomial template, and the sampled point $(p=0.7, q=0.4)$, the corresponding constraint
\[
    -\margin \leq f_{\lsf}(\parameters_{i}) - \coefficients \cdot (1, \parameters_{i}, \dotsc, \parameters^{d}_{i})^{T} \leq \margin
\]
is instantiated as
\[
    -\margin \leq 0.397 - (c_{0} + 0.7 \cdot c_{11} + 0.4 \cdot c_{12} + 0.49 \cdot c_{21} + 0.28 \cdot c_{22} + 0.16 \cdot c_{23}) \leq \margin.
\]
If we would use as template $c_{0} \cdot q \cdot e^{p} + c_{1} \cdot \sin (p \cdot \pi)$, then the corresponding constraint would be instantiated as
\[
    -\margin \leq 0.397 - c_{0} \cdot 0.4 \cdot e^{0.7} + c_{1} \cdot \sin (0.7 \cdot \pi) \leq \margin,
\]
which is a linear constraint as intended.

\subsubsection*{Synthesizing multiple functions.}
As we have seen in Section~\ref{ssed:pMDRMspecificPRCTLanalysis}, when dealing with pMDPs the domain of the parameters can be partitioned into multiple subspaces, each one associated with a different optimizing policy $\policy$ and function $f_{\lsf}^{\policy}$.
The current formulation of the approximation problem~\eqref{eq:PACLPpolynomialApproximation} does not take into consideration this event, since it aims to synthesize a single (polynomial) function corresponding to the optimal values for satisfying $\lsf$ on the whole domain of the parameters where samples are taken.
Clearly, by synthesizing a single function, the computed margin $\margin$ might be larger than by synthesizing multiple functions, one for each subregion.

Extending our approach to work with multiple regions and functions is rather easy: 
we could just apply our framework and synthesize a PAC-guaranteed approximation function for each subregion independently.
The theoretical framework would remain essentially unchanged, except for the fact that the PAC-guarantees would be given for each subregion instead of for the whole domain of the parameters;
new results could be provided, to combine together the PAC-guarantees obtained for the subregions and infer appropriate PAC-guarantees for the whole domain of the parameters.
The implementation could take advantage of different strategies:
the parameters can be sampled, the corresponding instantiated MDPs are evaluated, and, according to the corresponding computed policy, the subregions are determined;
or
the subregions are first determined by e.g.\@ a bisection procedure and then the current implementation is applied to each subregion.
In both cases, the overall number of samples needs to be increased, so that in each subregion there are enough samples to ensure the statistical guarantees according to Theorem~\ref{thm:PACnumberOfSamples};
the parameters used to obtain the boundaries of the subregions should not contribute to reach $\nsamples$ (since they are not randomly sampled), but they can still be used in the problem~\eqref{eq:PACLPpolynomialApproximation} to provide additional constraints.

\subsubsection*{Artifact availability.}
The artifact accompanying this paper, including the instructions for running the tools, the benchmarks, and the scripts used to generate the plots, is available on Zenodo at \url{https://doi.org/10.5281/zenodo.21223130}.

\subsubsection*{Acknowledgements.}
We thank Jianting Yang (CNRS@CREATE, 1 Create Way, \#08-01 CREATE Tower, Singapore 138602) for improving a  proof of the paper. 
Work supported in part by 
the Beijing Natural Science Foundation Project No.\@ IS26039,
the CAS Project for Young Scientists in Basic Research under grant No.\@ YSBR-040,
NSFC under grant No.\@ 61836005,
the CAS Pioneer Hundred Talents Program,
and
the ISCAS New Cultivation Project ISCAS-PYFX-202201.
D.N.\@ Jansen is supported by Beijing Natural Science Foundation Project No.\@ IS25071.
\newline\protect\includegraphics[height=8pt]{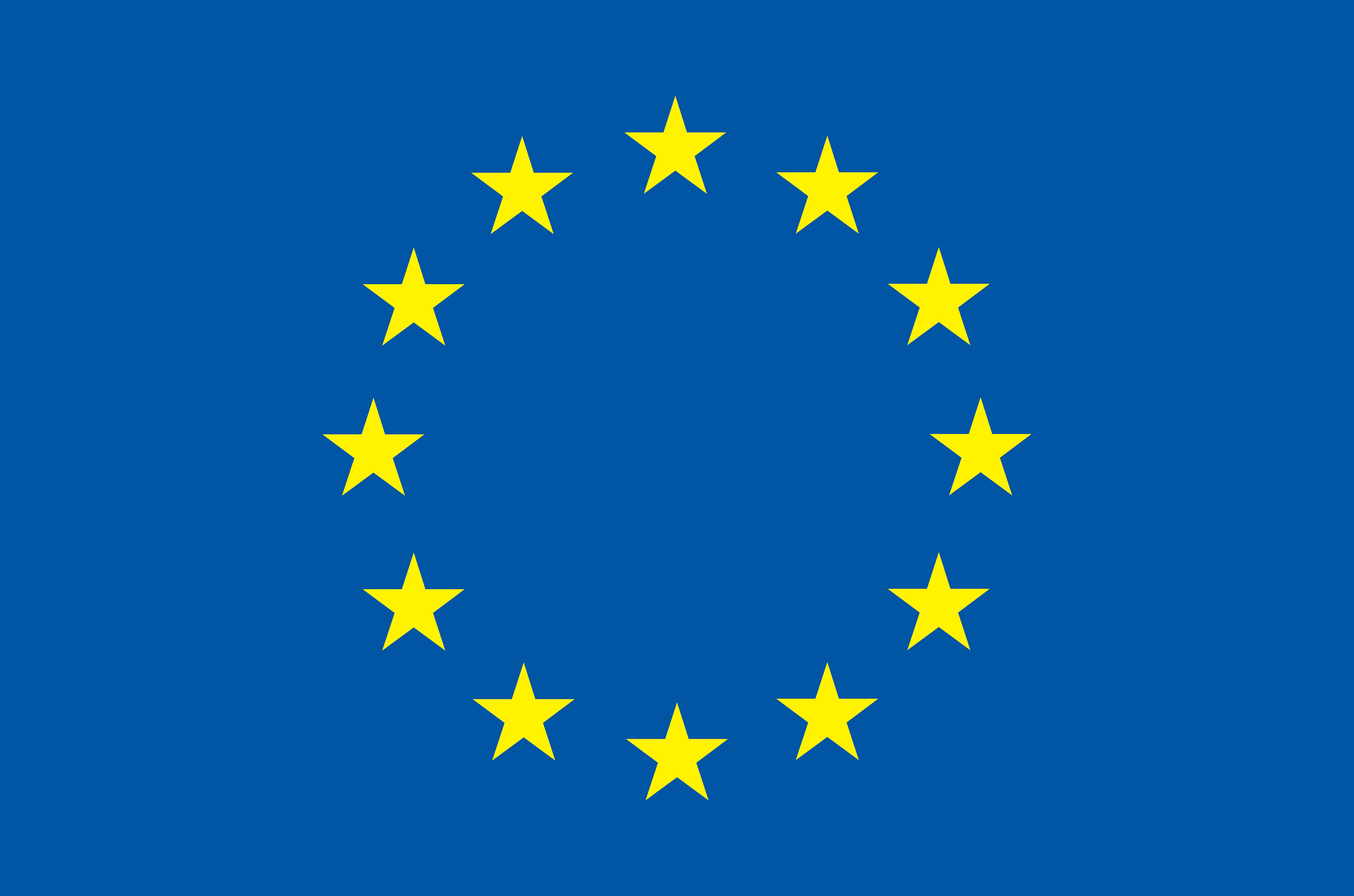} This project is part of the European Union's Horizon 2020 research and innovation programme under the Marie Sk\l{}odowska-Curie grant no.\@ 101008233.

\bibliographystyle{elsarticle-num}
\bibliography{biblio}

\end{document}